\renewcommand*{\backref}[1]{\ifx#1\relax \else Page #1 \fi}
\renewcommand*{\backrefalt}[4]{%
  \ifcase #1 \footnotesize{(Not cited.)}%
  \or        \footnotesize{(Cited on page~#2.)}%
  \else      \footnotesize{(Cited on pages~#2.)}%
  \fi
}
 \newcommand{\kb}[1]{{\color{magenta}\bf[KB: #1]}}
 \newcommand{\ye}[1]{{\color{blue}\bf[YE: #1]}}
\newcommand{\lv}{\left\lVert}
\newcommand{\rv}{\right\rVert}
\newcommand{\mb}{\mathbb}
\newcommand{\mc}{\mathcal}
\newcommand{\trace}{\text{trace}}
\newcommand{\leftlangle}{\left\langle}
\newcommand{\rightrangle}{\right\rangle}
\newcommand{\KL}{\textsc{KL}}
\newtheorem{theorem}{Theorem}
\newtheorem{assumption}{Assumption}
\newtheorem{lemma}{Lemma}
\newtheorem{corollary}{Corollary}
\newtheorem{definition}{Definition}
\newtheorem{proposition}{Proposition}
\newtheorem{conjecture}{Conjecture}
\theoremstyle{definition}
\newtheorem{assump}{Assumption}
\newenvironment{myassump}[2][]
  {\begin{assump}[#1]}
  {\end{assump}}
\newtheorem{remark}{Remark}
\title{Regularized Stein Variational Gradient Flow} 
\author{Ye He $^1$}
\address{$^1$ School of Mathematics, Georgia Institute of Technology, 686 Cherry Street, Atlanta, GA, 30332. Email: yhe367@gatech.edu}
\author{Krishnakumar Balasubramanian $^2$}
\address{$^2$ Department of Statistics, University of California, Davis, 399 Crocker Lane, One Shields Avenue
Davis, CA 95616. Email: kbala@ucdavis.edu}
\author{Bharath K. Sriperumbudur$^3$}
\address{$^3$Department of Statistics, Pennsylvania State University, 314,  Thomas Building, University Park, PA 16802. Email: bks18@psu.edu}
\author{Jianfeng Lu$^4$}
\address{$^4$Mathematics Department, Duke University, Box 90320, 120 Science Dr., Durham, NC 27708-0320. Email: jianfeng@math.duke.edu}
\date{}
\begin{document}
\maketitle
\begin{abstract}
The Stein Variational Gradient Descent (SVGD) algorithm is \textcolor{black}{a} deterministic particle method for sampling. However, a mean-field analysis reveals that the gradient flow corresponding to the SVGD algorithm (i.e., the Stein Variational Gradient Flow) only provides a constant-order approximation to the Wasserstein Gradient Flow corresponding to the KL-divergence minimization. In this work, we propose the Regularized Stein Variational Gradient Flow, which interpolates between the Stein Variational Gradient Flow and the Wasserstein Gradient Flow. We establish various theoretical properties of the  Regularized Stein Variational Gradient Flow (and its time-discretization) including convergence to equilibrium, existence and uniqueness of weak solutions, and stability of the solutions. We provide preliminary numerical evidence of the improved performance offered by the regularization. 
\end{abstract}

\vspace{0.07in}
\noindent \textbf{MSC Subject Classification.}  35Q62, 35Q68, 65C60, 82C22.
\vspace{0.07in}

\noindent \textbf{Keywords.} Wasserstein gradient flow, Stein variational gradient descent, particle-based sampling, convergence to equilibrium, mean-field analysis, reproducing kernel Hilbert space, regularization.
\vspace{0.07in}


\section{Introduction}\label{sec:introduction}

Given a potential function $V:\mathbb{R}^d \to \mathbb{R}$, the sampling problem involves generating samples from the density
\begin{align}\label{eq:target}
\pi(x) \coloneqq Z^{-1}e^{-V(x)},~\qquad\text{with}\qquad Z\coloneqq \int e^{-V(x)}dx
\end{align}
being the normalization constant, which is typically assumed to be unknown or hard to compute. The task of sampling arises in several fields of applied mathematics, including Bayesian statistics and machine learning in the context of numerical integration. There are two widely-used approaches for sampling: (i) diffusion-based \emph{randomized} algorithms, which are based on discretizations of certain diffusion processes, and (ii) particle-based \emph{deterministic} algorithms, which are discretizations of certain \emph{approximate} gradient flows. A central idea connecting the two approaches is the seminal work by~\cite{jordan1998variational} which provided a variational interpretation of the Langevin diffusion as the Wasserstein Gradient Flow (WGF),
\begin{align}\label{eq:W2GFofLD}
\partial_t \mu_t = \nabla \cdot \left( \mu_t ~\nabla_{W_2}F(\mu_t) \right) = \nabla \cdot \left( \mu_t ~\nabla \log \frac{\mu_t}{\pi} \right),
\end{align}
where the term $\nabla_{W_2}F(\mu_t)= \nabla \log \frac{\mu_t}{\pi} $ could be interpreted as the Wasserstein gradient\footnote{See, for example,~\cite{ambrosio2005gradient, santambrogio2017euclidean} for the exact definition.} of the relative entropy functional (also called as the Kullback–Leibler divergence), defined by
\begin{align}
F(\mu_t) = \text{KL}(\mu_t|\pi)\coloneqq\int_{\mb{R}^d} \log \frac{\mu_t(x)}{\pi(x)} \mu_t(x) dx,\nonumber
\end{align}
evaluated at $\mu_t$. This leads to the idea that sampling could be viewed as \emph{optimization on the space of densities/measures}, a viewpoint that has provided a deeper understanding of the sampling problem~\cite{wibisono2018sampling, trillos2020bayesian}. 

There are several merits and disadvantages to both the randomized and deterministic discretizations of the (approximate) WGF. First, note that obtaining exact space-time discretization of the WGF in~\eqref{eq:W2GFofLD} is not possible. Indeed, due to the presence of the diffusion term, when initialized with a $N$-particle based empirical measure, the particles do not remain as particles for any time $t >0$. Hence, on the one hand, randomized discretizations like the Langevin Monte Carlo algorithm, are used as implementable space-time discretizations of the WGF. On the other hand, motivated by applications where the randomness in the discretization is undesirable, in the applied mathematics literature, other discretizations of \emph{approximate} WGF were developed. Such methods are predominantly based on using mollifiers and we refer the reader to~\cite{raviart1985analysis, russo1990deterministic,degond1990deterministic, craig2016blob, carrillo2019blob} for a partial list and to~\cite{chertock2017practical}, for a comprehensive overview.  

Recently, in the machine learning community, the Stein Variational Gradient Descent~\cite{liu2016stein,liu2017stein} was proposed as another deterministic discretization of approximate WGF, and has gathered significant attention due to applications in reinforcement learning~\cite{liu2017steinb}, graphical modeling~\cite{wang2018stein}, measure quantization~\cite{xu2022accurate}, and other fields of machine learning and applied mathematics \cite{wang2019stein, chewi2020svgd, chewi2020svgd, korba2020non}.  Due to the use of the reproducing kernels, the Stein Variational Gradient Descent (SVGD) algorithm provides a space-time discretization of the following \emph{approximate}  Wasserstein Gradient Flow (which we refer to as the Stein Variational Gradient Flow (SVGF) for simplicity)
\begin{align}\label{eq:SVGDGFofLD}
\partial_t \mu_t = \nabla \cdot \left( \mu_t ~ \mathcal{T}_{k,\mu_t}\nabla \log \frac{\mu_t}{\pi} \right),
\end{align}
where $\mathcal{T}_{k,\mu}: L^d_2(\mu) \to  L^d_2(\mu)$\footnote{\textcolor{black}{Defined in Section \ref{sec:notation}.}} is the integral operator defined as $\mathcal{T}_{k,\mu}f(x) = \int k(x,y) f(y) \mu(y) dy $ for \textcolor{black}{any} function $f\in L^d_2(\mu)$ , and for a kernel $k:\mathbb{R}^d\times\mathbb{R}^d \to \mathbb{R}$; see, for example~\cite{lu2019scaling}. Hence, SVGD (which is based on the SVGF), in this context, while being deterministic only provides a discretization of a \emph{constant-order approximation} to the Wasserstein Gradient Flow due to the presence of the kernel integral operator. Indeed, if $\text{supp}(\mu_t)=\mathbb{R}^d$ and $k$ is \textcolor{black}{a} bounded continuous translation invariant \emph{characteristic} kernel~\cite{sriperumbudur2010hilbert} on $\mathbb{R}^d$ (e.g., Gaussian, Laplacian kernels), then 
\begin{align*}
\Vert \mathcal{T}_{k,\mu_t}-I\Vert_{\text{op}}&=\sup\{\Vert \mathcal{T}_{k,\mu_t}f-f\Vert_{L^d_2(\mu_t)}:\Vert f\Vert_{L^d_2(\mu_t)}=1\}\ge \Vert \mathcal{T}_{k,\mu_t}\mathbf{1}-\mathbf{1}\Vert_{L^d_2(\mu_t)} \\ 
&\ge \Vert 1-\smallint k(\cdot,x)\mu_t(x)\,dx\Vert_{L_2(\mu_t)}>0,
\end{align*}
where $\mathbf{1}=(1,{\ldots},1)^\intercal \in \mb{R}^d$. This shows that the order of the error is crucially dependent on the choice of the kernel $k$.

To overcome the above issue with the SVGF, in this work, we propose the Regularized Stein Variational Gradient Flow (R-SVGF). To motivate the proposed flow, we first note that the Wasserstein gradient $\nabla \log (\mu_t/\pi)$ lives in $L^d_2(\mu_t)$, while the kernelized Wasserstein gradient $\mathcal{T}_{k,\mu_t}\nabla \log(\mu_t/\pi)$ \emph{morally} lives in the reproducing kernel Hilbert space\footnote{\textcolor{black}{Defined in Section \ref{sec:RKHS}.}} $\mathcal{H}^d_k \subset  L^d_2(\mu_t)$. If $\nabla \log (\mu_t/\pi)\in \overline{\text{Ran}(\mathcal{T}_{k,\mu_t})}$, then it is easy to verify that 
$$
\Vert ((1-\nu)\mathcal{T}_{k,\mu_t}+\nu I)^{-1}\mathcal{T}_{k,\mu_t}\nabla \log(\mu_t/\pi)-\nabla \log(\mu_t/\pi)\Vert_{L_2^d(\mu_t)}\rightarrow 0,\quad\text{as}\quad \nu\rightarrow 0. 
$$
Additionally, if 
$\nabla \log (\mu_t/\pi) $ is sufficiently smooth, i.e., there exists $\gamma\in \left(0,\frac{1}{2}\right]$ such that $\nabla \log (\mu_t/\pi) = \mathcal{T}^\gamma_{k,\mu_t} h$, for some $h\in L_2^d(\mu_t)$ (see, for example,~\cite{cucker2007learning}), then 
$$
\Vert ((1-\nu)\mathcal{T}_{k,\mu_t}+\nu I)^{-1}\mathcal{T}_{k,\mu_t}\nabla \log(\mu_t/\pi)-\nabla \log(\mu_t/\pi)\Vert_{L_2^d(\mu_t)}=O(\nu^{2\gamma}), \quad\text{as}\quad \nu\rightarrow 0. 
$$ 
In other words, $ ((1-\nu)\mathcal{T}_{k,\mu_t}+\nu I)^{-1}\mathcal{T}_{k,\mu_t}\nabla \log(\mu_t/\pi)$ is a good approximation to $\nabla \log(\mu_t/\pi)$ for small $\nu$. With this motivation, we propose the following R-SVGF given by
\begin{align}\label{eq:RSVGDGFofLD}
\partial_t \mu_t  = \nabla \cdot \left( \mu_t ~ \left( (1-\nu)\mathcal{T}_{k,\mu_t}+\nu I \right)^{-1} \mathcal{T}_{k,\mu_t}\left(\nabla \log \frac{\mu_t}{\pi} \right) \right),
\end{align}
for some regularization parameter $\nu \in (0,1]$, where 
R-SVGF arbitrarily approximates the WGF as $\nu\rightarrow 0$. It is important to note that in the case of $\gamma=1/2$, we have $\nabla \log (\mu_t/\pi) \in \mathcal{H}_k^d$, yet,~\eqref{eq:SVGDGFofLD} suffers from the drawback of providing only a constant-order approximation to \eqref{eq:W2GFofLD}. 

\subsection{Summary of Contributions} Our contributions in this work are as follows:
\begin{enumerate}
    \item We propose the Regularized SVGF (R-SVGF) that interpolates between the Wasserstein Gradient Flow and the SVGF. The advantage of the proposed flow is that one could obtain an implementable space-time discretization as long as the regularization parameter is bounded away from zero. The main intuition behind the proposed flow is to pick an appropriately small regularization parameter so that we could arbitrarily approximate the WGF (Theorems~\ref{thm:relation to Langevin FI} and~\ref{thm:relation to Langevin}).
    \item For the R-SVGF, we  provide rates of convergence to the equilibrium density in two cases: (i) in the Fisher Information metric under no further assumptions on the target $\pi \in \mathcal{P}(\mathbb{R}^d)$  (Theorem~\ref{thm:Fisher convergence continuous}) and (ii) in the KL-divergence metric under an LSI assumption on the target (Theorem~\ref{thm:decay of KL under LSI}). We also establish similar results for the time-discretized R-SVGF (Theorems~\ref{thm:decay of Fisher along pop limit} and~\ref{thm:decay of KL along pop limit log-Sobolev version}). 
    \item We characterize the existence and uniqueness (Theorem~\ref{thm:regularized mf PDE unique and existence}), and stability (Theorem~\ref{thm:stability Wp}) of solutions to the R-SVGF. 
    \item We provide preliminary numerical experiments demonstrating the advantage of the space-time discretization of the R-SVGF, which we call the Regularized Stein Variational Gradient Descent (R-SVGD) algorithm, over the standard SVGD algorithm. 
\end{enumerate}

\subsection{Organization} The rest of the paper is organized as follows. In Section~\ref{sec:notation}, we introduce the notations used in the rest of the paper. In Section~\ref{sec:RKHS}, we provide the preliminaries on reproducing kernel Hilbert spaces required for our work. In Section~\ref{sec:regularized SVGD}, we introduce the R-SVGF, along with the notion of regularized Stein-Fisher information, required for our analysis.  Due to the technical nature of the proofs, we postpone the results on existence and uniqueness of the R-SVGF, and related stability results respectively to Sections~\ref{sec:exist} and~\ref{sec:stable}. In Section~\ref{sec:Convergence results}, we provide convergence results on the R-SVGF and its time-discretized version. We conclude in Section~\ref{sec:practical} with a space-time discretization which provides a practically implementable algorithm, and provide preliminary empirical results.

\subsection{Notations}\label{sec:notation} 
We use the following notations throughout this work:  
\begin{itemize}
\item For a matrix, $\|\cdot \|_2$ denotes the matrix 2-norm (spectral norm) and $\| \cdot \|_{HS}$ denotes the Hilbert-Schmidt norm which is defined as $\|A \|_{HS}^2=\sum_{i,j=1}^d |a_{ij}|^2$ for any matrix $A=(a_{ij})_{i,j\in [d]}$.
\item The term $id$ denotes the $d\times d$ identity matrix. $I_d$ corresponds to the identity operator in the RKHS. $I$ corresponds to the identity operator in $L_2(\mu)$.
\item $\left(L_\infty(\mb{R}^d),\lv \cdot\rv_{L_\infty(\mb{R}^d)}\right)$ denotes the space of all essentially bounded measurable functions on $\mb{R}^d$ with $\lv f \rv_{L_\infty(\mb{R}^d)}:= \inf \{ C: |f(x)|\le C \  \text{for almost every }x\in \mb{R}^d \} $ for any $f\in L_\infty(\mb{R}^d)$.
\item $\mc{C}(\mb{R}^d;\mb{R}^d)$ is the space of all $\mb{R}^d$-valued continuous functions on $\mb{R}^d$.
\item $\mc{C}_0^\infty ([0,\infty)\times \mb{R}^d)$ is the space of all $\mb{R}$-valued measurable functions on $[0,\infty)\times \mb{R}^d$ that vanish at infinity, i.e., for any $f\in \mc{C}_0^\infty ([0,\infty)\times \mb{R}^d)$, $f(t,x)\to 0$ as $t\to \infty$ and $f(t,x)\to 0$ as $x\to\infty$.
\item For any function space $\mc{H}$ on $\mb{R}^d$, $\mc{C}([0,T];\mc{H})$ is the space of functions $f$ such that for any fixed $t\in [0,T]$, $f(t,\cdot)\in \mc{H}$ and for any fixed $x\in \mb{R}^d$, $f(\cdot,x)$ is a continuous function on $[0,T]$. $\mc{C}^1([0,T];\mc{H})$ is the space of functions $f$ such that for any fixed $t\in [0,T]$, $f(t,\cdot)\in \mc{H}$ and for any fixed $x\in \mb{R}^d$, $f(\cdot,x)$ is a continuous function with a continuous first order derivative on $[0,T]$.
\item \textcolor{black}{Let $\mc{P}(\mb{R}^d)$ denote the space of all probability densities (with respect to the Lebesgue measure) on $\mb{R}^d$ that are twice continuously differentiable with full supports on $\mb{R}^d$ and finite second moments.}
\item \textcolor{black}{For any $\mu_1,\mu_2\in\mc{P}(\mb{R}^d)$ and any $p\ge 1$, $\mc{W}_p(\mu_1,\mu_2)$ denotes the Wasserstein-$p$ distance between $\mu_1$ and $\mu_2$ defined as $\mc{W}_p(\mu_1,\mu_2)\coloneqq \inf_{X\sim \mu_1,Y\sim \mu_2} \mb{E}[|X-Y|^p]^{\frac{1}{p}}$.}
\item For any $\mu\in \mc{P}(\mb{R}^d)$, $\left(L_2(\mu), \lv \cdot \rv_{L_2(\mu)} \right)$ is the space of all $\mu$-square integrable measurable function on $\mb{R}^d$ with $\lv f\rv_{L_2(\mu)}^2:=\int_{\mb{R}^d} |f(x)|^2 \mu(x) dx$. 
\item \textcolor{black}{For any function space $\mathcal{H}$, we say $f\in \mc{H}^d$ if $f=[f_1,\cdots,f_d]^\intercal$ such that $f_i\in \mc{H}$ for all $i\in [d]$. If the function space $(\mc{H},\lv \cdot \rv_{\mc{H}})$ is further a Hilbert space, $(\mc{H}^d, \lv\cdot\rv_{\mc{H}^d})$ denotes the Hilbert space of all vector valued functions whose entries are in $(\mc{H},\lv\cdot\rv_{\mc{H}})$ with $\lv f \rv_{\mc{H}^d}^2\coloneqq \sum_{i=1}^d \lv f_i \rv_{\mc{H}}^2$ for any $f=[f_1,\cdots,f_d]^\intercal \in \mc{H}^d$.}
\item Let $\left(\mathcal{H},\lv \cdot \rv_{\mc{H}}\right)$ and $\left(\mathcal{G},\lv \cdot \rv_{\mc{G}}\right)$ be two function spaces. For an operator $A:\mathcal{H} \to \mathcal{G} $, we denote the adjoint operator of $A$ by $A^*$. We denote the operator norm by $\| A\|_{\mathcal{H} \to \mathcal{G}}$, which is defined as $\lv A \rv_{\mc{H}\to \mc{G}}:=\sup_{\lv u \rv_{\mc{H}}\le 1} \lv A u \rv_{\mc{G}}$. When we don't emphasize the spaces, we denote the operator norm of $A$ by $\lv A \rv_{\text{op}}$ for simplicity.
\item Let $\left( \mathcal{H},\|\cdot \|_{\mathcal{H}} \right)$ and $\left( \mathcal{G},\|\cdot \|_{\mathcal{G}} \right)$ be two Hilbert spaces. For an operator $A:\mathcal{H}\to \mathcal{G}$, we denote the Hilbert-Schmidt norm by $\| A \|_{HS}$ which is defined as $\| A \|_{HS}^2:=\sum_{i\in I} \| A e_i\|_{{\mathcal{G}}}^2 $ where $\{e_i\}_{i\in I}$ is an orthonormal basis of $\mathcal{H}$. We denote the nuclear norm by $\|A\|_{nuc}$ which is defined as $\lv A \rv_{nuc}:=\sum_{i\in I} \langle (A ^* A)^{\frac{1}{2}}e_i, e_i \rangle _\mathcal{H} $ where $\{e_i\}_{i\in I}$ is an orthonormal basis of $\mathcal{H}$.
\item For a smooth function $f:\mb{R}^d\times\mb{R}^d\to \mb{R}$, $\nabla_1 f $ denotes the gradient of $f$ in the first variable and $\nabla_2 f $ denotes the gradient of $f$ in the second variable.
\item For a map $\phi: \mathbb{R}^d \to \mathbb{R}^d$, $\phi_i$ denotes the $i$-th component of the function value and $\nabla \phi$ denotes the Jacobian, i.e., $(\nabla \phi)_{ij} = \partial_j\phi_i $ for all $i,j\in [d]$.
\item $T_{\#}\rho$ represents the push-forward of the density $\rho$ under a map $T$.
\item $\langle \cdot, \cdot \rangle_{\mc{H}}$ denotes inner-product in the Hilbert space $\mc{H}$. $\langle \cdot, \cdot \rangle$ denotes inner-product in the Euclidean space $\mb{R}^d$.
\end{itemize}

\section{Preliminaries on Reproducing Kernel Hilbert Space}\label{sec:RKHS} In this section, we introduce some properties of RKHS which would be used later in the formulation and analysis of R-SVGF. We refer the reader to~\cite{steinwart2008support, berlinet2011reproducing, paulsen2016introduction} for the basics of RKHS. We let $\mc{H}_k$ be a separable RKHS over $\mb{R}^d$ with the reproducing kernel $k:\mb{R}^d\times \mb{R}^d\to \mb{R}_{>0}$ and with $\| \cdot\|_{\mc{H}_k}$ denoting the associated RKHS norm. 
We make the following assumption on the kernel function $k$ throughout the paper. 

\begin{myassump}{A1}\label{ass:general assumption on k} The kernel function $k:\mb{R}^d \times \mb{R}^d \to \mb{R}$ is \textcolor{black}{symmetric}, strictly positive definite and continuous. 
\end{myassump}
Additional assumptions on the kernel will be introduced when they are required. The following results are essentially based on~\cite[Lemma 4.23, and Theorems 4.26 and 4.27]{steinwart2008support}. \begin{proposition}[\cite{steinwart2008support}]
\label{prop:RKHS property} The following holds.
\begin{itemize}
    \item [(i)] The kernel function $k$ is bounded if and only if every $f\in \mc{H}_k$ is bounded. Moreover, the inclusion $\iota_d:\mc{H}_k\to L_\infty(\mb{R}^d)$ is continuous and $\lv \iota_d \rv_{\mc{H}_k\to L_\infty(\mb{R}^d)}=\lv k \rv_\infty$, where $\lv k \rv_\infty:=\sup_{x\in \mb{R}^d} \sqrt{k(x,x)} $.
    \item [(ii)] Let $\mu$ be a $\sigma$-finite measure on $\mb{R}^d$ and $k$ be a measurable kernel. Assume that 
    \begin{align*}
        \lv k \rv_{L_2(\mu)}:=\left( \int_{\mb{R}^d} {k(x,x)} d\mu(x) \right)^{\frac{1}{2}}<\infty.
    \end{align*}
    Then $\mc{H}_k$ consists of $2$-integrable functions and the inclusion $\iota_{k,\mu}:  \mc{H}_k\to L_2(\mu)$ is continuous with $\lv \iota_{k,\mu}\rv_{\mc{H}_k\to L_2(\mu) }\le \lv k \rv_{L_2(\mu)} $. Moreover, the adjoint of this inclusion is the operator $\iota_{k,\mu}^* :L_2(\mu)\to \mc{H}_k $ defined by
    \begin{align*}
        \iota_{k,\mu}^* g(x):=\int_{\mb{R}^d} k(x,y) g(y) d\mu(y) ,\qquad g\in L_2(\mu),\ x\in \mb{R}^d. 
    \end{align*}
    \item [(iii)] Under the assumptions in $(ii)$, $\mc{H}_k$ is dense in $L_2(\mu)$ if and only if $\iota_{k,\mu}^*:L_2(\mu)\to \mc{H}_k$ is injective. Alternatively, $\iota_{k,\mu}^*:L_2(\mu)\to \mc{H}_k$ has a dense image if and only if  $\iota_{k,\mu}:\mc{H}_k\to L_2(\mu)$ is injective. 
    \item [(iv)]  Under the assumptions in $(ii)$, $\iota_{k,\mu}:\mc{H}_k\to L_2(\mu)$ is a Hilbert-Schmidt operator with $\lv \iota_{k,\mu} \rv_{HS}=\lv k \rv_{L_2(\mu)}$. Moreover, the integral operator $\mc{T}_{k,\mu}=\iota_{k,\mu}\iota_{k,\mu}^*:L_2(\mu)\to L_2(\mu)$ is compact, positive, self-adjoint, and nuclear with $\lv \mc{T}_{k,\mu} \rv_{nuc}=\lv \iota_{k,\mu} \rv_{HS}=\lv k \rv_{L_2(\mu)}$.
\end{itemize}
\label{prop2}
\end{proposition}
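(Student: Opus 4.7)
The plan is to derive each of the four parts from the reproducing identity $f(x)=\langle f,k(\cdot,x)\rangle_{\mc{H}_k}$ combined with routine Hilbert-space bookkeeping; the proposition is essentially Lemma 4.23 and Theorems 4.26--4.27 of~\cite{steinwart2008support}, so one could simply cite those, but each ingredient is quick to reproduce.

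For (i), I would apply Cauchy--Schwarz to the reproducing identity to obtain the pointwise bound $|f(x)|\le \sqrt{k(x,x)}\,\|f\|_{\mc{H}_k}$, which immediately yields the direction ``$k$ bounded $\Rightarrow$ every $f$ bounded'' together with the estimate $\|\iota_d\|_{\mc{H}_k\to L_\infty(\mb{R}^d)}\le \|k\|_\infty$. Testing the inequality at the normalized kernel section $f=k(\cdot,x_0)/\sqrt{k(x_0,x_0)}$ for $x_0$ approaching the supremum of $\sqrt{k(x,x)}$ upgrades the bound to an equality. Conversely, if every $f\in\mc{H}_k$ is bounded, then the evaluation functionals $\delta_x$ are pointwise uniformly bounded on the unit ball, and the uniform boundedness principle forces $\sup_x \|\delta_x\|_{\mc{H}_k^*}=\sup_x \sqrt{k(x,x)}<\infty$.

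For (ii), the same pointwise bound, squared and integrated against $\mu$, gives $\|\iota_{k,\mu}f\|_{L_2(\mu)}\le \|k\|_{L_2(\mu)}\,\|f\|_{\mc{H}_k}$. To derive the adjoint formula I would pair $\iota_{k,\mu}f$ with $g\in L_2(\mu)$, substitute the reproducing identity into $f(x)$, and interchange the $\mc{H}_k$-inner product with the integration against $g\,d\mu$; the interchange is justified by Bochner integrability of $x\mapsto g(x)\,k(\cdot,x)$ in $\mc{H}_k$, since $\int \|k(\cdot,x)\|_{\mc{H}_k}\,|g(x)|\,d\mu(x)\le \|k\|_{L_2(\mu)}\,\|g\|_{L_2(\mu)}<\infty$ by Cauchy--Schwarz. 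Part (iii) is then the general Hilbert-space identity $\overline{\mathrm{Ran}(A)}=\ker(A^*)^\perp$ applied to the bounded operator $\iota_{k,\mu}$ and symmetrically to $\iota_{k,\mu}^*$.

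For (iv), I would fix an orthonormal basis $\{e_i\}_{i\in I}$ of $\mc{H}_k$. Parseval applied to $k(\cdot,x)\in\mc{H}_k$ gives
\[
k(x,x)=\|k(\cdot,x)\|_{\mc{H}_k}^2=\sum_{i\in I}|\langle k(\cdot,x),e_i\rangle_{\mc{H}_k}|^2=\sum_{i\in I}|e_i(x)|^2,
\]
and monotone convergence then yields $\|\iota_{k,\mu}\|_{HS}^2=\sum_i \|e_i\|_{L_2(\mu)}^2=\int k(x,x)\,d\mu(x)=\|k\|_{L_2(\mu)}^2$. Hilbert--Schmidt implies compact, and the factorization $\mc{T}_{k,\mu}=\iota_{k,\mu}\iota_{k,\mu}^*$ makes $\mc{T}_{k,\mu}$ self-adjoint and positive; for such an operator the nuclear norm equals the trace, which by the same Parseval/Fubini computation equals $\|\iota_{k,\mu}\|_{HS}^2$. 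The only genuinely delicate points are the Bochner-integrability check needed for the adjoint formula and the interchange of sum and integral in the HS computation, both of which reduce to the finiteness of $\|k\|_{L_2(\mu)}$; nothing else in the argument is hard once that integrability is in hand.
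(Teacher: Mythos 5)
The paper gives no proof of its own for Proposition~\ref{prop:RKHS property}; it cites Lemma~4.23 and Theorems~4.26--4.27 of Steinwart and Christmann directly, and your proposal reproduces the standard argument used there: the pointwise bound $|f(x)|\le\sqrt{k(x,x)}\,\|f\|_{\mathcal H_k}$ for parts (i)--(ii), Bochner integrability for the adjoint formula, the $\overline{\mathrm{Ran}(A)}=\ker(A^*)^\perp$ duality for (iii), and the Parseval/Tonelli computation $\sum_i|e_i(x)|^2=k(x,x)$ for (iv). Your treatment of the converse in (i) via the uniform boundedness principle is exactly right, as is the Bochner-integrability justification for swapping $\langle\cdot,\cdot\rangle_{\mathcal H_k}$ with $\int\cdot\,d\mu$.

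One thing worth flagging, since it surfaces naturally from your computation in (iv): your derivation correctly yields $\|\mathcal T_{k,\mu}\|_{nuc}=\mathrm{tr}(\iota_{k,\mu}^*\iota_{k,\mu})=\|\iota_{k,\mu}\|_{HS}^2=\|k\|_{L_2(\mu)}^2$, whereas the proposition as printed asserts $\|\mathcal T_{k,\mu}\|_{nuc}=\|\iota_{k,\mu}\|_{HS}=\|k\|_{L_2(\mu)}$ without the square. The equality $\|\iota_{k,\mu}\|_{HS}=\|k\|_{L_2(\mu)}$ is correct, but $\|\mathcal T_{k,\mu}\|_{nuc}$ equals the \emph{square} of that quantity, not the quantity itself (this is also what Theorem~4.27 of Steinwart--Christmann states). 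So the chain of equalities in item (iv) of the proposition drops a square; your proof gives the correct value and should not be ``fixed'' to match the stated claim.
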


 


When $f\in \mc{H}_k^d$ with $f=[f_1,\cdots, f_d]^\intercal$ and $g\in \mc{H}_k$, we define $\langle f,g \rangle_{\mc{H}_k}$ as a vector in $\mb{R}^d$ and $\left( \langle f,g \rangle_{\mc{H}_k} \right)_i=\langle  f_i,g \rangle_{\mc{H}_k}$ for all $i\in [d]$. When $f\in L_2^d(\mu)$ with $f=[f_1,\cdots, f_d]^\intercal$ and $g\in L_2(\mu)$, we define $\langle f,g \rangle_{L_2(\mu)}$ as a vector in $\mb{R}^d$ and $\left( \langle f,g \rangle_{L_2(\mu)} \right)_i=\langle  f_i,g \rangle_{L_2(\mu)}$ for all $i\in [d]$. \textcolor{black}{Note also that $\mc{T}_{k,\mu}=\iota_{k,\mu}\iota^*_{k,\mu}$ and} $\text{Ran}(\textcolor{black}{\mc{T}_{k,\mu}^{1/2}})=\mathcal{H}_k\subset L_2(\mu)$. We refer the interested reader to~\cite{cucker2007learning} for more details.

Finally, we remark that by letting $(\lambda_i,e_i)_{i=1}^\infty$ 
 be the set of eigenvalues and eigenfunctions of the operator $\textcolor{black}{\mc{T}_{k,\mu}}$ where $\lambda_1\ge \lambda_2\ge \cdots >0$ and $(e_i)_{i=1}^\infty$ form an orthonormal system in $\text{Ran}(\mc{T}_{k,\mu})$, we have the following spectral representation that, for all $f\in L_2(\mu)$,
\begin{equation}\label{eq:SD of Tk}
    \begin{aligned}
    \mc{T}_{k,\mu}f=\sum_{i=1}^\infty \lambda_i \leftlangle f, e_i \rightrangle_{L_2(\mu)} e_i.
    \end{aligned}
\end{equation}
Computing the spectral representation, in general for any given $\mu$ and kernel $k$ is a non-trivial task. Results are only known on a case-by-case basis; see, for example,~\cite{minh2006mercer,azevedo2014sharp,chen2020deep,scetbon2021spectral}. However, we use the decomposition only in our analysis. For the purely practical algorithm that we describe eventually in Section~\ref{sec:practical}, we do not need to know the decomposition explicitly. 

\begin{remark}\label{rem:trivial}
\textcolor{black}{Strictly speaking, the above notation implicitly assumes that the operator $\mathcal{T}_{k,\mu}$ has a trivial null space, in which case $\overline{\text{Ran}(\mathcal{T}_{k,\mu})} \equiv L_2(\mu)$ and hence the eigenfunctions $(e_i)_{i=1}^\infty$ form an orthonormal basis to $L_2(\mu)$. However, our analysis does not require this condition on $\mathcal{T}_{k,\mu}$. In particular, if $\mathcal{T}_{k,\mu}$ has a non-trivial null-space, then $\overline{\text{Ran}(\mathcal{T}_{k,\mu})} \subset L_2(\mu)$. In this case, our analysis still holds true. For example, with a slight abuse of notation, if we let $e_i$, for certain values of $i$, also denote the basis of the null-space of $\mathcal{T}_{k,\mu}$, conclusions similar to our results hold.} 
\end{remark}

\section{Regularized SVGF}\label{sec:regularized SVGD}

We now introduce the formulation of the Regularized-SVGF and discuss its connection with the SVGF and the WGF. Recall that in the mean-field limit, the SVGF in~\eqref{eq:SVGDGFofLD} only provides a constant order approximation to the WGF in~\eqref{eq:W2GFofLD}, due to the presence of the operator $\mathcal{T}_{k,\mu}$. As the operator $\mathcal{T}_{k,\mu}$ is not invertible, we seek to obtain a regularized inverse so that we end up with  the following Regularized-SVGF, as in~\eqref{eq:RSVGDGFofLD}, for some regularization parameter $\nu \in (0,1]$. Note in particular that, as $\nu \to 0$, the Regularized-SVGF gets arbitrarily close to the WGF. Our goal in this section is to derive the above-mentioned R-SVGF from first principles.

The central operator required in our formulation is the following \emph{Stein operator}, which is defined for all \textcolor{black}{$p\in\mc{P}(\mb{R}^d)$}, and for all smooth maps $\phi:\mb{R}^d\to \mb{R}^d$, as
\begin{align*}
    \mc{A}_p\phi(x)= \phi(x) \otimes \nabla \log p(x)+\nabla \phi(x), 
\end{align*}
where $\otimes$ denotes the outer-product. 
\textcolor{black}{Generally speaking, the Stein operator can also be defined for some densities that are outside of $\mc{P}(\mb{R}^d)$. We restrict to $p\in \mc{P}(\mb{R}^d)$ in the paper for the purpose of our analysis.} Now, the Wasserstein Gradient Flow in~\eqref{eq:W2GFofLD} could be thought of as follows. Consider moving a particle $x \sim \rho$ (for some $\rho \in \mathcal{P}(\mathbb{R}^d)$) based on the mapping $x \mapsto T(x)\coloneqq x+h\phi(x)$, where $h>0$ is a step-size parameter, and $\phi$ is a vector-field chosen so that the KL-divergence between the pushforward of $\rho$ according to $T$, denoted as $T_{\#}\rho$, and the target density $\pi$ in minimal.  Liu and Wang \cite[Theorem 3.1]{liu2016stein}, showed that 
$$\nabla_h \KL(T_{\#}\rho|\pi)|_{h=0}=-\mb{E}_{x\sim \rho}[\text{trace}(\mc{A}_\pi \phi(x))].
$$
We also refer to~\cite{jordan1998variational} for an earlier version of the same result. Based on this observation, if we try to find the vector-field $\phi$ in the unit-ball of $L_2^d(\rho)$ that maximizes the quantity $[\mb{E}_{x\sim \rho}[\text{trace}(\mc{A}_\pi \phi(x) )]]^2$, a straight-forward calculation based on integration-by-parts, results in the optimal $\phi$ being the Wasserstein gradient $\nabla \log \frac{\rho}{\pi}$. To have a practical implementation,~\cite{liu2016stein} considered maximizing  $[\mb{E}_{x\sim \rho}[\text{trace}(\mc{A}_\pi \phi(x) )]]^2$ over the unit-ball in the RKHS $\mc{H}_k^d$, which results in the optimal vector-field being equal to $\mathcal{T}_{k,\rho}\nabla \log \frac{\rho}{\pi}$, and correspondingly results in the SVGF in~\eqref{eq:SVGDGFofLD}. 
 

In this work, we propose to find the vector field $\phi$ that maximizes $\left[\mb{E}_{x\sim \rho} \left[\trace(\mc{A}_\pi\phi(x))\right] \right]^2$ over the unit-ball with respect to an interpolated norm between $L_2^d(\rho)$ and $\mc{H}_k^d$. Specifically, the interpolation norm that we consider is of the form $\nu \lv \cdot \rv_{\mc{H}_k^d}^2+(1-\nu)\lv \cdot \rv_{L_2^d(\rho)}^2$, for some regularization parameter $\nu\in (0,1]$, which trades-off between $\Vert\cdot\Vert^2_{\mathcal{H}^d_k}$ and $\Vert\cdot\Vert^2_{L^d_2(\rho)}$. We also remark here that a similar idea has been leveraged in the context of RKHS-based statistical hypothesis testing~\cite{balasubramanian2021optimality}. Formally, for  $\rho, \pi \in \mc{P}(\mb{R}^d)$, we consider the following optimization problem. 
\begin{equation}
    \begin{aligned}
    S(\rho, \pi):=\max_{\phi\in \mc{H}_k^d} \left\{ \left[\mb{E}_{x\sim \textcolor{black}{\rho}} \left[\trace(\mc{A}_\pi\phi(x))\right] \right]^2 \qquad \text{such that} \quad \nu \lv \phi \rv_{\mc{H}_k^d}^2+(1-\nu)\lv \phi \rv_{L_2^d(\rho)}^2\le 1 \right\}.\nonumber
    \end{aligned}
\end{equation}
 For any $\rho\in \mc{P}(\mb{R}^d)$, the optimal vector field, $\phi$ that minimizes $\KL(T_{\#}\rho|\pi)$ can be described via the following result. \textcolor{black}{To make the statements more consistent with the analysis (and proofs), we will denote $\mc{T}_{k,\cdot}$ explicitly as $\iota_{k,\cdot}\iota^*_{k,\cdot}$ in the rest of the paper. } Before we proceed, we also recall that the Fisher information between two densities $\rho, \pi \in \mathcal{P}(\mathbb{R}^d)$, is defined as 
\begin{align}\label{eq:fisher}
    I(\rho|\pi)\coloneqq\lv \nabla \log \frac{\rho}{\pi} \rv_{L_2(\rho)}^2 =\sum_{i=1}^\infty \left|\left\langle \nabla \log \frac{\rho}{\pi}, e_i \right\rangle_{L_2(\rho)}\right|^2,
\end{align}
with $(e_i)_{i=1}^\infty$ being an orthonormal basis to $L_2(\rho)$.

\begin{proposition}\label{lem:regularized SVGD optimal vf} Let $T(x)=x+h\phi(x)$ and $T_{\#}\rho(z)$ be the density of $z=T(x)$ for any $\rho \in \mathcal{P}(\mathbb{R}^d)$. For $\nu \in(0,1]$, define 
$$\mc{B}:=\{ \phi\in \mc{H}_k^d: \nu\lv \phi \rv_{\mc{H}_k^d}^2+(1-\nu)\lv \phi \rv_{L_2^d(\rho)}^2 \le 1 \}.$$ 
\textcolor{black}{Let $\pi\propto e^{-V}$ and $\pi\in \mc{P}(\mb{R}^d)$.} \textcolor{black}{If $I(\rho|\pi)<\infty$ and $k$ satisfies Assumption \ref{ass:general assumption on k} with $\int k(x,x)\rho(x)dx<\infty$,} then 
the direction of steepest descent in $\mc{B}$ that maximizes $$-\nabla_h \emph{KL}(T_{\#}\rho|\pi)|_{h=0} $$ is 
given by 
\begin{equation}
    \begin{aligned}
    \phi_{\rho,\pi}^*(\cdot)\propto \left((1-\nu)\iota_{k,\rho}^*\iota_{k,\rho}+\nu I_d \right)^{-1}\mb{E}_{x\sim \rho} [-\nabla V(x)k(x,\cdot)+\nabla k(x,\cdot)],\nonumber
    \end{aligned}
\end{equation}
where $\iota_{k,\rho}:\mc{H}_k^d\to L_2^d(\rho)$ is the inclusion operator and $\iota_{k,\rho}^*$ is its adjoint, as in Proposition~\ref{prop2}. Furthermore, under the optimal vector field $\phi_{\rho,\pi}^*$, we have $-\nabla_h \emph{KL}(T_{\#}\rho|\pi)|_{h=0}=S(\rho,\pi)$.
\end{proposition}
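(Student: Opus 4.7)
The plan is to express the linear functional $\phi \mapsto -\nabla_h \text{KL}(T_{\#}\rho|\pi)|_{h=0}$ as $\langle \phi, u_\rho\rangle_{\mathcal{H}_k^d}$ for a specific $u_\rho\in\mathcal{H}_k^d$, rewrite the quadratic constraint in purely RKHS form via the inclusion $\iota_{k,\rho}$, and then solve the resulting constrained maximization by Cauchy--Schwarz after a change of variables.

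Starting from the Liu--Wang identity $-\nabla_h\text{KL}(T_{\#}\rho|\pi)|_{h=0}= \mathbb{E}_{x\sim\rho}[\text{trace}(\mathcal{A}_\pi\phi(x))]$ and using $\nabla\log\pi=-\nabla V$, I would expand the trace as $\mathbb{E}_{x\sim\rho}[-\nabla V(x)^\intercal\phi(x)+\nabla\cdot\phi(x)]$. Applying the reproducing property componentwise, $\phi_i(x)=\langle\phi_i,k(x,\cdot)\rangle_{\mathcal{H}_k}$ and $\partial_{x_i}\phi_i(x)=\langle\phi_i,\partial_{x_i}k(x,\cdot)\rangle_{\mathcal{H}_k}$, the integrand becomes $\langle\phi,\, -\nabla V(x)k(x,\cdot)+\nabla_1 k(x,\cdot)\rangle_{\mathcal{H}_k^d}$. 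After exchanging $\mathbb{E}_{x\sim\rho}$ with the RKHS inner product (a Bochner integrability step justified by $\int k(x,x)\rho(x)\,dx<\infty$ together with $I(\rho|\pi)<\infty$, via Cauchy--Schwarz to control $\mathbb{E}_\rho\|\nabla V(x)\|^2 k(x,x)$), I arrive at $-\nabla_h\text{KL}(T_{\#}\rho|\pi)|_{h=0}=\langle\phi, u_\rho\rangle_{\mathcal{H}_k^d}$ with $u_\rho:=\mathbb{E}_{x\sim\rho}[-\nabla V(x)k(x,\cdot)+\nabla_1 k(x,\cdot)]$.

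For the second step, Proposition~\ref{prop:RKHS property}(ii) gives $\|\phi\|_{L_2^d(\rho)}^2=\langle\phi,\iota_{k,\rho}^\ast\iota_{k,\rho}\phi\rangle_{\mathcal{H}_k^d}$, so the constraint becomes $\langle\phi,A_\nu\phi\rangle_{\mathcal{H}_k^d}\le 1$ where $A_\nu:=(1-\nu)\iota_{k,\rho}^\ast\iota_{k,\rho}+\nu I_d$. For $\nu\in(0,1]$, the operator $A_\nu$ is bounded, self-adjoint, and bounded below by $\nu I_d$, hence invertible with well-defined square root $A_\nu^{1/2}$. The substitution $\psi=A_\nu^{1/2}\phi$ converts the problem to
\begin{equation*}
S(\rho,\pi)=\max_{\|\psi\|_{\mathcal{H}_k^d}\le 1}\langle\psi,A_\nu^{-1/2}u_\rho\rangle_{\mathcal{H}_k^d}^2 = \langle u_\rho,A_\nu^{-1}u_\rho\rangle_{\mathcal{H}_k^d},
\end{equation*}
with Cauchy--Schwarz extremizer $\psi\propto A_\nu^{-1/2}u_\rho$, equivalently $\phi_{\rho,\pi}^\ast\propto A_\nu^{-1}u_\rho$, giving the claimed formula. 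For the final identity, choosing the representative $\phi_{\rho,\pi}^\ast=A_\nu^{-1}u_\rho$ and computing directly, $\langle\phi_{\rho,\pi}^\ast, u_\rho\rangle_{\mathcal{H}_k^d}=\langle A_\nu^{-1}u_\rho, u_\rho\rangle_{\mathcal{H}_k^d}=S(\rho,\pi)$.

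The main obstacle I anticipate is the Bochner/Fubini exchange in the first step: one must verify that $u_\rho$ is a bona fide element of $\mathcal{H}_k^d$ and that $\mathbb{E}_{x\sim\rho}$ can be pulled inside the inner product. This in turn needs kernel smoothness sufficient to justify $\partial_{x_i}\phi_i(x)=\langle\phi_i,\partial_{x_i}k(x,\cdot)\rangle_{\mathcal{H}_k}$ (Assumption~\ref{ass:general assumption on k} stipulates only continuity, so $C^{1,1}$-type smoothness of $k$ together with finiteness of $\int\partial_{x_i}\partial_{y_i}k(x,y)|_{y=x}\rho(x)\,dx$ has to be invoked implicitly), as well as the moment bound $\int\|\nabla V(x)\|^2 k(x,x)\rho(x)\,dx<\infty$, obtained by writing $\nabla V=\nabla\log\rho-\nabla\log(\rho/\pi)$ and estimating each piece using $I(\rho|\pi)<\infty$ and $\int k(x,x)\rho(x)\,dx<\infty$. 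Everything else is essentially a Hilbert-space Cauchy--Schwarz calculation.
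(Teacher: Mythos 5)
Your overall argument (Liu--Wang identity, reproducing-property expansion, rewriting the constraint as $\langle\phi, A_\nu\phi\rangle_{\mathcal{H}_k^d}\le 1$, the substitution $\psi=A_\nu^{1/2}\phi$, and Cauchy--Schwarz) is exactly the route the paper takes, and the final optimization step and the identity $S(\rho,\pi)=\langle u_\rho, A_\nu^{-1}u_\rho\rangle_{\mathcal{H}_k^d}$ are both correct.

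There is, however, a genuine gap in the way you propose to justify the Bochner/Fubini exchange. You intend to show termwise Bochner integrability of $x\mapsto -\nabla V(x)k(x,\cdot)+\nabla_1 k(x,\cdot)$ via a moment bound $\int\|\nabla V(x)\|^2 k(x,x)\rho(x)\,dx<\infty$, obtained by decomposing $\nabla V=\nabla\log(\rho/\pi)-\nabla\log\rho$ and ``estimating each piece.'' This cannot succeed under the stated hypotheses: nothing in $I(\rho|\pi)<\infty$ together with $\int k(x,x)\rho(x)\,dx<\infty$ controls $\int\|\nabla\log\rho(x)\|^2 k(x,x)\rho(x)\,dx$, and in general that integral is infinite. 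The term $\nabla\log\rho$ only becomes controllable after it \emph{cancels}: integrating the $\nabla_1 k(x,\cdot)$ term by parts produces a $-k(x,\cdot)\nabla\log\rho(x)$ contribution, and the combination with $-\nabla V(x)k(x,\cdot)$ gives $-k(x,\cdot)\nabla\log\frac{\rho(x)}{\pi(x)}$, i.e.\ $u_\rho=-\iota_{k,\rho}^\ast\bigl(\nabla\log\frac{\rho}{\pi}\bigr)$. That combined expression \emph{is} controlled by the hypotheses: $\nabla\log\frac{\rho}{\pi}\in L_2^d(\rho)$ since $I(\rho|\pi)<\infty$, and $\iota_{k,\rho}^\ast\colon L_2(\rho)\to\mathcal{H}_k$ is well-defined and bounded when $\int k(x,x)\rho(x)\,dx<\infty$ (Proposition~\ref{prop:RKHS property}(ii)), so $u_\rho\in\mathcal{H}_k^d$ immediately. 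A cleaner way to sidestep the exchange altogether is to integrate by parts \emph{before} appealing to the reproducing property: write $\mathbb{E}_\rho[\mathrm{trace}(\mathcal{A}_\pi\phi)]=-\langle\nabla\log\frac{\rho}{\pi},\,\iota_{k,\rho}\phi\rangle_{L_2^d(\rho)}=-\langle\iota_{k,\rho}^\ast\nabla\log\frac{\rho}{\pi},\,\phi\rangle_{\mathcal{H}_k^d}$ by the adjoint relation, which gives the linear functional in the desired form without any Bochner integral of the termwise integrand. The rest of your proof then goes through unchanged.
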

\begin{proof}
\label{proof:optimal vf}
First note that according to \cite[Theorem 3.1]{liu2016stein}, we have $$\nabla_h \KL(T_{\#}\rho|\pi)|_{h=0}=-\mb{E}_{x\sim \textcolor{black}{\rho}}[\text{trace}(\mc{A}_\pi \phi(x) )].$$ 
Therefore, we have
\begin{align*}
    \phi_{\rho,\pi}^*=\underset{\phi\in \mc{H}_k^d}{\arg\max} \left\{ [\mb{E}_{x\sim \rho} [\trace(\mc{A}_\pi\phi(x))] ]^2 \qquad \text{such that} \quad \nu \lv \phi \rv_{\mc{H}_k^d}^2+(1-\nu)\lv \phi \rv_{L_2^d(\rho)}^2\le 1 \right\}.
\end{align*}
Next, observe that we have
\begin{align*}
    \mb{E}_{x\sim \rho} [\trace(\mc{A}_{\pi}\phi(x))]&=\sum_{i=1}^d \mb{E}_{x\sim\rho} [-\partial_i V(x)\phi_i(x)+\partial_i \phi_i(x)] \\
    &=\sum_{i=1}^d \mb{E}_{x\sim\rho} [-\partial_i V(x)\langle \phi_i, k(x,\cdot)\rangle_{\mc{H}_k} + \langle \phi_i, \partial_i k(x,\cdot)\rangle_{\mc{H}_k} ] \\
    &=\langle \phi, \mb{E}_{x\sim \rho} [-\nabla V(x)k(x,\cdot)+\nabla k(x,\cdot)] \rangle_{\mc{H}_k^d}.
\end{align*}
\textcolor{black}{The term $\mb{E}_{x\sim \rho} [-\nabla V(x)k(x,\cdot)+\nabla k(x,\cdot)]$ is finite for all $x\in \mb{R}^d$ due to the following inequality:
\begin{align*}
    |\mb{E}_{x\sim \rho} [-\nabla V(x)k(x,\cdot)+\nabla k(x,\cdot)]|&=|\iota_{k,\rho}^*\nabla \log \frac{\rho}{\pi} (x)|= \Big| \int k(x,y) \nabla \log \frac{\rho(y)}{\pi(y)} \rho(y)dx \Big| \\
    &\le I(\rho|\pi)^{\frac{1}{2}} \sqrt{k(x,x)}\Big(\int k(y,y)\rho(y)dy\Big)^{\frac{1}{2}}<\infty,
\end{align*}
where the first inequality follows from the Cauchy-Schwartz inequality and the reproducing property of the RKHS. The  second inequality follows from conditions on $k$ and $\rho$.
}
Meanwhile, the constraint can be written as
\begin{align*}
    \nu \lv \phi \rv_{\mc{H}_k^d}^2+(1-\nu)\lv \phi \rv_{L_2^d(\rho)}^2 &=\nu\langle \phi,\phi \rangle_{\mc{H}_k^d}+(1-\nu)\langle \iota_{k,\rho}\phi, \iota_{k,\rho}\phi \rangle_{L_2^d(\rho)} \\
    &=\langle \left(\nu I_d+(1-\nu)\iota_{k,\rho}^*\iota_{k,\rho}\right)\phi,\phi \rangle_{\mc{H}_k^d}\\
    &=\lv \left((1-\nu)\iota_{k,\rho}^*\iota_{k,\rho}+\nu I_d \right)^{\frac{1}{2}} \phi \rv_{\mc{H}_k^d}^2,
\end{align*}
where $I_d:\mc{H}_k\to \mc{H}_k$ is the identity operator. Now, note that  $\left((1-\nu)\iota_{k,\rho}^*\iota_{k,\rho}+\nu I_d \right)^{\frac{1}{2}}$ is well-defined since $\iota_{k,\rho}^*\iota_{k,\rho}:\mc{H}_k^d\to \mc{H}_k^d$ is positive, compact and self-adjoint. Therefore, based on the above display, the constraint $\{ \phi\in \mc{H}_k^d:  \nu \lv \phi \rv_{\mc{H}_k^d}^2+(1-\nu)\lv \phi \rv_{L_2^d(\rho)}^2\le 1 \}$ is equivalent to 
\begin{align*}
    \{ \phi\in \mc{H}_{k}^d: \psi=\left((1-\nu)\iota_{k,\rho}^*\iota_{k,\rho}+\nu I_d\right)^{\frac{1}{2}}\phi \quad and\quad \lv \psi \rv_{\mc{H}_k^d}\le 1 \}.
\end{align*}
Since the spectrum of $\iota_{k,\rho}^*\iota_{k,\rho}$ is positive and $\nu\in (0,1]$, $(1-\nu)\iota_{k,\rho}^*\iota_{k,\rho}+\nu I_d$ is invertible. For all $\phi\in \mc{H}_k^d$, there exists a unique $\psi\in \mc{H}_k^d$ such that $\left((1-\nu)\iota_{k,\rho}^*\iota_{k,\rho}+\nu I_d \right)^{-\frac{1}{2}}\psi=\phi$. Applying this fact along with the equivalent form of the constraint, we have
\begin{align*}
    \mb{E}_{x\sim \rho} (\trace(\mc{A}_{\pi}\phi(x)))&=\left\langle \left((1-\nu)\iota_{k,\rho}^*\iota_{k,\rho}+\nu I_d\right)^{-\frac{1}{2}}\psi, \mb{E}_{x\sim \rho} [-\nabla V(x)k(x,\cdot)+\nabla k(x,\cdot)]\right \rangle_{\mc{H}_k^d} \\
    &=\left\langle \psi, \left((1-\nu)\iota_{k,\rho}^*\iota_{k,\rho}+\nu I_d \right)^{-\frac{1}{2}} \mb{E}_{x\sim \rho} [-\nabla V(x)k(x,\cdot)+\nabla k(x,\cdot)] \right\rangle_{\mc{H}_k^d}\\
    &\le \lv \left((1-\nu)\iota_{k,\rho}^*\iota_{k,\rho}+\nu I_d \right)^{-\frac{1}{2}} \mb{E}_{x\sim \rho} [-\nabla V(x)k(x,\cdot)+\nabla k(x,\cdot)] \rv_{\mc{H}_k^d}
\end{align*}
where the second identity follows from the fact that $\left((1-\nu)\iota_{k,\rho}^*\iota_{k,\rho}+\nu I_d \right)^{-\frac{1}{2}}$ is self-adjoint and the upper bound in the last inequality is achieved when 
$$
\psi^*\propto \left((1-\nu)\iota_{k,\rho}^*\iota_{k,\rho}+\nu I_d \right)^{-\frac{1}{2}} \mb{E}_{x\sim \rho} [-\nabla V(x)k(x,\cdot)+\nabla k(x,\cdot)],
$$
and the result hence follows. 
\end{proof}



With the optimal-vector field as derived above, we consider the following mean-field
partial differential equation (PDE) as the R-SVGF:
\begin{equation}\label{eq:regularized SVGD mf PDE}
    \begin{aligned}
    & \partial_t \rho_t=\nabla \cdot \left( \rho_t\  \iota_{k,\rho_t}\left((1-\nu)\iota_{k,\rho_t}^*\iota_{k,\rho_t}+\nu I_d \right)^{-1}\iota_{k,\rho_t}^*\left(\nabla \log \frac{\rho_t}{\pi}\right)  \right).
    \end{aligned}
\end{equation}
It is important to notice that the R-SVGF interpolates between the SVGF and the WGF. However, the regime of interest for us is when $\nu \to 0$, as we get arbitrarily close to the WGF. We quantify this statement precisely in the later sections. On the other hand, when $\nu\to 1$, the R-SVGF becomes the SVGF. 

\begin{remark}\label{remark:alternative op vf} 
We now make the following remarks about the above result. 
\begin{itemize}
\item[(i)] We can alternatively write $\phi_{\rho,\pi}^*$ from Proposition~\ref{lem:regularized SVGD optimal vf} as
\begin{align*}
    \phi_{\rho,\pi}^* \propto -\left((1-\nu)\iota_{k,\rho}^*\iota_{k,\rho}+\nu I_d \right)^{-1} \iota_{k,\rho}^* \left(\nabla \log \frac{\rho}{\pi} \right),
\end{align*}
since we have
\begin{align*}
    \mb{E}_{x\sim \rho} [-\nabla V(x)k(x,\cdot)+\nabla k(x,\cdot)]
    =&\int_{\mb{R}^d} k(\cdot, x)\left(-\nabla V(x)-\frac{\nabla \rho(x)}{\rho(x)}\right)\rho(x)dx\\
    =&-\iota_{k,\rho}^*\left(\nabla \log \frac{\rho}{\pi}\right).
\end{align*}

\item[(ii)] The operator in \eqref{eq:regularized SVGD mf PDE} has an equivalent expression, as we discuss below. First, we claim that $$\iota_{k,\rho}\left((1-\nu)\iota_{k,\rho}^*\iota_{k,\rho}+\nu I_d\right)^{-1}\iota_{k,\rho}^*=\left((1-\nu)\iota_{k,\rho}\iota_{k,\rho}^*+\nu I \right)^{-1}\iota_{k,\rho}\iota_{k,\rho}^*.$$ 
To see that, we start with the trivial identity in the first line below and proceed as  
\begin{align*}
       & \left((1-\nu)\iota_{k,\rho}\iota_{k,\rho}^*+\nu I \right)\iota_{k,\rho}=\iota_{k,\rho}\left((1-\nu)\iota_{k,\rho}^*\iota_{k,\rho}+\nu I_d \right),\\
\implies & \iota_{k,\rho}=\left((1-\nu)\iota_{k,\rho}\iota_{k,\rho}^*+\nu I \right)^{-1}\iota_{k,\rho}\left((1-\nu)\iota_{k,\rho}^*\iota_{k,\rho}+\nu I_d \right) \\
   \implies & \iota_{k,\rho}\left((1-\nu)\iota_{k,\rho}^*\iota_{k,\rho}+\nu I_d \right)^{-1}=\left((1-\nu)\iota_{k,\rho}\iota_{k,\rho}^*+\nu I \right)^{-1}\iota_{k,\rho} \\
 \implies     & \iota_{k,\rho}\left((1-\nu)\iota_{k,\rho}^*\iota_{k,\rho}+\nu I_d \right)^{-1}\iota_{k,\rho}^*=\left((1-\nu)\iota_{k,\rho}\iota_{k,\rho}^*+\nu I \right)^{-1}\iota_{k,\rho}\iota_{k,\rho}^*. 
\end{align*}
According to this observation, \eqref{eq:regularized SVGD mf PDE} can also be written in the following form
\begin{align*}
   \begin{aligned}
     \partial_t \rho_t&=\nabla \cdot \left( \rho_t\   \left((1-\nu)\iota_{k,\rho_t}\iota_{k,\rho_t}^*+\nu I \right)^{-1}\iota_{k,\rho_t}\iota_{k,\rho_t}^*\left(\nabla \log \frac{\rho_t}{\pi}\right)  \right)\\
     & = \nabla \cdot \left( \rho_t ~ \left( (1-\nu)\mathcal{T}_{k,\rho_t}+\nu I \right)^{-1} \mathcal{T}_{k,\rho_t}\left(\nabla \log \frac{\rho_t}{\pi} \right) \right),\\
   \end{aligned}
\end{align*}
thereby providing the R-SVGF introduced in ~\eqref{eq:RSVGDGFofLD} in Section~\ref{sec:introduction}. \\

\item [(iii)]\textbf{Particle-based spatial discretization.} 
We now describe the spatial discretization of the R-SVGF. Based on the results in Proposition~\ref{lem:regularized SVGD optimal vf} and (ii) in Remark \ref{remark:alternative op vf}, we obtain the following ODE system:
\begin{align*}
\left\{
    \begin{aligned}
    \frac{d x_i(t)}{dt}&=-\left((1-\nu)\iota_{k,\rho_t^N}^*\iota_{k,\rho_t^N}+\nu I_d \right)^{-1}\left(\frac{1}{N}\sum_{j=1}^N -\nabla_2 k\left(x_i(t),x_j(t)\right)+k\left(x_i(t),x_j(t)\right)\nabla V(x_j(t)) \right)\\
     x_i(0)&=x_i^0\in \mb{R}^d,\quad i=1,2,\ldots,N
    \end{aligned}
    \right.,
\end{align*}
where $\{x_i(t)\}_{i=1}^N$ is the set of $N$ particles. $\rho_t^N=\frac{1}{N}\sum_{j=1}^N \delta
_{x_j(t)}$ is the empirical distribution at time $t$, provides a $N$-particle spatial discretization of the R-SVGF.\\

\item[(iv)] \textbf{Time discretization.} We also have the following time-discretization of the R-SVGF. Let $\{h_{n}\}_{n=1}^\infty$ be the sequence of time step-size. We denote the density at the $n$-th iterate by $\rho^n$ for all integers $n \geq 1$. Then the time discretization of the R-SVGF can be written as
\begin{equation}\label{eq:pop limit regularized SVGD}
    \begin{aligned}
    \rho^{n+1}=\left(id-h_{n+1} \mc{D}_{\nu_{n+1},\rho^n} \nabla \log \frac{\rho^n}{\pi}\right)_{\#\rho^n},
    \end{aligned}
\end{equation}
where $\mc{D}_{\nu_n,\rho^n}=\left((1-\nu_n)\iota_{k,\rho^n}\iota^*_{k,\rho^n}+\nu_n I_d \right)^{-1}\iota_{k,\rho^n}\iota_{k,\rho^n}^*$.\\
\item[(v)] The parameter $\nu$ can also be made to be dependent on $t$ or $n$; in fact, in our analysis, we pick a time-varying regularization parameter. 
\end{itemize}
\end{remark}

\section{Convergence Results in Continuous and Discrete Time}\label{sec:Convergence results}
Our goal in this section is to derive convergence guarantees for the R-SVGF. Before we proceed, we introduce the notion of Regularized Stein-Fisher information (or Regularized Kernel Stein Discrepancy).

\subsection{Regularized Stein-Fisher Information and its Properties}
Related to the Fisher information defined in~\eqref{eq:fisher}, several works, for example~\cite{korba2020non,duncan2019geometry, salim2022convergence}, used the notion of Stein-Fisher Information to understand the convergence properties of the SVGD algorithm. The Stein-Fisher information was introduced in~\cite{chwialkowski2016kernel, liu2016kernelized,gorham2017measuring}  under the name Kernel Stein Discrepancy. It is defined as \begin{align*}
    I_{Stein}(\rho|\pi):=\lv \iota^*_{k,\rho}\nabla \log \frac{\rho}{\pi} \rv_{\mathcal{H}^d_k}^2=\left\langle \nabla \log \frac{\rho}{\pi}, \iota_{k,\rho}\iota_{k,\rho}^* \nabla \log \frac{\rho}{\pi} \right\rangle_{L^d_2(\rho)}=\sum_{i=1}^\infty \lambda_i \left|\left\langle \nabla \log \frac{\rho}{\pi}, e_i \right\rangle_{L_2(\rho)}\right|^2, 
\end{align*}
where $(\lambda_i,e_i)_{i=1}^\infty$ are the set of eigenvalues and eigenvectors of the operator $\iota_{k,\rho}\iota_{k,\rho}^*$, with $\lambda_1\ge \lambda_2\ge \cdots >0$. 
A drawback of the Stein-Fisher information is that it is a weaker metric, for example in comparison to the Fisher information metric in~\eqref{eq:fisher}; see~\cite{gorham2017measuring,gorham2019measuring,simon2020metrizing}. Indeed, comparing~\eqref{eq:fisher} and the above definition for the Stein-Fisher information, it is immediately clear that the Stein-Fisher information is severely restrictive, in particular when the eigenvalues of the chosen RKHS decay fast.  To counter this effect, we introduce the following regularized Stein-Fisher information and show that when the regularization parameter is chosen appropriately, the regularized Stein-Fisher information upper and lower bounds \textcolor{black}{the} Fisher information.


\begin{definition}[Regularized Stein-Fisher Information]
\label{def:regularized Fisher information} 
For any $\textcolor{black}{\rho \in \mc{P}(\mb{R}^d)}$, the regularized Stein Fisher information from $\rho$ to $\pi$, denoted as $I_{\nu,Stein}(\rho|\pi)$, is defined as
\begin{align}\label{eq:regularized Fisher information} 
    I_{\nu,Stein}(\rho|\pi):=\left\langle \iota_{k,\rho}^* \nabla \log \frac{\rho}{\pi}, \left((1-\nu)\iota_{k,\rho}^*\iota_{k,\rho}+\nu I_d \right)^{-1} \iota_{k,\rho}^* \nabla \log \frac{\rho}{\pi}\right\rangle_{\mc{H}_k^d}.
\end{align}
\end{definition}
The regularized Stein Fisher information in \eqref{eq:regularized Fisher information} is well-defined because the operator 
$$
(1-\nu)\iota_{k,\rho}^*\iota_{k,\rho}+\nu I_d : \mc{H}_k^d\to \mc{H}_k^d
$$ 
is positive and for any $f\in \mc{H}_k^d$, $\left((1-\nu)\iota_{k,\rho}^*\iota_{k,\rho}+\nu I_d \right) f =0$ if and only if $f=0$.
\begin{remark}\label{prop:equivalent def}
The regularized Stein Fisher information has the following alternative representation:
    \begin{align}\label{eq:regularized Fisher information spectral form}
        I_{\nu,Stein}(\rho|\pi)=\sum_{i=1}^\infty \frac{\lambda_i}{(1-\nu)\lambda_i+\nu} \left|\left\langle \nabla \log \frac{\rho}{\pi}, e_i  \right\rangle_{L_2(\rho)}\right|^2.
    \end{align}
\end{remark}
For $\nu>0$, with the fact that $\lambda_i$ decreases to zero as $i\to\infty$, the regularized Stein Fisher information and the Stein Fisher information both encode the spectral decay information of $\iota_{k,\rho}\iota_{k,\rho}^*$. However, note that the regularized Stein Fisher information tends to the Fisher information as $\nu \to 0$. Hypothetically speaking, if $\nu$ is set to zero, then the regularized Stein Fisher information actually becomes the Fisher information. In our analysis, we will take advantage of the relation between the regularized Stein Fisher information and the Fisher information, while studying  the convergence properties of R-SVGF under Log-Sobolev inequality assumptions on the target $\pi$. A precise relation between the regularized Stein Fisher information and the Fisher information is stated in the following result. Before stating the result, we introduce the following notation for convenience. For $\gamma\in (0,\frac{1}{2}]$, we denote the pre-image of $\nabla \log \frac{\rho}{\pi}\in L^d_2(\rho)$ under $(\iota_{k,\rho}\iota_{k,\rho}^*)^{\gamma}$ as 
    $$
    \mathfrak{I}(\rho, \gamma):=(\iota_{k,\rho}\iota_{k,\rho}^*)^{-\gamma}\nabla \log \frac{\rho}{\pi}.
    $$
 Note that $\Vert \mathfrak{I}(\rho, \gamma)\Vert_{L^d_2(\rho)}$ is finite if and only if $\nabla \log \frac{\rho}{\pi}\in \text{Ran}((\iota_{k,\rho}\iota_{k,\rho}^*)^{\gamma})$.

\begin{proposition}[Equivalence relation between $I(\rho|\pi)$ and $I_{\nu,Stein}(\rho|\pi)$]\label{lem:relation between I and regularized Istein} Let \textcolor{black}{$\rho,\pi\in\mc{P}(\mb{R}^d)$ }
 and suppose there exists $\gamma\in (0,\frac{1}{2}]$ such that $\lv \mathfrak{I}(\rho, \gamma) \rv_{L_2^d(\rho)}<\infty.$ If the regularization parameter is chosen to satisfy the following condition,
\begin{align}\label{eq:cond}
\frac{\nu}{1-\nu}\le \left( \frac{I(\rho|\pi)}{2\lv \mathfrak{I}(\rho, \gamma) \rv_{L_2^d(\rho)}^2} \right)^{\frac{1}{2\gamma}},
\end{align}
then we have that 
$$
\frac{1}{2}(1-\nu)^{-1} I(\rho|\pi)\le I_{\nu,Stein}(\rho|\pi)\le (1-\nu)^{-1}I(\rho|\pi).
$$
\end{proposition}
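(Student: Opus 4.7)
\medskip
\noindent\textbf{Proof plan.} The plan is to work entirely with the spectral representation from Remark \ref{prop:equivalent def}. Writing $c_i := \langle \nabla\log(\rho/\pi), e_i\rangle_{L_2(\rho)}$, so that $I(\rho|\pi)=\sum_i c_i^2$ and $I_{\nu,Stein}(\rho|\pi)=\sum_i \tfrac{\lambda_i}{(1-\nu)\lambda_i+\nu}\, c_i^2$, the upper bound $I_{\nu,Stein}(\rho|\pi)\le (1-\nu)^{-1} I(\rho|\pi)$ is immediate from the termwise inequality $\tfrac{\lambda_i}{(1-\nu)\lambda_i+\nu}\le \tfrac{1}{1-\nu}$, which needs no smoothness hypothesis. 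So the real work is on the lower bound.

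For the lower bound, I would form the deficit
\[
\frac{1}{1-\nu} I(\rho|\pi) \;-\; I_{\nu,Stein}(\rho|\pi) \;=\; \sum_{i=1}^{\infty}\left(\frac{1}{1-\nu}-\frac{\lambda_i}{(1-\nu)\lambda_i+\nu}\right) c_i^2 \;=\; \frac{1}{1-\nu}\sum_{i=1}^{\infty}\frac{\nu}{(1-\nu)\lambda_i+\nu}\, c_i^2,
\]
using the algebraic identity $\tfrac{1}{1-\nu}-\tfrac{\lambda}{(1-\nu)\lambda+\nu}=\tfrac{\nu}{(1-\nu)((1-\nu)\lambda+\nu)}$. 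The smoothness hypothesis $\nabla\log(\rho/\pi)=(\iota_{k,\rho}\iota^*_{k,\rho})^{\gamma}\mathfrak{I}(\rho,\gamma)$ lets me write $c_i=\lambda_i^{\gamma} h_i$ where $h_i=\langle \mathfrak{I}(\rho,\gamma), e_i\rangle_{L_2(\rho)}$ and $\sum_i h_i^2=\|\mathfrak{I}(\rho,\gamma)\|_{L_2^d(\rho)}^2$. This converts the deficit into $\tfrac{1}{1-\nu}\sum_i \tfrac{\nu \lambda_i^{2\gamma}}{(1-\nu)\lambda_i+\nu}\, h_i^2$.

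The key analytic step is the uniform-in-$\lambda$ bound
\[
\sup_{\lambda>0}\frac{\nu\, \lambda^{2\gamma}}{(1-\nu)\lambda+\nu}\;\le\;\left(\frac{\nu}{1-\nu}\right)^{2\gamma},\qquad \gamma\in(0,\tfrac12].
\]
I would prove this by splitting into the two regimes $\lambda\ge \nu/(1-\nu)$ and $\lambda<\nu/(1-\nu)$: in the first regime, bound the denominator below by $(1-\nu)\lambda$ (yielding $\tfrac{\nu}{1-\nu}\lambda^{2\gamma-1}$, which is maximized at $\lambda=\nu/(1-\nu)$ since $2\gamma-1\le 0$); in the second regime, bound the denominator below by $\nu$ (yielding $\lambda^{2\gamma}<(\nu/(1-\nu))^{2\gamma}$). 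Both cases give the same constant, so the bound holds with no case split needed in the final statement.

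Finally, I would plug the condition \eqref{eq:cond} on $\nu$ into this sup bound to get $\left(\tfrac{\nu}{1-\nu}\right)^{2\gamma}\le \tfrac{I(\rho|\pi)}{2\|\mathfrak{I}(\rho,\gamma)\|_{L_2^d(\rho)}^2}$, whence the deficit is at most $\tfrac{1}{1-\nu}\cdot\tfrac{1}{2} I(\rho|\pi)$, and rearranging produces $I_{\nu,Stein}(\rho|\pi)\ge \tfrac{1}{2(1-\nu)}I(\rho|\pi)$. The main (and really only) subtlety is the uniform sup bound of the elementary function $\lambda\mapsto \lambda^{2\gamma}/((1-\nu)\lambda+\nu)$; everything else is bookkeeping with the spectral expansion and the hypothesis. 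Note the proof never requires $\overline{\mathrm{Ran}(\iota_{k,\rho}\iota^*_{k,\rho})}=L_2^d(\rho)$, in line with Remark \ref{rem:trivial}, because the sums only involve coefficients of $\nabla\log(\rho/\pi)$ against the eigenbasis and the smoothness hypothesis already implies $\nabla\log(\rho/\pi)\in\overline{\mathrm{Ran}(\iota_{k,\rho}\iota^*_{k,\rho})}$.
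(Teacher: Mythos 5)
Your proof is correct and follows essentially the same route as the paper: same spectral expansion, same deficit identity $\tfrac{1}{1-\nu}I(\rho|\pi)-I_{\nu,\text{Stein}}(\rho|\pi)=\tfrac{1}{1-\nu}\sum_i \tfrac{\nu\lambda_i^{2\gamma}}{(1-\nu)\lambda_i+\nu}h_i^2$, and same reduction of the lower bound to the elementary supremum $\sup_\lambda \tfrac{\nu\lambda^{2\gamma}}{(1-\nu)\lambda+\nu}\le(\nu/(1-\nu))^{2\gamma}$. The only (cosmetic) difference is how that supremum is proved: you split at $\lambda=\nu/(1-\nu)$, while the paper factors $\tfrac{\nu\lambda^{2\gamma}}{(1-\nu)\lambda+\nu}=(1-\nu)^{-2\gamma}\nu^{2\gamma}\bigl(\tfrac{(1-\nu)\lambda}{(1-\nu)\lambda+\nu}\bigr)^{2\gamma}\bigl(\tfrac{\nu}{(1-\nu)\lambda+\nu}\bigr)^{1-2\gamma}$ and observes that the last two factors are both at most one.
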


\begin{proof}[Proof of Proposition~\ref{lem:relation between I and regularized Istein}]
\label{proof:relation between I and regularized Istein} 
According to~\eqref{eq:regularized Fisher information spectral form}, we have
\begin{align*}
    I_{\nu,Stein}(\rho|\pi)&=\sum_{i=1}^\infty \frac{\lambda_i}{(1-\nu)\lambda_i+\nu} \left|\left\langle \nabla \log \frac{\rho}{\pi}, e_i \right\rangle_{L_2(\rho)}\right|^2 \\
    &\le (1-\nu)^{-1}\sum_{i=1}^\infty  \left|\left\langle \nabla \log \frac{\rho}{\pi}, e_i \right\rangle_{L_2(\rho)}\right|^2\le (1-\nu)^{-1}I(\mu|\pi).
\end{align*}

On the other hand, since $\lv \mathfrak{I}(\rho, \gamma) \rv_{L_2^d(\rho)}<\infty$ for some $\gamma\in (0,\frac{1}{2}]$, there exists $h=\mathfrak{I}(\rho, \gamma)\in L_2^d(\rho)$ such that $$\nabla \log \frac{\rho}{\pi}=(\iota_{k,\rho}\iota_{k,\rho}^*)^{\gamma}h.$$ 
Therefore
\begin{align}
    (1-\nu)^{-1}I(\rho|\pi)-I_{\nu,Stein}(\rho|\pi)&=\sum_{i=1}^\infty \frac{(1-\nu)^{-1}\nu}{(1-\nu)\lambda_i+\nu} \left|\langle (\iota_{k,\rho}\iota_{k,\rho}^*)^{\gamma}h,e_i \rangle_{L_2(\rho)}\right|^2\label{Eq:equal1}\\
    &=\sum_{i=1}^\infty \frac{(1-\nu)^{-1}\nu \lambda_i^{2\gamma}}{(1-\nu)\lambda_i+\nu} |\langle h,e_i \rangle_{L_2(\rho)}|^2\nonumber\\
    &\le (1-\nu)^{-1-2\gamma}\nu^{2\gamma} \lv \mathfrak{I}(\rho, \gamma) \rv_{L_2^d(\rho)}^2\nonumber\\
    &\le \frac{1}{2}(1-\nu)^{-1}I(\rho|\pi),\nonumber
\end{align}
where the second to last inequality follows from the fact that
\begin{align*}
\sup_i \left(\frac{(1-\nu)^{-1}\nu \lambda_i^{2\gamma}}{(1-\nu)\lambda_i+\nu}\right)&=(1-\nu)^{-1-2\gamma}\nu^{2\gamma}\sup_i\left(\frac{(1-\nu)\lambda_i}{(1-\nu)\lambda_i+\nu}\right)^{2\gamma} \left(\frac{\nu}{(1-\nu)\lambda_i+\nu}\right)^{1-2\gamma}\\
&\le (1-\nu)^{-1\textcolor{black}{-2\gamma}}\nu^{2\gamma},
\end{align*}
and the last inequality follows from the condition in \eqref{eq:cond}.
\end{proof}

\subsection{Convergence results for R-SVGF}

\subsubsection{Relationship between R-SVGF and WGF}

\textcolor{black}{Assuming the existence and uniqueness of the R-SVGF (see Section~\ref{sec:exist}) and WGF (see~\cite{jordan1998variational} for sufficient conditions)} we now provide the relationship between the R-SVGF and the WGF in various metrics. We first start with the relationship in the Fisher information metric, without any stringent assumptions on the target density (thereby allowing for multi-modal and complex densities that arise in practice). Note that the Fisher information metric corresponds to the first-order stationarity metric for the WGF, obtained by minimizing the KL divergence. This metric has been recently proposed as a meaningful metric to consider in the case of sampling from general non-log-concave densities in~\cite{balasubramanian2022towards}. Note in particular under mild conditions on $q$ (e.g., connected support) that having the Fisher information $I(p|q) = 0$ implies $p\equiv q$. However, even when $I(p|q) \leq  \epsilon$, for some $\epsilon >0$, we have that the modes of the two densities are well-aligned, as argued in~\cite{balasubramanian2022towards}. \textcolor{black}{In order to state our next result, we denote by  $(\lambda_{i,t},e_{i,t})_{i=1}^\infty$, the set of eigenvalues and eigenvectors of the operator $\iota_{k,\rho_t}\iota_{k,\rho_t}^*$ for any $t\ge 0$, with $\lambda_{1,t}\ge \lambda_{2,t}\ge \cdots >0$.}

\begin{theorem}[Relation to the WGF in Relative Fisher Information]\label{thm:relation to Langevin FI}  Let $(\rho_t)$ be the solution to \eqref{eq:regularized SVGD mf PDE} and $(\mu_t)$ be the solution to the WGF, i.e.,
\begin{equation}\label{eq:Langevin FPE}
\left\{
\begin{aligned}
  &\partial_t \mu= \nabla \cdot \left( \mu \nabla \log \frac{\mu}{\pi} \right) ,\\
  &\mu(0,\cdot)=\mu_0(\cdot).
\end{aligned}
    \right.
\end{equation}
For any $t>0$, suppose there exists $\gamma_t\in(0,\frac{1}{2}]$ such that $\lv \mathfrak{I}(\rho_t, \gamma_t) \rv_{L_2^d(\rho_t)}<\infty.$ Then, for any initial density $\mu_0 \in \mathcal{P}(\mathbb{R}^d)$, and for any $T\in (0,\infty)$, we have
\begin{align}\label{eq:relation to Langevin FI}
    \int_0^T I(\rho_t|\mu_t) dt \le \frac{4}{3} \KL(\rho_0|\mu_0) +\frac{4}{3}
     \int_0^T \textcolor{black}{(\lambda_{1,t}\vee 1)^2}\nu^{2\gamma_t}(1-\nu)^{-2\gamma_t} \lv \mathfrak{I}(\rho_t, \gamma_t) \rv_{L_2^d(\rho_t)}^2 dt,
\end{align}
where $\lambda_{1,t}$ is the largest eigenvalue of $\iota_{k,\rho_t}\iota^*_{k,\rho_t}$ for all $t\ge 0$.
\end{theorem}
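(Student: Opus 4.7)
The plan is the standard "entropy dissipation plus perturbation" strategy. I would first compute $\frac{d}{dt}\KL(\rho_t|\mu_t)$ by differentiating under the integral, treating $\rho_t$ and $\mu_t$ as solutions to the two continuity equations with velocity fields $v_\rho := -\bigl((1-\nu)\mathcal{T}_{k,\rho_t}+\nu I\bigr)^{-1}\mathcal{T}_{k,\rho_t}\nabla\log\tfrac{\rho_t}{\pi}$ and $v_\mu := -\nabla\log\tfrac{\mu_t}{\pi}$ respectively. The usual integration by parts (noting $\int\partial_t\rho_t\,dx=0$ and $\nabla\tfrac{\rho_t}{\mu_t}\cdot\mu_t = \rho_t\nabla\log\tfrac{\rho_t}{\mu_t}$) gives
\begin{equation*}
\frac{d}{dt}\KL(\rho_t|\mu_t) = -I(\rho_t|\mu_t) + \int \rho_t \Big\langle \nabla\log\tfrac{\rho_t}{\pi} - w_t,\, \nabla\log\tfrac{\rho_t}{\mu_t}\Big\rangle dx,
\end{equation*}
where I abbreviate $w_t := \bigl((1-\nu)\mathcal{T}_{k,\rho_t}+\nu I\bigr)^{-1}\mathcal{T}_{k,\rho_t}\nabla\log\tfrac{\rho_t}{\pi}$. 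I would then apply Cauchy--Schwarz and Young's inequality ($ab\le \tfrac14 a^2 + b^2$) to the cross term to obtain
\begin{equation*}
\tfrac{d}{dt}\KL(\rho_t|\mu_t) \le -\tfrac{3}{4} I(\rho_t|\mu_t) + \big\lv \nabla\log\tfrac{\rho_t}{\pi} - w_t \big\rv_{L^d_2(\rho_t)}^2,
\end{equation*}
and integrate on $[0,T]$, discarding $\KL(\rho_T|\mu_T)\ge 0$ to get the $\tfrac{4}{3}$ constant in the claimed bound.

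The second step is to estimate the approximation error $\lv \nabla\log\tfrac{\rho_t}{\pi} - w_t \rv_{L_2^d(\rho_t)}^2$. The algebraic identity
\begin{equation*}
\bigl((1-\nu)\mathcal{T}+\nu I\bigr)^{-1}\mathcal{T} - I = \nu\bigl((1-\nu)\mathcal{T}+\nu I\bigr)^{-1}(\mathcal{T}-I)
\end{equation*}
applied with $\mathcal{T}=\mathcal{T}_{k,\rho_t} = \iota_{k,\rho_t}\iota^*_{k,\rho_t}$, expanded in the spectral basis $(\lambda_{i,t}, e_{i,t})$, shows that if $\nabla\log\tfrac{\rho_t}{\pi} = \mathcal{T}^{\gamma_t}h$ with $h = \mathfrak{I}(\rho_t,\gamma_t)$, then
\begin{equation*}
\big\lv \nabla\log\tfrac{\rho_t}{\pi} - w_t \big\rv_{L_2^d(\rho_t)}^2 = \sum_{i=1}^{\infty}\left(\frac{\nu(\lambda_{i,t}-1)\lambda_{i,t}^{\gamma_t}}{(1-\nu)\lambda_{i,t}+\nu}\right)^{\!2}\langle h,e_{i,t}\rangle_{L_2^d(\rho_t)}^2.
\end{equation*}
Splitting $|\lambda_{i,t}-1|\le \lambda_{1,t}\vee 1$ handles the factor $(\lambda_{i,t}-1)^2$, and I would invoke exactly the interpolation bound used inside the proof of Proposition~\ref{lem:relation between I and regularized Istein} (with exponent $\gamma_t$ in place of $2\gamma$) to control
\begin{equation*}
\sup_i \frac{\nu\lambda_{i,t}^{\gamma_t}}{(1-\nu)\lambda_{i,t}+\nu} \le \nu^{\gamma_t}(1-\nu)^{-\gamma_t}.
\end{equation*}
Combining these two pointwise (in $i$) bounds with $\sum_i \langle h,e_{i,t}\rangle_{L^d_2(\rho_t)}^2 = \lv \mathfrak{I}(\rho_t,\gamma_t)\rv_{L_2^d(\rho_t)}^2$ yields
\begin{equation*}
\big\lv \nabla\log\tfrac{\rho_t}{\pi} - w_t \big\rv_{L_2^d(\rho_t)}^2 \le (\lambda_{1,t}\vee 1)^2\, \nu^{2\gamma_t}(1-\nu)^{-2\gamma_t}\, \lv \mathfrak{I}(\rho_t,\gamma_t)\rv_{L_2^d(\rho_t)}^2,
\end{equation*}
whose time integral is exactly the error term in \eqref{eq:relation to Langevin FI}.

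The main obstacle I anticipate is not any single calculation but the regularity bookkeeping needed to legitimately differentiate $\KL(\rho_t|\mu_t)$ and perform the integration by parts on $\mathbb{R}^d$: one needs enough decay/integrability on $\rho_t,\mu_t$ and enough smoothness of $\log(\rho_t/\mu_t)$ and of the (nonlocal) vector field $w_t$ so that boundary terms vanish. I would rely on the existence/uniqueness and regularity theory for R-SVGF developed in Section~\ref{sec:exist}, together with standard $H^1$-type bounds for the WGF, so that all manipulations above are justified. Once that technical groundwork is in place, the two displays above combine directly to give \eqref{eq:relation to Langevin FI}.
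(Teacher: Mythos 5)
Your proposal is correct and follows essentially the same route as the paper's own proof: the same differentiation of $\KL(\rho_t|\mu_t)$ with integration by parts, the same Young's inequality $\langle p,q\rangle \le \tfrac14|p|^2+|q|^2$ producing the $-\tfrac34 I(\rho_t|\mu_t)$ term and hence the $\tfrac43$ constant, the same spectral expansion of $\lv(\mathcal{D}_{\nu,\rho_t}-I)\nabla\log\tfrac{\rho_t}{\pi}\rv^2_{L_2^d(\rho_t)}$ under the source condition, and the same two pointwise bounds $|\lambda_{i,t}-1|\le\lambda_{1,t}\vee1$ and $\nu\lambda_{i,t}^{\gamma_t}/((1-\nu)\lambda_{i,t}+\nu)\le\nu^{\gamma_t}(1-\nu)^{-\gamma_t}$. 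Your closing remark on the regularity bookkeeping is a fair observation — the paper similarly defers this by assuming existence/uniqueness (see the sentence preceding the theorem and Section~\ref{sec:exist}) — but it reflects the same stance rather than a different argument.
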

\begin{proof}[Proof of Theorem~\ref{thm:relation to Langevin FI}]
First note that we have the following upper bound on $ \frac{d}{dt}\KL(\rho_t|\mu_t)$: 
\begin{align*}
    &~\quad\frac{d}{dt}\KL(\rho_t|\mu_t)\\
    &=\frac{d}{dt} \int_{\mb{R}^d} \log \frac{\rho_t(x)}{\mu_t(x)} \rho_t(x)dx \\
    &=\int_{\mb{R}^d} \partial_t \rho_t(x) \log \frac{\rho_t(x)}{\mu_t(x)} dx+\int_{\mb{R}^d} \left( \frac{\partial_t \rho_t(x)}{\mu_t(x)}+\rho_t(x)\partial_t\left(\frac{1}{\mu_t(x)}\right) \right) \frac{\mu_t(x)}{\rho_t(x)}\rho_t(x)dx \\
    &=\int_{\mb{R}^d} \partial_t \rho_t(x) \log \frac{\rho_t(x)}{\mu_t(x)} dx+\int_{\mb{R}^d} \partial_t \rho_t(x) dx -\int_{\mb{R}^d} \partial_t \mu_t(x) \frac{\rho_t(x)}{\mu_t(x)} dx \\
    &=-\int_{\mb{R}^d} \leftlangle \iota_{k,\rho_t}\left( (1-\nu)\iota_{k,\rho_t}^*\iota_{k,\rho_t}+\nu I_d\right)^{-1}\iota_{k,\rho_t}^* \nabla \log \frac{\rho_t(x)}{\pi(x)},\nabla \log \frac{\rho_t(x)}{\mu_t(x)} \rightrangle \rho_t(x)dx \\
    &\quad + 0+\int_{\mb{R}^d} \leftlangle \mu_t(x)\nabla \log \frac{\mu_t(x)}{\pi(x)}, \nabla \left( \frac{\rho_t(x)}{\mu_t(x)} \right) \rightrangle dx \\
    &= -\int_{\mb{R}^d} \leftlangle \iota_{k,\rho_t}\left( (1-\nu)\iota_{k,\rho_t}^*\iota_{k,\rho_t}+\nu I_d\right)^{-1}\iota_{k,\rho_t}^* \nabla \log \frac{\rho_t(x)}{\pi(x)},\nabla \log \frac{\rho_t(x)}{\mu_t(x)} \rightrangle \rho_t(x)dx \\
    &\quad + \int_{\mb{R}^d} \leftlangle \nabla \log \frac{\rho_t(x)}{\mu_t(x)},\nabla \log \frac{\mu_t(x)}{\pi(x)} \rightrangle \rho_t(x)dx \\
    &=-\int_{\mb{R}^d} \leftlangle \nabla \log \frac{\rho_t(x)}{\mu_t(x)}, \nabla \log \frac{\rho_t(x)}{\pi(x)}-\nabla \log \frac{\mu_t(x)}{\pi(x)} \rightrangle \rho_t(x)dx\\
    &\quad- \int_{\mb{R}^d} \leftlangle \left(\iota_{k,\rho_t}\left( (1-\nu)\iota_{k,\rho_t}^*\iota_{k,\rho_t}+\nu I_d\right)^{-1}\iota_{k,\rho_t}^*-I\right) \nabla \log \frac{\rho_t(x)}{\pi(x)},\nabla \log \frac{\rho_t(x)}{\mu_t(x)} \rightrangle \rho_t(x)dx \\
    &\le -\int_{\mb{R}^d} \left| \nabla \log \frac{\rho_t(x)}{\mu_t(x)} \right|^2 \rho_t(x)dx\\
    &\quad +\frac{1}{4}\int_{\mb{R}^d} \left| \nabla \log \frac{\rho_t(x)}{\mu_t(x)} \right|^2 \rho_t(x)dx+\lv \left(\iota_{k,\rho_t}\left( (1-\nu)\iota_{k,\rho_t}^*\iota_{k,\rho_t}+\nu I_d\right)^{-1}\iota_{k,\rho_t}^*-I\right)\nabla \log \frac{\rho_t}{\pi} \rv_{L_2^d(\rho_t)}^2 \\
    &=-\frac{3}{4} I(\rho_t|\mu_t)+\sum_{i=1}^\infty \frac{(1-\lambda_{i,t})^2\nu^2}{\left((1-\nu)\lambda_{i,t}+\nu\right)^2} \left| \leftlangle \nabla \log \frac{\rho_t}{\pi},e_{i,t} \rightrangle_{L_2(\rho_t)}\right|^2.
\end{align*}
In the above calculation, the fourth equality follows by integration-by-parts. The inequality follows by Young's inequality for the inner product (i.e.,  $\langle p,q \rangle\le \frac{1}{2}c|p|^2+\frac{1}{2c}|q|^2$ for any $p,q\in \mb{R}^d$) and the last equality follows from the proof of Proposition~\ref{lem:relation between I and regularized Istein}. Since  $\nabla \log \frac{\rho_t}{\pi}=(\iota_{k,\rho_t}\iota_{k,\rho_t}^*)^{\gamma_t} h_t$ for some $\gamma_t\in(0,1/2]$ with $h_t:=\mathfrak{I}(\rho_t, \gamma_t)\in L_2^d(\rho_t)$, we obtain
\begin{align*}
    \frac{d}{dt}\KL(\rho_t|\mu_t)&\le -\frac{3}{4}I(\rho_t|\mu_t)+  \left(\max_{i} (1-\lambda_{i,t})^2 \right)  \left(\max_i \frac{\lambda_{i,t} ^{\gamma_t} \nu}{\textcolor{black}{(1-\nu)}\lambda_{i,t}+\nu}\right)^2 \lv h_t \rv_{L_2^d(\rho_t)}^2 \\
    &\le -\frac{3}{4}I(\rho_t|\mu_t)+
    \textcolor{black}{(\lambda_{1,t}\vee 1)^2} \nu^{2\gamma_t}\textcolor{black}{(1-\nu)^{-2\gamma_t}} \lv \mathfrak{I}(\rho_t, \gamma_t) \rv_{L_2^d(\rho_t)}^2 ,
\end{align*}
where the last inequality follows from the facts that 
$$
\max_i (1-\lambda_{i,t})^2\le (\max_i \lambda_{i,t}\vee 1)^2\quad\text{and}\quad \frac{\lambda_{i,t}^{\gamma_t}\nu}{(1-\nu)\lambda_{i,t}+\nu}\le \nu^{\gamma_t}(1-\nu)^{-\gamma_t}.
$$
Integrating from $t=0$ to $t=T$, we get
\begin{align*}
    \KL(\rho_T|\mu_T)-\KL(\rho_0|\mu_0)\le -\frac{3}{4} \int_0^T I(\rho_t|\mu_t) dt +
    \int_0^T  \textcolor{black}{(\lambda_{1,t}\vee 1)^2}\nu^{2\gamma_t}\textcolor{black}{(1-\nu)^{-2\gamma_t}} \lv \mathfrak{I}(\rho_t, \gamma_t) \rv_{L_2^d(\rho_t)}^2 dt.
\end{align*}
Since KL-divergence is non-negative, \eqref{eq:relation to Langevin FI} is proved.
\end{proof}
\textcolor{black}{
\begin{remark} The sequence $\{\lambda_{1,t}\}_{t\geq 0}$ in Theorem \ref{thm:relation to Langevin FI} is the largest eigenvalue of $\iota_{k,\rho_t}\iota_{k,\rho_t}^*$ for all $t\ge 0$. Note that it depends on both the kernel $k$ and the solution $(\rho_t)$ to \eqref{eq:regularized SVGD mf PDE}. If the kernel function $k$ is assumed to be uniformly bounded, according to Proposition \ref{prop:RKHS property}, $\lambda_{1,t}^2$ is uniformly upper bounded by  $\sup_{x\in \mb{R}^d} k(x,x) $ for all $t\ge 0$. For specific choices of kernel function, initial and  target distributions, it is required to track $\lambda_{1,t}$ for all $t\ge 0$. We will illustrate this further with an example in Remark \ref{gaussianexample}.
\end{remark}
}
\begin{remark}
The above result shows that as long as $\rho_0 = \mu_0$, i.e., both the WGF and the R-SVGF are initialized with the same density, and $\nu$ is chosen such that 
$$
T^{-1}\int_0^T \textcolor{black}{(\lambda_{1,t}\vee 1)^2}\nu^{2\gamma_t}(1-\nu)^{-2\gamma_t} \lv \mathfrak{I}(\rho_t, \gamma_t) \rv_{L_2^d(\rho_t)}^2 dt \to 0,
$$ 
the \emph{averaged} Fisher information along the path tends to zero. This shows the benefit of regularizing the SVGF -- it enables one to closely approximate the WGF with appropriate choice of the regularization parameters. 
\end{remark}

\subsubsection{Convergence to Equilibrium along the Fisher Information}
We now provide results on the convergence to equilibrium along the Fisher information for the R-SVGF. We re-emphasize here that our result provided below holds as long as the target \textcolor{black}{$\pi\in \mc{P}(\mb{R}^d)$}, without additional structural assumptions (via, say, functional inequalities).  

\begin{theorem}[\textbf{Convergence of Fisher information}]\label{thm:Fisher convergence continuous} Let $(\rho_t)$ be the solution to \eqref{eq:regularized SVGD mf PDE}. For any $t>0$, suppose there exists $\gamma_t\in(0,\frac{1}{2}]$ such that $\lv \mathfrak{I}(\rho_t, \gamma_t) \rv_{L_2^d(\rho_t)}<\infty.$ Then 
\begin{equation*}
    \int_0^\infty I(\rho_t|\pi) dt\le (1-\nu)\KL(\rho_0|\pi)+\int_0^\infty \nu^{2\gamma_t}(1-\nu)^{-2\gamma_t} \lv \mathfrak{I}({\rho}_t, \gamma_t) \rv_{L_2^d(\rho_t)}^2 dt, 
\end{equation*}
Furthermore, if $\int_0^\infty \nu^{2\gamma_t}(1-\nu)^{-2\gamma_t} \lv \mathfrak{I}(\rho_t, \gamma_t) \rv_{L_2^d(\rho_t)}^2 dt<\infty$, then we get $I(\textcolor{black}{\Bar{\rho}_t}|\pi)\to 0$ as $t\to \infty$, \textcolor{black}{where $\Bar{\rho}_t\coloneqq \frac{1}{t}\int_0^t \rho_s ds$ is the averaged density of $(\rho_s)_{0\le s\le t}$}.
\end{theorem}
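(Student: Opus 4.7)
The strategy is to compute the time derivative of $\KL(\rho_t|\pi)$ along the R-SVGF, recognise the dissipation as $-I_{\nu,\mathrm{Stein}}(\rho_t|\pi)$, and then convert this into a bound on $I(\rho_t|\pi)$ by reusing the pointwise inequality already derived inside the proof of Proposition~\ref{lem:relation between I and regularized Istein}. Concretely, I would first plug the PDE \eqref{eq:regularized SVGD mf PDE} into $\frac{d}{dt}\KL(\rho_t|\pi) = \int \log(\rho_t/\pi)\,\partial_t \rho_t\,dx$, integrate by parts exactly as in the proof of Theorem~\ref{thm:relation to Langevin FI} (but with $\mu_t$ replaced by the stationary target $\pi$), and use the spectral representation in Remark~\ref{prop:equivalent def} together with the equivalent form of the operator in Remark~\ref{remark:alternative op vf}(ii) to obtain
\[
\frac{d}{dt}\KL(\rho_t|\pi) = -\left\langle \nabla\log\frac{\rho_t}{\pi},\, \bigl((1-\nu)\iota_{k,\rho_t}\iota_{k,\rho_t}^*+\nu I\bigr)^{-1}\iota_{k,\rho_t}\iota_{k,\rho_t}^*\nabla\log\frac{\rho_t}{\pi}\right\rangle_{L_2^d(\rho_t)} = -I_{\nu,\mathrm{Stein}}(\rho_t|\pi).
\]

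Next, the chain of identities immediately following \eqref{Eq:equal1} in the proof of Proposition~\ref{lem:relation between I and regularized Istein} delivers the pointwise inequality
\[
(1-\nu)^{-1} I(\rho_t|\pi) - I_{\nu,\mathrm{Stein}}(\rho_t|\pi) \le (1-\nu)^{-1-2\gamma_t}\nu^{2\gamma_t}\,\lv \mathfrak{I}(\rho_t,\gamma_t)\rv_{L_2^d(\rho_t)}^2,
\]
which is valid without the smallness condition \eqref{eq:cond} (that condition was invoked only to produce the lower bound on $I_{\nu,\mathrm{Stein}}$ stated in Proposition~\ref{lem:relation between I and regularized Istein}). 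Multiplying through by $(1-\nu)$, substituting the dissipation identity from the first step, integrating from $0$ to $T$, and using non-negativity of $\KL(\rho_T|\pi)$ and monotone convergence as $T\to\infty$, yields the announced integral bound.

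For the last assertion I would invoke convexity of the map $\rho\mapsto I(\rho|\pi)$ in its first argument — which follows from the joint convexity of $(\rho,\nabla\rho)\mapsto |\nabla\rho|^2/\rho$, the other two terms being linear in $\rho$ — and apply Jensen's inequality to the average $\bar\rho_t = t^{-1}\int_0^t \rho_s\,ds$ to conclude $I(\bar\rho_t|\pi)\le t^{-1}\int_0^t I(\rho_s|\pi)\,ds$. Under the assumed finiteness of the regularization-error integral (and of $\KL(\rho_0|\pi)$), the first part yields $M := \int_0^\infty I(\rho_s|\pi)\,ds <\infty$, so $I(\bar\rho_t|\pi)\le M/t \to 0$ as $t\to\infty$. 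The bulk of the work here is algebraic rearrangement of ingredients already established; the only genuinely delicate point is analytic, namely justifying the time differentiation of $\KL(\rho_t|\pi)$ and the integration by parts using the regularity of $(\rho_t)$ provided by the existence theory in Section~\ref{sec:exist}.
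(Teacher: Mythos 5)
Your proposal is correct and follows essentially the same route as the paper: you use the dissipation identity $\tfrac{d}{dt}\KL(\rho_t|\pi)=-I_{\nu,\mathrm{Stein}}(\rho_t|\pi)$ from Proposition~\ref{cor:KL derivative}, combine it with the (condition-free) inequality $(1-\nu)^{-1}I(\rho_t|\pi)-I_{\nu,\mathrm{Stein}}(\rho_t|\pi)\le (1-\nu)^{-1-2\gamma_t}\nu^{2\gamma_t}\lv \mathfrak{I}(\rho_t,\gamma_t)\rv_{L_2^d(\rho_t)}^2$ from the proof of Proposition~\ref{lem:relation between I and regularized Istein}, integrate and discard the non-negative $\KL(\rho_T|\pi)$, and finish via convexity of $\rho\mapsto I(\rho|\pi)$ plus Jensen. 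Your explicit justification of that convexity (joint convexity of the perspective map $(\rho,\nabla\rho)\mapsto|\nabla\rho|^2/\rho$) is a welcome addition that the paper only asserts in passing.
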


Before proving the above theorem, we introduce a few intermediate results.

\begin{proposition}[Decay of the KL-divergence] \label{cor:KL derivative} For the solution $(\rho_t)_{t\ge 0}$ to the PDE \eqref{eq:regularized SVGD mf PDE}, it holds that
\textcolor{black}{
\begin{align}\label{eq:KL derivative}
    \frac{d}{dt}\KL(\rho_t|\pi)=-I_{\nu,Stein}(\rho_t|\pi),
\end{align}
}
and consequently
\textcolor{black}{
\begin{align}\label{eq:KL decay}
    \frac{d}{dt}\KL(\rho_t|\pi)\le 0.
\end{align}}
\end{proposition}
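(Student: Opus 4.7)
The plan is to differentiate $\KL(\rho_t|\pi)$ directly in time using the PDE \eqref{eq:regularized SVGD mf PDE}, and then re-express the resulting $L_2(\rho_t)$ inner product as an RKHS inner product via the adjoint $\iota_{k,\rho_t}^*$, matching the definition of $I_{\nu,\text{Stein}}(\rho_t|\pi)$ in \eqref{eq:regularized Fisher information}.

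First I would write
\begin{equation*}
\frac{d}{dt}\KL(\rho_t|\pi)=\int_{\mb{R}^d}\partial_t\rho_t(x)\log\frac{\rho_t(x)}{\pi(x)}\,dx + \int_{\mb{R}^d}\partial_t\rho_t(x)\,dx,
\end{equation*}
where the second term vanishes because $\int\partial_t\rho_t\,dx = \partial_t\int\rho_t\,dx = 0$ by mass conservation (which follows from the divergence form of \eqref{eq:regularized SVGD mf PDE}). Substituting the PDE into the first term and integrating by parts (discarding the boundary terms at infinity by the usual decay assumptions on $\rho_t$ and $\log(\rho_t/\pi)$) gives
\begin{equation*}
\frac{d}{dt}\KL(\rho_t|\pi) = -\int_{\mb{R}^d}\Big\langle \iota_{k,\rho_t}\bigl((1-\nu)\iota_{k,\rho_t}^*\iota_{k,\rho_t}+\nu I_d\bigr)^{-1}\iota_{k,\rho_t}^*\nabla\log\tfrac{\rho_t}{\pi}(x),\ \nabla\log\tfrac{\rho_t}{\pi}(x)\Big\rangle \rho_t(x)\,dx.
\end{equation*}

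Next I would recognize the outer integral as an $L_2^d(\rho_t)$ inner product and use the adjointness $\langle \iota_{k,\rho_t}f,g\rangle_{L_2^d(\rho_t)} = \langle f, \iota_{k,\rho_t}^* g\rangle_{\mc{H}_k^d}$ (Proposition~\ref{prop:RKHS property}(ii)) to move $\iota_{k,\rho_t}$ across the inner product. This yields
\begin{equation*}
\frac{d}{dt}\KL(\rho_t|\pi) = -\Big\langle \bigl((1-\nu)\iota_{k,\rho_t}^*\iota_{k,\rho_t}+\nu I_d\bigr)^{-1}\iota_{k,\rho_t}^*\nabla\log\tfrac{\rho_t}{\pi},\ \iota_{k,\rho_t}^*\nabla\log\tfrac{\rho_t}{\pi}\Big\rangle_{\mc{H}_k^d},
\end{equation*}
which is exactly $-I_{\nu,\text{Stein}}(\rho_t|\pi)$ by Definition~\ref{def:regularized Fisher information}, establishing \eqref{eq:KL derivative}. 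The monotonicity \eqref{eq:KL decay} is then immediate: the operator $(1-\nu)\iota_{k,\rho_t}^*\iota_{k,\rho_t}+\nu I_d$ is strictly positive on $\mc{H}_k^d$ (its inverse inherits positivity), so the pairing above is nonnegative.

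The only delicate step is the integration by parts, which requires integrability of $\rho_t$, $\log(\rho_t/\pi)$ and of the regularized velocity field against $\rho_t$ at infinity. These are standard consequences of the regularity of weak solutions to \eqref{eq:regularized SVGD mf PDE} established in Section~\ref{sec:exist}, together with boundedness of the operator $\mc{D}_{\nu,\rho_t}$ (whose norm is controlled by $\nu^{-1}\|\iota_{k,\rho_t}\iota_{k,\rho_t}^*\|_{\text{op}}$). No LSI or log-concavity hypothesis on $\pi$ is needed for either identity.
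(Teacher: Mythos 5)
Your proposal is correct and follows essentially the same route as the paper: differentiate the KL divergence, use mass conservation to drop the $\int \partial_t \rho_t$ term, integrate by parts to obtain the $L_2^d(\rho_t)$ inner product of $\nabla\log(\rho_t/\pi)$ with the regularized velocity field, move $\iota_{k,\rho_t}$ across via adjointness to land in $\mathcal{H}_k^d$, and finish by positivity of $(1-\nu)\iota_{k,\rho_t}^*\iota_{k,\rho_t}+\nu I_d$. The paper presents the positivity step as establishing \eqref{eq:KL decay} first and then reads off \eqref{eq:KL derivative} via the spectral representation \eqref{eq:SD of Tk}, but this is a presentational difference, not a substantive one.
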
 
\begin{proof}[Proof of Proposition~\ref{cor:KL derivative}]
Note that
  \begin{align*}
      \frac{d}{dt}\KL(\rho_t|\pi) &=\frac{d}{dt} \int_{\mb{R}^d} \rho_t \log \frac{\rho_t}{\pi} dx \\
      &=\int_{\mb{R}^d} \partial_t \rho_t \log \frac{\rho_t}{\pi} dx+\int_{\mb{R}^d} \partial_t \rho_t dx \\
      &=-\int_{\mb{R}^d} \leftlangle \nabla \log\frac{\rho_t}{\pi}(x) , \iota_{k,\rho_t}\left( (1-\nu)\iota_{k,\rho_t}^*\iota_{k,\rho_t}+\nu I_d\right)^{-1}\iota_{k,\rho_t}^*\left(\nabla \log \frac{\rho_t}{\pi}\right)(x) \rightrangle \rho_t(x) dx+0\\
      &=-\leftlangle \nabla \log \frac{\rho_t}{\pi}, \iota_{k,\rho_t}\left( (1-\nu)\iota_{k,\rho_t}^*\iota_{k,\rho_t}+\nu I_d\right)^{-1}\iota_{k,\rho_t}^*\left(\nabla \log \frac{\rho_t}{\pi}\right)   \rightrangle_{L_2^d(\rho_t)}\\
      &=-\leftlangle \iota_{k,\rho_t}^*\nabla \log \frac{\rho_t}{\pi} , \left( (1-\nu)\iota_{k,\rho_t}^*\iota_{k,\rho_t}+\nu I_d\right)^{-1}\iota_{k,\rho_t}^* \left(\nabla \log \frac{\rho_t}{\pi}\right)  \rightrangle_{\mc{H}_k^d}.
  \end{align*}
 It suffices to show that for all $\nu > 0$, $\left( (1-\nu)\iota_{k,\rho_t}^*\iota_{k,\rho_t}+\nu I_d\right)^{-1}$ is a positive operator from $\mc{H}_k^d$ to $\mc{H}_k^d$. By the definition of $\iota_{k,\rho_t}$, for any $f\in \mc{H}_k^d$ with $\lv f \rv_{\mc{H}_k^d}=1$,
 \begin{align*}
     \langle f, \left( (1-\nu)\iota_{k,\rho_t}^*\iota_{k,\rho_t}+\nu I_d\right)f \rangle_{\mc{H}_k^d}&=(1-\nu)\langle \iota_{k,\rho_t}f, \iota_{k,\rho_t}f \rangle_{L_2^d(\rho_t)}+\nu \lv f \rv_{\mc{H}_k^d}^2 \\
     &=(1-\nu)\lv \iota_{k,\rho_t} f \rv_{L_2^d(\rho_t)}^2+\nu > 0
 \end{align*}
 for all $\nu> 0$. Therefore, $ (1-\nu)\iota_{k,\rho_t}^*\iota_{k,\rho_t}+\nu I_d$ is a positive operator from $\mc{H}_k^d$ to $\mc{H}_k^d$. So is the operator $\left( (1-\nu)\iota_{k,\rho_t}^*\iota_{k,\rho_t}+\nu I_d\right)^{-1}$. Hence, we have~\eqref{eq:KL decay}. The claim in~\eqref{eq:KL derivative} follows directly from~\eqref{eq:KL decay}, \eqref{eq:SD of Tk} and Definition \ref{def:regularized Fisher information}.
 \end{proof}



\noindent  We now provide the proof of Theorem~\ref{thm:Fisher convergence continuous}. \textcolor{black}{For the proof, we recall that we use  $(\lambda_{i,t},e_{i,t})_{i=1}^\infty$ to denote the set of eigenvalues and eigenvectors of the operator $\iota_{k,\rho_t}\iota_{k,\rho_t}^*$ for any $t\ge 0$, with $\lambda_{1,t}\ge \lambda_{2,t}\ge \cdots >0$.}

\begin{proof}[Proof of Theorem~\ref{thm:Fisher convergence continuous}]
From Proposition \ref{cor:KL derivative} and  \eqref{Eq:equal1}, we know that 
\begin{align*}
    \frac{d}{dt}\KL(\rho_t|\pi)
    &=-(1-\nu)^{-1}I(\rho_t|\pi)+\sum_{i=1}^\infty\frac{(1-\nu)^{-1}\nu}{(1-\nu)\lambda_{i,t}+\nu} \left|\leftlangle \nabla \log \frac{\rho_t}{\pi}, e_{i,t} \rightrangle_{L_2(\rho_{t})}\right|^2,
\end{align*}
where $\nabla \log \frac{\rho_t}{\pi}=(\iota_{k,\rho_t}\iota_{k,\rho_t}^*)^{\gamma_t}h_t$ for some $\gamma_t\in (0,\frac{1}{2}]$ with $h_t:= \mathfrak{I}(\rho_t, \gamma_t)\in L_2^d(\rho_t)$.
Therefore, we have
\begin{align*}
   &\frac{d}{dt}\KL(\rho_t|\pi)\\
   =&-(1-\nu)^{-1}I(\rho_t|\pi)+\sum_{i=1}^\infty \frac{(1-\nu)^{-1}\nu}{(1-\nu)\lambda_{i,t}+\nu} \left| \leftlangle (\iota_{k,\rho_t}\iota_{k,\rho_t}^*)^{\gamma_t}h_t , e_{i,t} \rightrangle_{L_2(\rho_t)}\right|^2  \\
   =&-(1-\nu)^{-1}I(\rho_t|\pi)+\sum_{i=1}^\infty \frac{(1-\nu)^{-1}\nu\lambda_{i,t}^{2\gamma_t}}{(1-\nu)\lambda_{i,t}+\nu}  \left| \leftlangle h_t , e_{i,t} \rightrangle_{L_2(\rho_t)}\right|^2\\
   =&\sum_{i=1}^\infty (1-\nu)^{-1-2\gamma_t}\nu^{2\gamma_t}\left(\frac{(1-\nu)\lambda_{i,t}}{(1-\nu)\lambda_{i,t}+\nu}\right)^{2\gamma_t}\left(\frac{\nu}{(1-\nu)\lambda_{i,t}+\nu}\right)^{1-2\gamma_t} \left|\leftlangle h_t , e_{i,t} \rightrangle_{L_2(\rho_t)}\right|^2\\
   &\qquad -(1-\nu)^{-1}I(\rho_t|\pi)\\
   \le& -(1-\nu)^{-1}I(\rho_t|\pi)+(1-\nu)^{-1-2\gamma_t}\nu^{2\gamma_t}  \lv \mathfrak{I}(\rho_t, \gamma_t) \rv_{L_2^d(\rho_t)}^2.
\end{align*}
The result follows by integrating over $t$ and noting that the KL-divergence is non-negative. Now, with $\rho_t$ denoting the solution to (9), we have that $I(\rho_t|\pi)$ is non-negative and continuous in $t$. The claim of convergence \textcolor{black}{follows from the convexity of $\rho \mapsto I(\rho|\mu)$}.
\end{proof}

\subsubsection{Convergence in KL-divergence under LSI}
While the previous result was provided for any the target density $\pi \in \mathcal{P}(\mathbb{R}^d)$, in this section, we provide improved convergence results of the R-SVGF under the assumption that the $\pi$ further satisfies the Log-Sobolev Inequality. Recall that we say that $\pi$ satisfies the Log-Sobolev inequality with constant $\lambda>0$ if for all $\mu\in \mc{P}(\mb{R}^d)$: 
\begin{equation*}
    \KL(\mu|\pi)\le \frac{1}{2\lambda} I(\mu|\pi).
\end{equation*}

Our first result below is a stronger version of the result in Theorem~\ref{thm:relation to Langevin FI}, under the assumption that the target $\pi$ satisfies LSI and Assumption~\ref{conjecture:LSI along langevin} on the initialization of the WGF. 
\begin{assumption}\label{conjecture:LSI along langevin}
The initial density $\mu_0$ is chosen so that the solution $(\mu_t)$ to \eqref{eq:Langevin FPE} also satisfies LSI with parameter $\lambda$, for all $t>0$.
\end{assumption}

Under the stronger assumption that the target density $\pi$ is strongly log-concave, following the arguments in~\cite[Theorem 8]{vempala2019rapid}, it is easy to show that Assumption~\ref{conjecture:LSI along langevin} is satisfied as long as $\mu_0$ is chosen such that it satisfies LSI. We conjecture that the same holds true even when the target density satisfies LSI and additional mild smoothness assumptions (i.e., LSI is preserved along the trajectory as long as the initial density $\mu_0$ satisfies LSI, presumably with additional milder assumptions). However, a proof of this conjecture has eluded us thus far.

\begin{theorem}[Relation to the WGF under LSI]\label{thm:relation to Langevin} Assume $\pi$ satisfies the log-Sobolev inequality with parameter $\lambda$. Let $(\rho_t)$ be the solution to \eqref{eq:regularized SVGD mf PDE}. Let $(\mu_t)$ be the solution to the WGF, defined in~\eqref{eq:Langevin FPE}, with $\mu_0$ satisfying Assumption~\ref{conjecture:LSI along langevin}. For any $t>0$, suppose there exists $\gamma_t\in(0,\frac{1}{2}]$ such that $ \lv \mathfrak{I}(\rho_t, \gamma_t) \rv_{L_2^d(\rho_t)}<\infty.$ Then, for any $T\in (0,\infty)$, we have
\begin{align}\label{eq:relation to Langevin}
    \KL(\rho_T|\mu_T)\le e^{-3\lambda T/2 }\KL(\rho_0|\mu_0)+
     \int_0^T \textcolor{black}{(\lambda_{1,t}\vee 1)^2}\nu^{2\gamma_t}\textcolor{black}{(1-\nu)^{-2\gamma_t}} e^{-3\lambda(T-t)/2}  \lv \mathfrak{I}(\rho_t, \gamma_t) \rv_{L_2^d(\rho_t)}^2 dt,
\end{align}
where $\lambda_{1,t}$ is the largest eigenvalue of $\iota_{k,\rho_t}\iota^*_{k,\rho_t}$ for all $t\ge 0$.
\end{theorem}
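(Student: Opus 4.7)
The plan is to recycle the differential inequality established in the proof of Theorem~\ref{thm:relation to Langevin FI} and convert it from an inequality in the Fisher information $I(\rho_t|\mu_t)$ into an inequality in the KL divergence $\KL(\rho_t|\mu_t)$ by invoking the LSI assumption on $\mu_t$, after which Gr\"onwall's inequality delivers the bound directly.

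First I would repeat (or quote) the computation from the proof of Theorem~\ref{thm:relation to Langevin FI}. Expanding $\tfrac{d}{dt}\KL(\rho_t|\mu_t)$, using integration by parts against the two flows \eqref{eq:regularized SVGD mf PDE} and \eqref{eq:Langevin FPE}, splitting the resulting $\nabla\log(\rho_t/\pi)$ term as $\nabla\log(\rho_t/\mu_t)+\nabla\log(\mu_t/\pi)$, and applying Young's inequality $\langle p,q\rangle\le \tfrac{1}{4}|p|^2+|q|^2$, one obtains
\begin{align*}
\frac{d}{dt}\KL(\rho_t|\mu_t)\le -\frac{3}{4}I(\rho_t|\mu_t)+(\lambda_{1,t}\vee 1)^2\,\nu^{2\gamma_t}(1-\nu)^{-2\gamma_t}\lv \mathfrak{I}(\rho_t,\gamma_t)\rv_{L_2^d(\rho_t)}^2,
\end{align*}
exactly as in the proof of Theorem~\ref{thm:relation to Langevin FI}; the spectral estimate on the residual operator $\iota_{k,\rho_t}((1-\nu)\iota^*_{k,\rho_t}\iota_{k,\rho_t}+\nu I_d)^{-1}\iota^*_{k,\rho_t}-I$ is unchanged here.

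Next, by Assumption~\ref{conjecture:LSI along langevin}, $\mu_t$ satisfies the log-Sobolev inequality with parameter $\lambda$ for every $t\ge 0$, and $\rho_t$ is a valid test density, so
\begin{align*}
I(\rho_t|\mu_t)\ge 2\lambda \,\KL(\rho_t|\mu_t).
\end{align*}
Substituting this into the previous inequality yields the linear differential inequality
\begin{align*}
\frac{d}{dt}\KL(\rho_t|\mu_t)\le -\frac{3\lambda}{2}\KL(\rho_t|\mu_t)+g(t),\qquad g(t):=(\lambda_{1,t}\vee 1)^2\nu^{2\gamma_t}(1-\nu)^{-2\gamma_t}\lv \mathfrak{I}(\rho_t,\gamma_t)\rv_{L_2^d(\rho_t)}^2.
\end{align*}

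Finally, I would apply Gr\"onwall's inequality (or, equivalently, multiply by the integrating factor $e^{3\lambda t/2}$ and integrate from $0$ to $T$) to conclude
\begin{align*}
\KL(\rho_T|\mu_T)\le e^{-3\lambda T/2}\KL(\rho_0|\mu_0)+\int_0^T e^{-3\lambda(T-t)/2}g(t)\,dt,
\end{align*}
which is exactly \eqref{eq:relation to Langevin}. There is no real analytic obstacle left once Theorem~\ref{thm:relation to Langevin FI}'s computation is in hand; the only point requiring care is that LSI is being used with $\mu_t$ as the reference measure and $\rho_t$ as the test density, which is precisely the form guaranteed by Assumption~\ref{conjecture:LSI along langevin}. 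The rate constant $3\lambda/2$ (rather than $2\lambda$) arises because the Young's inequality step in Theorem~\ref{thm:relation to Langevin FI} absorbs a quarter of the dissipation into controlling the kernel-approximation error, leaving $3/4$ of the Fisher information to feed into LSI.
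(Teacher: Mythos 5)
Your proof is correct and follows precisely the same route as the paper: reuse the differential inequality from the proof of Theorem~\ref{thm:relation to Langevin FI}, invoke Assumption~\ref{conjecture:LSI along langevin} to replace $-\tfrac{3}{4}I(\rho_t|\mu_t)$ by $-\tfrac{3\lambda}{2}\KL(\rho_t|\mu_t)$, and finish with Gr\"onwall. The remarks you add about where the $3\lambda/2$ rate comes from and why LSI is used with $\mu_t$ as reference measure are accurate and helpful, but not a departure from the paper's argument.
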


\begin{proof}[Proof of Theorem~\ref{thm:relation to Langevin}]
\label{pf:relation to Langevin} 
Following the same arguments as in the proof of Theorem \ref{thm:relation to Langevin FI}, we obtain that for any $t>0$, 
\begin{align*}
    \frac{d}{dt}\KL(\rho_t|\mu_t)&\le -\frac{3}{4} I(\rho_t|\mu_t)+\sum_{i=1}^\infty \frac{(1-\lambda_i)^2\nu^2}{\left((1-\nu)\lambda_i+\nu\right)^2} \left| \leftlangle \nabla \log \frac{\rho_t}{\pi},e_i \rightrangle_{L_2(\rho_t)}\right|^2\\
    &\le -\frac{3}{4}I(\rho_t|\mu_t)+ 
    \textcolor{black}{(\lambda_{1,t}\vee 1)^2}\nu^{2\gamma_t}\textcolor{black}{(1-\nu)^{-2\gamma_t}} \nu^{2\gamma_t}  \lv \mathfrak{I}(\rho_t, \gamma_t) \rv_{L_2^d(\rho_t)}^2.
\end{align*}
Hence, under Assumption~\ref{conjecture:LSI along langevin} we obtain
\begin{align*}
    \frac{d}{dt}\KL(\rho_t|\mu_t)\le -\frac{3\lambda}{2}\KL(\rho_t|\mu_t)+
    \textcolor{black}{(\lambda_{1,t}\vee 1)^2}\nu^{2\gamma_t}\textcolor{black}{(1-\nu)^{-2\gamma_t}}  \lv \mathfrak{I}(\rho_t, \gamma_t) \rv_{L_2^d(\rho_t)}^2.
\end{align*}
Finally, \eqref{eq:relation to Langevin} follows from the Gronwall's inequality.
\end{proof}

Our second result is a stronger version of the result in Theorem~\ref{thm:Fisher convergence continuous}, under the assumption that the target distribution $\pi$ satisfies LSI. We remark that convergence to equilibrium of the related WGF under various functional inequalities is a well-studied topic. We refer the interested reader to~\cite{bakry2014analysis} for a detailed overview.

\begin{theorem}[Decay of KL-divergence under LSI]\label{thm:decay of KL under LSI} Assume that $\pi$ satisfies the log-Sobolev inequality with $\lambda>0$.  Let $(\rho_t)$ be the solution to \eqref{eq:regularized SVGD mf PDE}. For any $t>0$, suppose there exists $\gamma_t\in(0,\frac{1}{2}]$ such that $ \lv \mathfrak{I}(\rho_t, \gamma_t) \rv_{L_2^d(\rho_t)}<\infty.$ Then, 
for any $T\in (0,\infty)$, we have
\begin{align*}
    \KL(\rho_T|\pi)\le e^{-2(1-\nu)^{-1}\lambda T}\KL(\rho_0|\pi)+ \int_0^T \nu^{2\gamma_t}(1-\nu)^{-2\gamma_t-1}  \lv \mathfrak{I}(\rho_t, \gamma_t) \rv_{L_2^d(\rho_t)}^2 e^{2(1-\nu)^{-1}\lambda(t-T)}dt.
\end{align*}
\end{theorem}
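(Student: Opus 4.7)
The plan is to mimic the proof of Theorem~\ref{thm:Fisher convergence continuous} but substitute the Log-Sobolev inequality into the dissipation term, and then close the argument by a standard Gronwall estimate.

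First, I would reuse the key differential inequality already established (essentially) in the proof of Theorem~\ref{thm:Fisher convergence continuous}. Specifically, starting from Proposition~\ref{cor:KL derivative} which gives $\frac{d}{dt}\KL(\rho_t|\pi) = -I_{\nu,\text{Stein}}(\rho_t|\pi)$, expanding the regularized Stein-Fisher information via its spectral representation, and using the source condition $\nabla \log(\rho_t/\pi) = (\iota_{k,\rho_t}\iota_{k,\rho_t}^*)^{\gamma_t} h_t$ with $h_t = \mathfrak{I}(\rho_t,\gamma_t)$, I obtain (exactly as in the earlier proof)
\begin{align*}
\frac{d}{dt}\KL(\rho_t|\pi) \le -(1-\nu)^{-1}\, I(\rho_t|\pi) + (1-\nu)^{-1-2\gamma_t}\nu^{2\gamma_t}\, \lv \mathfrak{I}(\rho_t,\gamma_t)\rv_{L_2^d(\rho_t)}^2.
\end{align*}

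The next step is to invoke the Log-Sobolev assumption on $\pi$, which yields $I(\rho_t|\pi) \ge 2\lambda \KL(\rho_t|\pi)$. Substituting this lower bound into the dissipation term of the above inequality gives the closed-form differential inequality
\begin{align*}
\frac{d}{dt}\KL(\rho_t|\pi) \le -2\lambda(1-\nu)^{-1}\KL(\rho_t|\pi) + \nu^{2\gamma_t}(1-\nu)^{-1-2\gamma_t}\, \lv \mathfrak{I}(\rho_t,\gamma_t)\rv_{L_2^d(\rho_t)}^2.
\end{align*}

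Finally, applying Gronwall's inequality with the integrating factor $e^{2\lambda(1-\nu)^{-1}t}$ over $[0,T]$ gives exactly the claimed bound. There is essentially no obstacle here beyond the bookkeeping carried out in Theorem~\ref{thm:Fisher convergence continuous}; the only subtle point is that the source-condition exponent $\gamma_t$ is allowed to be time-dependent, so I would simply keep $\gamma_t$ inside the integral from the start rather than trying to pull any constants out. Note that the argument does not require $\pi$ to be log-concave, only that LSI holds, and no extra assumption on $\mu_0$ is needed (in contrast to Theorem~\ref{thm:relation to Langevin}) since we are comparing $\rho_t$ directly to the stationary target $\pi$, not to the Langevin flow $\mu_t$.
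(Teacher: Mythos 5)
Your proposal is correct and matches the paper's proof essentially line for line: both start from the differential inequality established in the proof of Theorem~\ref{thm:Fisher convergence continuous}, substitute the Log-Sobolev bound $I(\rho_t|\pi)\ge 2\lambda\,\KL(\rho_t|\pi)$ into the dissipation term, and close with Gronwall's inequality. Your remarks about keeping $\gamma_t$ inside the integral and not needing Assumption~\ref{conjecture:LSI along langevin} are also consistent with the paper's treatment.
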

\begin{proof}[Proof of Theorem~\ref{thm:decay of KL under LSI}]
\label{pf:decay of KL under LSI}From the proof of Theorem \ref{thm:Fisher convergence continuous}, we have
\begin{align*}
    \frac{d}{dt}\KL(\rho_t|\pi)&\le -(1-\nu)^{-1}I(\rho_t|\pi)+(1-\nu)^{-1-2\gamma_t}\nu^{2\gamma_t}  \lv \mathfrak{I}(\rho_t, \gamma_t) \rv_{L_2^d(\rho_t)}^2\\
    &\le -2(1-\nu)^{-1}\lambda \KL(\rho_t|\pi)+(1-\nu)^{-1-2\gamma_t}\nu^{2\gamma_t}  \lv \mathfrak{I}(\rho_t, \gamma_t) \rv_{L_2^d(\rho_t)}^2,
\end{align*}
where the last inequality follows the log-Sobolev inequality. The final statement follows from Gronwall's inequality.
\end{proof}
\begin{remark}[Exponential Decay of KL-divergence]\label{rem:exponential decay of KL} Yet another way to state the above result is via the introducing the following  regularized Stein-LSI, similar to the introduction of Stein-LSI in~\cite{duncan2019geometry}. However, the introduction of Stein-LSI is quite restrictive in the sense that it couples assumptions on the target and the chosen RKHS. This makes verifying the conditions more delicate. To counter this effect, we now introduce the notion of Regularized Stein-LSI. We say that $\pi\in\mc{P}(\mb{R}^d)$ satisfies the regularized Stein log-Sobolev inequality with constant $\lambda>0$ if for all $\mu\in \mc{P}(\mb{R}^d)$: 
\begin{equation}\label{eq:regularized Stein log-Sobolev}
    \KL(\mu|\pi)\le \frac{1}{2\lambda} I_{\nu,Stein}(\mu|\pi).
\end{equation}
An advantage of the above condition is that, as $\nu \to 0$ the regularized Stein-LSI inequality becomes equivalent to the standard LSI inequality. Under the condition that the target density $\pi$ satisfies~\eqref{eq:regularized Stein log-Sobolev}, and letting $(\rho_t)$ be the solution to \eqref{eq:regularized SVGD mf PDE}, it holds that
\begin{align}\label{eq:KL exponential decay}
    \KL(\rho_t|\pi) \le e^{-2\lambda t} \KL(\rho_0|\pi).
\end{align}
The proof of \eqref{eq:KL exponential decay} follows immediately from Proposition \ref{cor:KL derivative} and \eqref{eq:regularized Stein log-Sobolev}. 
\end{remark}

\subsection{Convergence results for Time-discretized R-SVGF}

In this section, we analyze the convergence properties of the time-discretized R-SVGF in \eqref{eq:pop limit regularized SVGD}. To do so, we require the following additional assumptions. 
\begin{myassump}{A2}\label{ass:decay of KL under pop limit}
The following conditions hold: 
\begin{itemize}
    \item [(1)] There exists a constant $B>0$ such that $\lv \nabla_1 k(x,\cdot) \rv_{\mc{H}_k^d}\le B$ for all $x\in \mb{R}^d$.
    \item [(2)] The potential function $V:\mb{R}^d\to \mb{R}$ is twice continuously differentiable and gradient Lipschitz with parameter $L$.
    \item [(3)] Along the time discretization \eqref{eq:pop limit regularized SVGD},  $I(\rho^n|\pi)<\infty$ for all fixed $n\ge 0$.
\end{itemize}
\end{myassump}
The smoothness assumptions in points (1) and (2) of Assumption~\ref{ass:decay of KL under pop limit} are commonly required in analyzing any discrete-time algorithms, albeit deterministic~\cite{korba2020non, salim2022convergence} or randomized~\cite{vempala2019rapid, chewi2021analysis, balasubramanian2022towards}. While it could be relaxed (see, for example,~\cite{sun2022note}), in general it is impossible to completely avoid them as in the case of analyzing the corresponding flows. Before stating our results, we also introduce some convenient notations. We let 
\begin{align}\label{eq:sandr}
\mathfrak{S}_n:=\left(\sup_{i} \frac{{\lambda_i^{(n)}}^{1+2\gamma_n}}{\left((1-\nu_{n+1}){\lambda_i^{(n)}}+\nu_{n+1}\right)^2}\right) \quad~\text{and}\quad R_n:=\lv  \mathfrak{I}(\rho^n, \gamma_n)  \rv_{L_2^d(\rho^n)},
\end{align}
where the sequence $\{\lambda_i^{(n)}\}_{i\ge 1}$ corresponds to the positive eigenvalues of the operator $\iota_{k,\rho^n}\iota_{k,\rho^n}^*$ in the order of decreasing values for all $n\ge 0$.

\begin{theorem}[Convergence in Fisher Divergence]\label{thm:decay of Fisher along pop limit} Suppose Assumption \ref{ass:decay of KL under pop limit} holds. Let $(\rho^n)$ be the time discretization of the R-SVGF described in \eqref{eq:pop limit regularized SVGD} with initial condition $\rho^0=\rho_0$ such that $\KL(\rho_0|\pi)\le R$. For each $n$, suppose that $\nu_{n+1}$ and the step-size $h_{n+1}$ are chosen such that, 
\begin{equation}\label{eq:time nu constraint decay of Fisher pop limit}
\begin{aligned}
    h_{n+1}&< \min\left\{
    \frac{1-\nu_{n+1}}{L}, \frac{\alpha-1}{  \alpha B R_n \sqrt{ \mathfrak{S}_n}}
    \right\},
\end{aligned}
\end{equation}
where $\alpha\in(1,2]$ is some constant, and suppose that there exists $\gamma_n\in(0,\frac{1}{2}]$, such that $\mathfrak{I}(\rho^n, \gamma_n) \in L_2^d(\rho^n)$. Then,
\begin{equation}\label{eq:decay of Fisher along pop limit thm}
    \sum_{n=0}^\infty \frac{h_{n+1}}{2(1-\nu_{n+1})}I(\rho^n|\pi)\le \sum_{n=0}^\infty \nu_{n+1}^{2\gamma_n}(1-\nu_{n+1})^{-2\gamma_n-1} h_{n+1} \left(1+\frac{1}{2}\nu_{n+1}^{-1}\alpha^2 B^2 h_{n+1} \right) R_n^2 +R.
    \end{equation}
\end{theorem}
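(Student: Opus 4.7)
The plan is to prove a one-step descent inequality for $\KL(\rho^n|\pi)$ along the push-forward $T_{n+1}(x)=x-h_{n+1}\phi_{n+1}(x)$, where $\phi_{n+1}=\mc{D}_{\nu_{n+1},\rho^n}\nabla\log(\rho^n/\pi)$, and then telescope over $n\ge 0$. Starting from the change-of-variables identity
$$\KL(\rho^{n+1}|\pi)-\KL(\rho^n|\pi)=\int\bigl[V(T_{n+1}(x))-V(x)-\log\det\nabla T_{n+1}(x)\bigr]\rho^n(x)\,dx,$$
I bound $V(T_{n+1}(x))-V(x)$ via gradient Lipschitzness of $V$ (Assumption~\ref{ass:decay of KL under pop limit}(2)) and $-\log\det(I-h_{n+1}\nabla\phi_{n+1}(x))$ via the eigenvalue-wise Taylor bound $-\log(1-t)\le t+\tfrac{\alpha^2}{2}t^2$ for $|t|\le(\alpha-1)/\alpha$. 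The step-size condition~\eqref{eq:time nu constraint decay of Fisher pop limit} is tuned precisely so that $\lv h_{n+1}\nabla\phi_{n+1}(x)\rv_2\le(\alpha-1)/\alpha$ uniformly in $x$, making this expansion valid.

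Integration by parts converts the linear-in-$h_{n+1}$ terms into $-h_{n+1}\int\langle\phi_{n+1},\nabla\log(\rho^n/\pi)\rangle\rho^n\,dx=-h_{n+1}\,I_{\nu_{n+1},Stein}(\rho^n|\pi)$, using the alternate expression $\mc{D}_{\nu,\rho}=\iota_{k,\rho}((1-\nu)\iota^*_{k,\rho}\iota_{k,\rho}+\nu I_d)^{-1}\iota^*_{k,\rho}$ from Remark~\ref{remark:alternative op vf}(ii) together with Definition~\ref{def:regularized Fisher information}. For the quadratic-in-$h_{n+1}$ remainders, I rely on RKHS spectral analysis: writing $\phi_{n+1}=\iota_{k,\rho^n}\tilde\phi_{n+1}$ and expanding $\nabla\log(\rho^n/\pi)=(\iota_{k,\rho^n}\iota^*_{k,\rho^n})^{\gamma_n}\mathfrak{I}(\rho^n,\gamma_n)$ in the eigenbasis of $\iota_{k,\rho^n}\iota^*_{k,\rho^n}$ produces $\lv\tilde\phi_{n+1}\rv^2_{\mc{H}_k^d}\le\mathfrak{S}_nR_n^2$ and $\lv\phi_{n+1}\rv^2_{L_2^d(\rho^n)}\le(1-\nu_{n+1})^{-2}I(\rho^n|\pi)$. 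The reproducing property combined with Assumption~\ref{ass:decay of KL under pop limit}(1) yields $\int\lv\nabla\phi_{n+1}(x)\rv_{HS}^2\rho^n(x)\,dx\le B^2\mathfrak{S}_nR_n^2$.

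Using $h_{n+1}L\le 1-\nu_{n+1}$ to convert the $V$-quadratic remainder into $\tfrac{h_{n+1}}{2(1-\nu_{n+1})}I(\rho^n|\pi)$, and the elementary bound $\mathfrak{S}_n\le\nu_{n+1}^{2\gamma_n-1}(1-\nu_{n+1})^{-1-2\gamma_n}$ (which follows from $(1-\nu)\lambda+\nu\ge\max\{(1-\nu)\lambda,\nu\}$), the descent inequality becomes
$$\KL(\rho^{n+1}|\pi)-\KL(\rho^n|\pi)\le -h_{n+1}I_{\nu_{n+1},Stein}(\rho^n|\pi)+\frac{h_{n+1}}{2(1-\nu_{n+1})}I(\rho^n|\pi)+\frac{\alpha^2 h_{n+1}^2 B^2 \nu_{n+1}^{2\gamma_n-1}}{2(1-\nu_{n+1})^{1+2\gamma_n}}R_n^2.$$
Applying the lower bound $I_{\nu_{n+1},Stein}(\rho^n|\pi)\ge(1-\nu_{n+1})^{-1}I(\rho^n|\pi)-\nu_{n+1}^{2\gamma_n}(1-\nu_{n+1})^{-1-2\gamma_n}R_n^2$ (a direct consequence of the spectral computation in the proof of Proposition~\ref{lem:relation between I and regularized Istein}) cancels one copy of $\tfrac{h_{n+1}}{1-\nu_{n+1}}I(\rho^n|\pi)$ against the positive $V$-remainder, leaving $\tfrac{h_{n+1}}{2(1-\nu_{n+1})}I(\rho^n|\pi)$ on the left and precisely the two error terms of~\eqref{eq:decay of Fisher along pop limit thm} on the right. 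Telescoping over $n\ge 0$ and using $\KL(\rho^0|\pi)\le R$ and $\KL(\rho^{n+1}|\pi)\ge 0$ yields the claimed bound.

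The main obstacle is the sharp bookkeeping of constants from the Taylor expansion of $\log\det$: the step-size condition~\eqref{eq:time nu constraint decay of Fisher pop limit} is designed so that the $\alpha$-factor from $-\log(1-t)\le t+\tfrac{\alpha^2}{2}t^2$ and the Hilbert--Schmidt bound on $\nabla\phi_{n+1}$ combine into exactly the $\tfrac{1}{2}\alpha^2 B^2$ coefficient in~\eqref{eq:decay of Fisher along pop limit thm}. A secondary technical step is the rigorous justification of Stein's integration-by-parts identity $\int\text{tr}(\nabla\phi_{n+1})\rho^n\,dx=-\int\langle\phi_{n+1},\nabla\log\rho^n\rangle\rho^n\,dx$ together with the interchange of sum and integral in the spectral representation, which are afforded by Assumption~\ref{ass:decay of KL under pop limit} and the hypothesis $I(\rho^n|\pi)<\infty$.
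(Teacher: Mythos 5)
Your proof is correct and arrives at the same final inequality, but by a genuinely different route from the paper's. The paper follows the Wasserstein-geometric approach of~\cite{salim2022convergence}: it sets $\psi(t)=\KL\bigl((\phi_t)_{\#}\rho^n\,|\,\pi\bigr)$, Taylor-expands $\psi(h_{n+1})-\psi(0)=\psi'(0)h_{n+1}+\int_0^{h_{n+1}}(h_{n+1}-t)\psi''(t)\,dt$, identifies $\psi'(0)$ with $-I_{\nu_{n+1},Stein}(\rho^n|\pi)$, and evaluates $\psi''(t)$ via the Wasserstein Hessian formula $\langle v,\text{Hess}_{\KL(\cdot|\pi)}(\mu)v\rangle_{L_2^d(\mu)}=\langle v,\nabla^2 V\,v\rangle_{L_2^d(\mu)}+\mb{E}_\mu\bigl[\lv\nabla v\rv_{HS}^2\bigr]$. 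There the $\alpha$ factor enters through $\lv(\nabla\phi_t)^{-1}\rv_2\le\alpha$, which appears in $\lv\nabla\omega_t\rv_{HS}^2$ because the velocity field is $\omega_t=-g\circ\phi_t^{-1}$. You instead start from the elementary change-of-variables identity $\KL(T_\#\rho\,|\,\pi)-\KL(\rho\,|\,\pi)=\int\bigl[V\circ T-V-\log\det\nabla T\bigr]\rho$, Taylor-expand the two scalar integrands pointwise, and recover $-I_{\nu_{n+1},Stein}(\rho^n|\pi)$ from the linear terms by Stein integration by parts; your $\alpha$ factor comes from the remainder of $-\log(1-z)$ on $|z|\le(\alpha-1)/\alpha$. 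Both routes then invoke the same ingredients (Lemma~\ref{lem:bound for second order term}, the spectral estimate $\mathfrak{S}_n\le\nu_{n+1}^{2\gamma_n-1}(1-\nu_{n+1})^{-1-2\gamma_n}$, and the one-sided inequality from the proof of Proposition~\ref{lem:relation between I and regularized Istein}) and telescope identically. Your route avoids the Otto-calculus machinery entirely, which is a genuine simplification.

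Two points of care. First, $\nabla\phi_{n+1}(x)$ is generically non-symmetric, so its eigenvalues may be complex; the eigenvalue-wise use of $-\log(1-t)\le t+\tfrac{\alpha^2}{2}t^2$ should be replaced either by pairing conjugate eigenvalues and invoking Schur's inequality $\sum_i|\mu_i|^2\le\lv\nabla\phi_{n+1}(x)\rv_{HS}^2$, or, more cleanly, by the matrix trace expansion $-\log\det(id-hA)=h\,\mathrm{tr}(A)+\sum_{k\ge 2}\tfrac{h^k\,\mathrm{tr}(A^k)}{k}$ together with $|\mathrm{tr}(A^k)|\le\lv A\rv_{HS}^2\,\lv A\rv_2^{k-2}$. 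Second, the geometric-tail estimate for that remainder naturally yields $\tfrac{1}{2(1-h\lv\nabla\phi\rv_2)}\le\tfrac{\alpha}{2}$, not $\tfrac{\alpha^2}{2}$; since $\alpha\le\alpha^2$ on $(1,2]$ this still implies the stated bound~\eqref{eq:decay of Fisher along pop limit thm} and is in fact marginally sharper than the constant the Wasserstein-Hessian argument produces, but you should quote the tight constant your own argument gives rather than inflating it to $\alpha^2$.
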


Before proving Theorem~\ref{thm:decay of Fisher along pop limit}, we first prove the following intermediate result. \textcolor{black}{For the proofs, we let $(\lambda_i^{(n)},e_i^{(n)})_{i=1}^\infty$to denote the set of eigenvalues and eigenvectors of the operator $\iota_{k,\rho^n}\iota_{k,\rho^n}^*$, with $\lambda_1^{(n)}\ge \lambda_2^{(n)}\ge \cdots >0$.}

\begin{lemma}\label{lem:bound for second order term} For each $n\ge 1$, define  $g=\mc{D}_{\nu_{n+1},\rho^n}\nabla \log \frac{\rho^n}{\pi}$. 
Under the conditions in Theorem \ref{thm:decay of Fisher along pop limit}, we have that, for any $x\in\mb{R}^d$ and $t\in [0,h_{n+1}]$,
\begin{align*}
    \lv \nabla g(x) \rv_{HS}^2 \le B^2 R_n^2  \mathfrak{S}_n ~~~\text{and}~~~\lv  (id-t \nabla g(x))^{-1} \rv_2 \le \alpha.
\end{align*}
\end{lemma}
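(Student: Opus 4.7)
The plan is to exploit the fact that $g$ lives in the RKHS $\mathcal{H}_k^d$ via the alternative representation of the operator $\mathcal{D}_{\nu_{n+1},\rho^n}$ given in Remark~\ref{remark:alternative op vf}(ii), and then use the reproducing property together with the spectral source condition to control $\|\nabla g(x)\|_{HS}$ uniformly in $x$. Specifically, rewriting
\[
g \;=\; \iota_{k,\rho^n}\bigl((1-\nu_{n+1})\iota_{k,\rho^n}^{*}\iota_{k,\rho^n}+\nu_{n+1} I_d\bigr)^{-1}\iota_{k,\rho^n}^{*}\nabla \log \tfrac{\rho^n}{\pi},
\]
we see that $g = \phi^{*}$ as a representative in $\mathcal{H}_k^d$, where $\phi^{*}\coloneqq \bigl((1-\nu_{n+1})\iota_{k,\rho^n}^{*}\iota_{k,\rho^n}+\nu_{n+1} I_d\bigr)^{-1}\iota_{k,\rho^n}^{*}\nabla \log \tfrac{\rho^n}{\pi} \in \mathcal{H}_k^d$.

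First, I would use the reproducing property $\partial_{j}\phi^{*}_{i}(x)=\langle \phi^{*}_{i}, \partial_{1,j} k(x,\cdot)\rangle_{\mathcal{H}_k}$ and Cauchy--Schwarz, combined with Assumption~\ref{ass:decay of KL under pop limit}(1), to get the pointwise estimate
\[
\lv \nabla g(x) \rv_{HS}^{2} \;=\; \sum_{i,j} |\partial_j \phi^{*}_i(x)|^{2} \;\le\; \lv \phi^{*} \rv_{\mathcal{H}_k^d}^{2}\, \lv \nabla_1 k(x,\cdot) \rv_{\mathcal{H}_k^d}^{2} \;\le\; B^{2}\, \lv \phi^{*} \rv_{\mathcal{H}_k^d}^{2}.
\]

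Next, the key step is to bound $\lv \phi^{*}\rv_{\mathcal{H}_k^d}^{2}$ via the spectral decomposition of $\iota_{k,\rho^n}\iota_{k,\rho^n}^{*}$. Using the eigensystem $(\lambda_i^{(n)}, e_i^{(n)})$ and the canonical isometric lift $f_i^{(n)} := (\lambda_i^{(n)})^{-1/2}\iota_{k,\rho^n}^{*} e_i^{(n)}$, which provides an orthonormal system for $\overline{\mathrm{Ran}}(\iota^{*}_{k,\rho^n})\subset\mathcal{H}_k^d$, one can expand
\[
\phi^{*} \;=\; \sum_{i} \frac{\sqrt{\lambda_i^{(n)}}}{(1-\nu_{n+1})\lambda_i^{(n)}+\nu_{n+1}}\, \bigl\langle \nabla \log \tfrac{\rho^n}{\pi},\, e_i^{(n)}\bigr\rangle_{L_2(\rho^n)}\, f_i^{(n)}.
\]
Substituting the source condition $\nabla \log \tfrac{\rho^n}{\pi}=(\iota_{k,\rho^n}\iota_{k,\rho^n}^{*})^{\gamma_n} h$ with $h=\mathfrak{I}(\rho^n,\gamma_n)$ produces an extra factor $(\lambda_i^{(n)})^{\gamma_n}$ inside each coefficient, and Parseval together with the definition of $\mathfrak{S}_n$ in~\eqref{eq:sandr} yields $\lv \phi^{*} \rv_{\mathcal{H}_k^d}^{2}\le \mathfrak{S}_n R_n^{2}$. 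Combining with the reproducing-property bound above proves the first inequality.

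For the second inequality, the plan is a short Neumann-series argument: the first bound plus $\lv \cdot \rv_{2}\le \lv \cdot \rv_{HS}$ gives $\lv t\,\nabla g(x)\rv_{2} \le h_{n+1} B R_n \sqrt{\mathfrak{S}_n}$ for $t\in[0,h_{n+1}]$. By the step-size constraint~\eqref{eq:time nu constraint decay of Fisher pop limit}, this is strictly less than $(\alpha-1)/\alpha <1$, so $(id - t\nabla g(x))^{-1}$ exists and satisfies $\lv (id - t\nabla g(x))^{-1}\rv_{2}\le (1 - (\alpha-1)/\alpha)^{-1}=\alpha$. The only delicate step is the spectral manipulation in the $\mathcal{H}_k^d$-norm bound, in particular the correct conversion between the $L_2^d(\rho^n)$ eigensystem of $\iota\iota^{*}$ and the $\mathcal{H}_k^d$ eigensystem of $\iota^{*}\iota$; once that bookkeeping is handled, the remainder of the argument is routine.
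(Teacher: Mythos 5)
Your proof is correct and follows essentially the same route as the paper: both proofs rely on the spectral decomposition of $\iota_{k,\rho^n}\iota_{k,\rho^n}^*$, the source condition to convert $\langle \nabla\log\tfrac{\rho^n}{\pi},e_i^{(n)}\rangle$ into $\lambda_i^{\gamma_n}\langle h,e_i^{(n)}\rangle$, the identity $\sum_i\lambda_i^{(n)}|\nabla e_i^{(n)}(x)|^2=\lv\nabla_1 k(x,\cdot)\rv_{\mc{H}_k^d}^2\le B^2$, and a Neumann series for the second bound. The only difference is cosmetic: you factor $\lv\nabla g(x)\rv_{HS}^2\le B^2\lv\phi^*\rv_{\mc{H}_k^d}^2$ first via the reproducing property and then estimate $\lv\phi^*\rv_{\mc{H}_k^d}^2$ spectrally, whereas the paper applies Cauchy--Schwarz directly inside the eigenexpansion of $\partial_l g_j(x)$; both land on the same intermediate quantity and yield an identical final bound.
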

\begin{proof}[Proof of Lemma~\ref{lem:bound for second order term}]
Since for each $n$, there exists $\gamma_n\in(0,1/2]$ and a function $ h=\mathfrak{I}(\rho^n, \gamma_n)  \in L_2^d(\rho^n)$ such that $(\iota_{k,\rho^n}\iota_{k,\rho^n}^*)^{2\gamma_n} h_j=\partial_j \log \frac{\rho^n}{\pi} $, where $h_j$ is the $j$-th component of the function value of $h$, we have
\begin{align*}
     \lv \nabla g(x) \rv_{HS}^2&=\sum_{j,l=1}^d \left|\frac{\partial g_j(x) }{\partial x_l}\right|^2\\
    &=\sum_{j,l=1}^d \left(\sum_{i=1}^\infty \frac{\lambda_i^{(n)}}{(1-\nu_{n+1})\lambda_i^{(n)}+\nu_{n+1}} \leftlangle \partial_j \log \frac{\rho^n}{\pi},e_i^{(n)} \rightrangle_{L_2(\rho^n)}\partial_l e_i^{(n)}(x) \right)^2 \\
    &=\sum_{j,l=1}^d \left(\sum_{i=1}^\infty \frac{{\lambda_i^{(n)}}^{1+\gamma_n}}{(1-\nu_{n+1})\lambda_i^{(n)}+\nu_{n+1}} \leftlangle h_j,e_i^{(n)} \rightrangle_{L_2(\rho^n)}\partial_l e_i^{(n)}(x) \right)^2\\
    &\le \sum_{j,l=1}^d \left( \sum_{i=1}^\infty \leftlangle h_j,e_i^{(n)} \rightrangle_{L_2(\rho^n)}^2 \right)\left( \sum_{i=1}^\infty \frac{{\lambda_i^{(n)}}^{2+2\gamma_n}}{\left((1-\nu_{n+1}){\lambda_i^{(n)}}+\nu_{n+1}\right)^2}\left| \partial_l e_i^{(n)}(x) \right|^2 \right)\\
    &=\left( \sum_{i=1}^\infty \left| \leftlangle h,e_i^{(n)} \rightrangle_{L_2(\rho^n)} \right|^2 \right) \left( \sum_{i=1}^\infty \frac{{\lambda_i^{(n)}}^{2+2\gamma_n}}{\left((1-\nu_{n+1}){\lambda_i^{(n)}}+\nu_{n+1}\right)^2}\left| \nabla e_i^{(n)}(x) \right|^2 \right)\\
    &\le \sup_{i} \left( \frac{{\lambda_i^{(n)}}^{1+2\gamma_n}}{\left((1-\nu_{n+1}){\lambda_i^{(n)}}+\nu_{n+1}\right)^2}  \right) \lv \nabla_1 k(x,\cdot) \rv_{\mc{H}_k^d}^2 R_n^2 \\
    &\le B^2 R_n^2 \sup_{i} \left( \frac{{\lambda_i^{(n)}}^{1+2\gamma_n}}{\left((1-\nu_{n+1}){\lambda_i^{(n)}}+\nu_{n+1}\right)^2}  \right).
\end{align*}
In the above, the first inequality follows from Cauchy-Schwartz inequality, the second inequality follows from the fact that 
$$
\sum_{i=1}^\infty \textcolor{black}{\lambda_i^{(n)}}\left|\nabla e_i^{(n)}(x)\right|^2=\sum_{i=1}^\infty \langle \nabla_1 k(x,\cdot), \textcolor{black}{\sqrt{\lambda_i^{(n)}}}e_i^{(n)}  \rangle_{\mc{H}_k^d}^2 =\lv \nabla_1 k(x,\cdot) \rv_{\mc{H}_k^d}^2,
$$
and the last inequality follows from Assumption \ref{ass:decay of KL under pop limit}. Meanwhile, since $\lv \nabla g(x) \rv_{2}\le \lv \nabla g(x) \rv_{HS}$ for all $x\in \mb{R}^d$, for every $t\in [0,h_{n+1}]$, we have
\begin{align*}
     \lv  (id-t \nabla g(x))^{-1} \rv_{2} &\le \sum_{m=0}^\infty \lv t \nabla g(x) \rv_{2}^m \le \sum_{m=0}^\infty \lv t \nabla g(x) \rv_{HS}^m \\
     &\le \sum_{m=0}^\infty \left( h_{n+1} B R_n \sup_{i}\left( \frac{{\lambda_i^{(n)}}^{1+2\gamma_n}}{\left((1-\nu_{n+1}){\lambda_i^{(n)}}+\nu_{n+1}\right)^2}  \right)^{\frac{1}{2}}  \right)^m\\
     &\le \sum_{m=0}^\infty \left(\frac{\alpha-1}{\alpha}\right)^m
     =\alpha.
\end{align*}
where the last inequality follows from \eqref{eq:time nu constraint decay of Fisher pop limit}.
\end{proof}

\begin{proof}[Proof of Theorem~\ref{thm:decay of Fisher along pop limit}]

 We start from studying the single step along \eqref{eq:pop limit regularized SVGD}. In the following analysis, for each $n\ge 1$, we denote $g=\mc{D}_{\nu_{n+1},\rho^n}\nabla \log \frac{\rho^n}{\pi}$, $\phi_t(x)=x-t g(x)$ for all $x\in\mb{R}^d$, $t\in [0,h_{n+1}]$ and $\Tilde{\rho}_t=(\phi_t)_{\#}\rho^n$. Therefore, we have 
 \begin{align*}
    \rho^n=\Tilde{\rho}_0 \quad \text{and} \quad \rho^{n+1}=({\phi_{h_{n+1}}})_{\#}\rho^n=\Tilde{\rho}_{h_{n+1}}.
\end{align*}
The following analysis is motivated by \cite[Proposition 3.1]{salim2022convergence}.  According to \cite[Theorem 5.34]{villani2021topics}, the velocity field ruling the evolution of $\Tilde{\rho}_t$ is $\omega_t\in L_2^d(\Tilde{\rho}_t)$ and $\omega_t(x)=-g(\phi_t^{-1}(x))$. Define $\psi(t)=\KL(\Tilde{\rho}_t|\pi)$. According to the chain rule in \cite[Section 8.2]{villani2021topics},
\begin{align*}
    \psi'(t)&= \leftlangle \nabla_{W_2} \KL(\Tilde{\rho}_t|\pi), \omega_t \rightrangle_{L_2^d(\Tilde{\rho}_t)},\\
    \psi''(t)&=\leftlangle \omega_t, \text{Hess}_{\KL(\cdot|\pi)}(\Tilde{\rho}_t)\omega_t \rightrangle_{L_2^d(\Tilde{\rho}_t)}.
\end{align*}
where $\text{Hess}_{\KL(\cdot|\pi)}(\Tilde{\rho}_t)$ is the Wasserstein Hessian of $\KL(\cdot |\pi)$ at $\Tilde{\rho}_t$. For any $\mu\in \mc{P}(\mb{R}^d)$ and any $v$ in the Wasserstein tangent space at $\mu$, the Wasserstein Hessian is given by~\textcolor{black}{\cite{korba2020non},}
\begin{align*}
     \leftlangle v, \text{Hess}_{\KL(\cdot|\pi)}(\mu)v \rightrangle_{L_2^d(\mu)} &=\leftlangle v, \nabla^2 V v \rightrangle_{L_2^d(\mu)}+\mb{E}_{\mu}[\lv  \nabla v(X) \rv_{HS}^2].
\end{align*}Therefore we can expand the difference in KL-divergence between the two consecutive iterations as
\begin{align}\label{eq:deacay of KL pop limit difference}
    &~~~\quad\psi(h_{n+1})-\psi(0)\nonumber\\
    &=\psi'(0)h_{n+1}+\int_0^{h_{n+1}} (h_{n+1}-t)\psi''(t)dt \nonumber\\
    &=-h_{n+1}\leftlangle \nabla_{W_2} \KL(\rho^n |\pi), g \rightrangle_{L_2^d(\rho^n)}+\int_0^{h_{n+1}}(h_{n+1}-t)\langle \omega_t, \text{Hess}_{\KL(\cdot|\pi)}(\Tilde{\rho}_t)\omega_t \rangle_{L_2^d(\Tilde{\rho}_t)} dt.
\end{align}
The first term on the right-hand side of \eqref{eq:deacay of KL pop limit difference} can be studied via the spectrum of the operator $\iota_{k,\rho^n}\iota_{k,\rho^n}^*$.
\begin{align*}
    &\quad-h_{n+1}\leftlangle \nabla_{W_2} \KL(\rho^n |\pi), g \rightrangle_{L_2^d(\rho^n)}\nonumber\\
    &=-h_{n+1}\leftlangle \nabla \log \frac{\rho^n}{\pi}, \left((1-\nu_{n+1})\iota_{k,\rho^n}^* \iota_{k,\rho^n}+\nu_{n+1} I_d\right)^{-1}\iota_{k,\rho^n}^* \nabla \log \frac{\rho^n}{\pi} \rightrangle_{L_2^d(\rho^n)} \nonumber\\
    &=-h_{n+1}\sum_{i=1}^\infty \frac{\lambda_i^{(n)}}{(1-\nu_{n+1})\lambda_i^{(n)}+\nu_{n+1}}\left|\leftlangle \nabla \log \frac{\rho^n}{\pi}, e_i^{(n)} \rightrangle_{L_2(\rho^n)}\right|^2 \nonumber\\
    &=-h_{n+1} I_{\nu_{n+1},Stein}(\rho^n|\pi).
\end{align*}
Since $\Tilde{\rho}_t=(\phi_t)_{\#}\rho^n$, for any function $h$ we have $\mb{E}_{X\sim \Tilde{\rho}_t}\left[ h(X) \right]=\mb{E}_{Y\sim \rho{^n}}\left[ h(\phi_t(Y)) \right]$. Hence, for the second term on the right side of \eqref{eq:deacay of KL pop limit difference}, we obtain
\begin{align*}
     \leftlangle \omega_t, \text{Hess}_{\KL(\cdot|\pi)}(\Tilde{\rho}_t)\omega_t \rightrangle_{L_2^d(\Tilde{\rho}_t)} &=\leftlangle \omega_t, \nabla^2 V \omega_t \rightrangle_{L_2^d(\Tilde{\rho}_t)}+\mb{E}_{\Tilde{\rho}_t}[\lv \nabla \omega_t(x) \rv_{HS}^2] \\
    &=\leftlangle g(\phi_t^{-1}), \nabla^2 V g(\phi_t^{-1}) \rightrangle_{L_2^d(\Tilde{\rho}_t)}+\mb{E}_{\rho^n} [ \lv \nabla \omega_t \circ \phi_t(x) \rv_{HS}^2 ] \\
    &=\mb{E}_{\rho^n}\left[ g(x)^T \nabla V^2(\phi_t(x)) g(x)\right] +\mb{E}_{\rho^n} \left[ \lv \nabla g(x) (\nabla \phi_t(x))^{-1} \rv_{HS}^2 \right]\\
    &\le L\lv g \rv_{L_2^d(\rho^n)}^2+\mb{E}_{\rho^n} \left[ \lv \nabla g(x)  (\nabla \phi_t(x))^{-1} \rv_{HS}^2 \right],
\end{align*}
where the last inequality follows from Assumption \ref{ass:decay of KL under pop limit}-(2). Therefore, we obtain
\begin{align*}
    \KL(\rho^{n+1}|\pi)-\KL(\rho^n|\pi)&\le -h_{n+1}I_{\nu_{n+1},Stein}(\rho^n|\pi)+\frac{Lh_{n+1}^2}{2}\lv g \rv_{L_2^d(\rho^n)}^2\\
    &\quad\quad +\frac{h_{n+1}^2}{2}\max_{t\in [0,h_{n+1}]}\mb{E}_{\rho^n} \left[ \lv \nabla g(x) (\nabla \phi_t(x))^{-1} \rv_{HS}^2 \right],
\end{align*}
where
\begin{align*}
    \lv g \rv_{L_2^d(\rho^n)}^2&=\lv \mc{D}_{\nu_{n+1},\rho^n} \nabla \log \frac{\rho^n}{\pi} \rv_{L_2^d(\rho^n)}^2 \\
     &=\lv \left((1-\nu_{n+1})\iota_{k,\rho^n}\iota_{k,\rho^n}^*+\nu_{n+1}I_d\right)^{-1}\iota_{k,\rho^n}\iota_{k,\rho^n}^* \nabla \log \frac{\rho^n}{\pi} \rv_{L_2^d(\rho^n)}^2 \\
     &=\sum_{i=1}^d \left( \frac{{\lambda_i^{(n)}}}{(1-\nu_{n+1})\lambda_i^{(n)}+\nu_{n+1}} \right)^2 \left| \leftlangle \nabla \log \frac{\rho^n}{\pi}, e_i^{(n)} \rightrangle_{L_2(\rho^n)} \right|^2\\
     &\le (1-\nu_{n+1})^{-2}I(\rho_n|\pi),
\end{align*}
with $(\lambda_i^{(n)}, e_i^{(n)})_{i=1}^\infty$ being the sequence of eigenvalues and eigenvectors to the operator $\iota_{k,\rho^n}\iota_{k,\rho^n}^*$ such that $\lambda_1^{(n)}\ge \cdots \ge \lambda_i^{(n)}\ge \cdots>0$ and $(e_i^{(n)})_{i=1}^\infty$ is an orthonormal basis of $L_2(\rho^n)$. According to Lemma \ref{lem:bound for second order term} and Assumption \ref{ass:decay of KL under pop limit},
\begin{align*}
    &\lv \nabla g(x) \rv_{HS}^2\\
\le &\sup_{i} \left(\frac{{\lambda_i^{(n)}}^{1+2\gamma_n}}{\left((1-\nu_{n+1})\lambda_i^{(n)}+\nu_{n+1}\right)^2}\right)B^2 R_n^2 \\
    \le&  \sup_{i} \left( \frac{\nu_{n+1}^{2\gamma_n-1}}{(1-\nu_{n+1})^{2\gamma_n+1}} \left(\frac{(1-\nu_{n+1})\lambda_i^{(n)}}{(1-\nu_{n+1})\lambda_i^{(n)}+\nu_{n+1}}\right)^{1+2\gamma_n}\left(\frac{\nu_{n+1}}{(1-\nu_{n+1})\lambda_i^{(n)}+\nu_{n+1}}\right)^{1-2\gamma_n} \right) B^2 R_n^2\\
    \le& \nu_{n+1}^{2\gamma_n-1}(1-\nu_{n+1})^{-2\gamma_n-1} B^2 R_n^2,
\end{align*}
and furthermore according to Lemma \ref{lem:bound for second order term}, $\lv(id-t \nabla g(x))^{-1}\rv_2^2\le \alpha^2$. Therefore, we get 
\begin{align*}
    \KL(\rho^{n+1}|\pi)-\KL(\rho^n|\pi)&\le -h_{n+1}I_{\nu_{n+1},Stein}(\rho^n|\pi)+\frac{L h_{n+1}^2(1-\nu_{n+1})^{-2}}{2}I(\rho^n|\pi)\\
    &\quad +\frac{1}{2}\alpha^2B^2\nu_{n+1}^{2\gamma_n-1}(1-\nu_{n+1})^{-2\gamma_n-1}R_n^2 h_{n+1}^2\\
    &\le -h_{n+1}(1-\nu_{n+1})^{-1} \left( 1-\frac{L h_{n+1}(1-\nu_{n+1})^{-1}}{2} \right) I(\rho^n|\pi)\\
    &\quad +h_{n+1} \nu_{n+1}^{2\gamma_n}(1-\nu_{n+1})^{-2\gamma_n-1} R_n^2\left( 1+\frac{1}{2}h_{n+1} \nu_{n+1}^{-1} \alpha^2 B^2 \right)\\
    &\le -\frac{1}{2}h_{n+1}(1-\nu_{n+1})^{-1} I(\rho^n|\pi)\\
    &\quad +h_{n+1} \nu_{n+1}^{2\gamma_n}(1-\nu_{n+1})^{-2\gamma_n-1} R_n^2\left( 1+\frac{1}{2}h_{n+1} \nu_{n+1}^{-1} \alpha^2 B^2 \right),
\end{align*}
where the last inequality follows from \eqref{eq:time nu constraint decay of Fisher pop limit} and the second inequality follows from the fact that
\begin{align*}
    I(\rho^n|\pi)-\textcolor{black}{(1-\nu_{n+1})}I_{\nu_{n+1},Stein}(\rho^n|\pi)\le \nu_{n+1}^{2\gamma_n}(1-\nu_{n+1})^{-2\gamma_n} R_n^2,
\end{align*}
which is proved in Proposition~\ref{lem:relation between I and regularized Istein}.
 Lastly, summing over $n$ and we obtain
 \begin{align*}
     \sum_{n=0}^\infty \frac{h_{n+1}}{2(1-\nu_{n+1})} I(\rho^n|\pi)&\le \sum_{n=0}^\infty\left( \KL(\rho^n|\pi)-\KL(\rho^{n+1}) \right)\\
     &\ +\sum_{n=0}^\infty h_{n+1} \nu_{n+1}^{2\gamma_n}(1-\nu_{n+1})^{-2\gamma_n-1} R_n^2\left( 1+\frac{1}{2}h_{n+1} \nu_{n+1}^{-1} \alpha^2 B^2 \right) \\
     &\le \KL(\rho^0|\pi)+\sum_{n=0}^\infty h_{n+1} \nu_{n+1}^{2\gamma_n}(1-\nu_{n+1})^{-2\gamma_n-1} R_n^2\left( 1+\frac{1}{2}h_{n+1} \nu_{n+1}^{-1} \alpha^2 B^2 \right),
 \end{align*}
 where the last inequality follows from the fact that KL divergence is non-negative. Therefore, \eqref{eq:decay of Fisher along pop limit thm} is proved. 
\end{proof}

\begin{remark}
We emphasize that the above result does not make any assumptions on the target density $\pi$, except for \textcolor{black}{$\pi\in \mc{P}(\mb{R}^d)$ and} the Lipschitz gradient assumption. In particular, it  holds for multi-modal densities. However, the metric of convergence is the weaker Fisher information metric. 
\end{remark}

We now provide a stronger result under the LSI assumption.

\begin{theorem}\label{thm:decay of KL along pop limit log-Sobolev version} Suppose Assumption \ref{ass:decay of KL under pop limit} holds and $\pi$ satisfies the log-Sobolev inequality with parameter $\lambda$. Let $(\rho^n)$ be as described in \eqref{eq:pop limit regularized SVGD} with initial condition $\rho^0=\rho_0$ such that $\KL(\rho_0|\pi)\le R$. Assume the regularization parameter and the step-size parameters are chosen such that for all $n\ge 0$, they satisfy 
\begin{align}\label{eq:time nu constraint decay of KL pop limit}
\begin{aligned}
      h_{n+1}\le \min\left\{ \frac{1-\nu_{n+1}}{L},  \frac{\alpha-1}{  \alpha B R_n \sqrt{ \mathfrak{S}_n}},\textcolor{black}{\frac{2\nu_{n+1}}{\alpha^2B^2}} ,  \frac{2(1-\nu_{n+1})}{\lambda} \right\}, ~~~~ \frac{\nu_{n+1}}{1-\nu_{n+1}}\le \left( \frac{I(\rho^n|\pi)}{2R_n^2} \right)^{\frac{1}{2\gamma_n}},
\end{aligned}
\end{align}
where $\alpha\in(1,2]$ is a constant, $\gamma_n\in(0,\frac{1}{2}]$, and $\mathfrak{I}(\rho^n, \gamma_n) \in L_2^d(\rho^n)$. 
Then, for all $n\ge 1$,
\begin{align}\label{eq:decay of KL along pop limit thm}
    \KL(\rho^n|\pi) \le R~ \prod_{i=1}^n \left(1-\frac{1}{2}\lambda (1-\nu_i)^{-1}h_{i}\right). 
\end{align}
\end{theorem}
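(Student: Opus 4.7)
My plan is to recycle the single-step inequality already derived inside the proof of Theorem~\ref{thm:decay of Fisher along pop limit}, then to use both the regularization constraint on $\nu_{n+1}/(1-\nu_{n+1})$ (via Proposition~\ref{lem:relation between I and regularized Istein}) and the log-Sobolev assumption on $\pi$ to convert the resulting Fisher-information-based descent into a one-step geometric contraction in KL-divergence. Iterating this one-step contraction across $n$ will then produce the telescoping product \eqref{eq:decay of KL along pop limit thm}.

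First, I would reproduce verbatim the single-step computation from the proof of Theorem~\ref{thm:decay of Fisher along pop limit}: a second-order Wasserstein Taylor expansion of the KL-divergence along the push-forward flow, Lemma~\ref{lem:bound for second order term} to control $\lv \nabla g\rv_{HS}$ and $\lv (id-t\nabla g)^{-1}\rv_2$, Assumption~\ref{ass:decay of KL under pop limit}(2) to bound the Wasserstein Hessian of $\KL(\cdot|\pi)$, and the first two step-size constraints $h_{n+1}\le (1-\nu_{n+1})/L$ and $h_{n+1}\le (\alpha-1)/(\alpha B R_n\sqrt{\mathfrak{S}_n})$, which yield
\[
\KL(\rho^{n+1}|\pi)-\KL(\rho^n|\pi)\le -\frac{h_{n+1}}{2(1-\nu_{n+1})}I(\rho^n|\pi)+h_{n+1}\,\nu_{n+1}^{2\gamma_n}(1-\nu_{n+1})^{-2\gamma_n-1}R_n^2\Bigl(1+\tfrac{1}{2}h_{n+1}\nu_{n+1}^{-1}\alpha^2B^2\Bigr).
\]

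Next, I would use the two new conditions specific to this theorem to absorb the noise into the Fisher descent. The bound $h_{n+1}\le 2\nu_{n+1}/(\alpha^2B^2)$ keeps $1+\tfrac{1}{2}h_{n+1}\nu_{n+1}^{-1}\alpha^2B^2\le 2$, while $\frac{\nu_{n+1}}{1-\nu_{n+1}}\le (I(\rho^n|\pi)/(2R_n^2))^{1/(2\gamma_n)}$ is exactly the hypothesis of Proposition~\ref{lem:relation between I and regularized Istein} and gives $\nu_{n+1}^{2\gamma_n}(1-\nu_{n+1})^{-2\gamma_n}R_n^2\le \tfrac{1}{2}I(\rho^n|\pi)$. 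Together these let the noise term be dominated by a fraction of the Fisher descent term, leaving a strict one-step inequality of the form
\[
\KL(\rho^{n+1}|\pi)-\KL(\rho^n|\pi)\le -\frac{c\,h_{n+1}}{1-\nu_{n+1}}\,I(\rho^n|\pi),
\]
for a positive absolute constant $c$. Invoking the log-Sobolev inequality $I(\rho^n|\pi)\ge 2\lambda\KL(\rho^n|\pi)$ then yields the one-step contraction
\[
\KL(\rho^{n+1}|\pi)\le \Bigl(1-\tfrac{1}{2}\lambda(1-\nu_{n+1})^{-1}h_{n+1}\Bigr)\KL(\rho^n|\pi),
\]
whose contraction factor lies in $[0,1]$ thanks to the fourth step-size constraint $h_{n+1}\le 2(1-\nu_{n+1})/\lambda$. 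Iterating from $n=0$ with $\KL(\rho^0|\pi)\le R$ then produces \eqref{eq:decay of KL along pop limit thm}.

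The main technical obstacle is the second step: all four step-size constraints together with the regularization condition must cooperate simultaneously so that the residual noise term is strictly dominated by (not merely equal to) the Fisher descent, preserving a negative prefactor in front of $I(\rho^n|\pi)$ on the right-hand side. That strict domination is what enables the log-Sobolev inequality to be invoked productively; without it one would only recover a non-expansion statement rather than a geometric contraction. Careful constant-tracking through the Wasserstein Hessian expansion and through the spectral bound on $\mathfrak{S}_n$ is required, and a minor adjustment of the numerical constant in the regularization condition may be needed to secure the strict inequality above.
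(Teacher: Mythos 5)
Your proposal reproduces the paper's argument essentially step for step: carry over the one-step descent inequality derived in the proof of Theorem~\ref{thm:decay of Fisher along pop limit}, use $h_{n+1}\le 2\nu_{n+1}/(\alpha^2 B^2)$ to bound $1+\tfrac{1}{2}h_{n+1}\nu_{n+1}^{-1}\alpha^2 B^2\le 2$, invoke the second condition in~\eqref{eq:time nu constraint decay of KL pop limit} via Proposition~\ref{lem:relation between I and regularized Istein} to absorb the noise term into a fraction of the Fisher-information descent, apply the LSI, and iterate. This is precisely what the paper does.

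Your closing caveat about the numerical constants is not merely prudent hedging; it points at a genuine flaw that the paper's own proof also suffers from. Writing $a=h_{n+1}(1-\nu_{n+1})^{-1}$, $\sigma=\nu_{n+1}^{2\gamma_n}(1-\nu_{n+1})^{-2\gamma_n}$, and $\tau=R_n^2\bigl(1+\tfrac12 h_{n+1}\nu_{n+1}^{-1}\alpha^2 B^2\bigr)$, the one-step bound is $-\tfrac12 a I(\rho^n|\pi)+a\sigma\tau$, which factors correctly as
\[
-\tfrac12 a I(\rho^n|\pi)\Bigl(1-\tfrac{2\sigma\tau}{I(\rho^n|\pi)}\Bigr),
\]
whereas the paper rewrites it as $-\tfrac14 a I(\rho^n|\pi)\bigl(2-\sigma\tau/I(\rho^n|\pi)\bigr)$, which expands to $-\tfrac12 aI+\tfrac14 a\sigma\tau$, strictly smaller than the true expression; the asserted ``$=$'' is actually ``$\ge$'', and the chain of upper bounds breaks there. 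Moreover, with the correct factored form, using $\tau\le 2R_n^2$ and the stated constraint $\sigma R_n^2\le I(\rho^n|\pi)/2$ only gives $2\sigma\tau/I(\rho^n|\pi)\le 2$, making the bracket nonpositive and the bound vacuous. To obtain the one-step contraction $\KL(\rho^{n+1}|\pi)-\KL(\rho^n|\pi)\le -\tfrac14 a I(\rho^n|\pi)$ that the LSI step then converts to~\eqref{eq:decay of KL along pop limit thm}, the regularization condition needs to be tightened, e.g.\ to $\tfrac{\nu_{n+1}}{1-\nu_{n+1}}\le\bigl(I(\rho^n|\pi)/(8 R_n^2)\bigr)^{1/(2\gamma_n)}$, which forces $2\sigma\tau/I(\rho^n|\pi)\le 1/2$. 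With that adjustment the remainder of your plan, and of the paper's proof, goes through exactly as stated.
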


\begin{proof}[Proof of Theorem~\ref{thm:decay of KL along pop limit log-Sobolev version}]
From the proof of Theorem \ref{thm:decay of Fisher along pop limit}, we can bound the difference in KL-divergence between two consecutive iterations by
\begin{align*}
    &~\quad\KL(\rho^{n+1}|\pi)-\KL(\rho^n|\pi)\\
    &\le -\frac{1}{2}h_{n+1}(1-\nu_{n+1})^{-1} I(\rho^n|\pi)
     +h_{n+1} \nu_{n+1}^{2\gamma_n}(1-\nu_{n+1})^{-2\gamma_n-1} R_n^2\left( 1+\frac{1}{2}h_{n+1} \nu_{n+1}^{-1} \alpha^2 B^2 \right) \\
    &=-\frac{1}{4}h_{n+1}(1-\nu_{n+1})^{-1}I(\rho^n|\pi) \left(2-\frac{\nu_{n+1}^{2\gamma_n}}{(1-\nu_{n+1})^{2\gamma_n}}\frac{R_n^2 \left( 1+\frac{1}{2}h_{n+1} \nu_{n+1}^{-1} \alpha^2 B^2 \right)}{I(\rho^n|\pi)} \right) \\
    &\le -\frac{1}{4}h_{n+1}(1-\nu_{n+1})^{-1}I(\rho^n|\pi) \left(2-\frac{\nu_{n+1}^{2\gamma_n}}{(1-\nu_{n+1})^{2\gamma_n}}\frac{\textcolor{black}{2}R_n^2 }{I(\rho^n|\pi)} \right) \\
    &\le -\frac{1}{4}h_{n+1}(1-\nu_{n+1})^{-1}I(\rho^n|\pi),
\end{align*}
where \textcolor{black}{the second inequality follows from the fact that $\frac{1}{2}h_{n+1} \nu_{n+1}^{-1} \alpha^2 B^2\le 1$, and} the last inequality follows from \eqref{eq:time nu constraint decay of KL pop limit}.
 Last, since $\pi$ satisfies the log-Sobolev inequality with parameter $\lambda$, we get
\begin{align*}
    \KL(\rho^{n+1}|\pi)\le \left(1-\frac{1}{2}\lambda(1-\nu_{n+1})^{-1} h_{n+1}\right) \KL(\rho^n|\pi),
\end{align*}
and \eqref{eq:decay of KL along pop limit thm} follows from the above recursive inequality.
\end{proof}

\begin{remark}[Choice of $\{\nu_n\}_{n\ge 1}$] \textcolor{black}{From \eqref{eq:time nu constraint decay of KL pop limit} and~\eqref{eq:decay of KL along pop limit thm}, we observe that there is a trade-off in terms of $\{\nu_n\}_{n\ge 1}$, as smaller $\{\nu_n\}_{n\ge 1}$ will result in slower convergence of the time-discretized R-SVGF in KL-divergence. Indeed, if $\{\nu_n\}_{n\ge 1}$ are chosen to be small, \eqref{eq:time nu constraint decay of KL pop limit} requires the step-size $\{h_n\}_{n\ge 1}$ to be small as well. And, as shown in \eqref{eq:decay of KL along pop limit thm}, $\KL(\rho_n|\pi)$ decays at a slower rate when $\{h_n\}_{n\ge 1}$ are smaller. We also refer to Remark~\ref{gaussianexample} below and Remark~\ref{nustability} for further trade-offs with respect to the parameter $\{\nu_n\}_{n\ge 1}$. }
\end{remark}

\begin{remark} 
According to \eqref{eq:decay of KL along pop limit thm}, to reach an $\epsilon$-accuracy in KL-divergence, we need the number of iterations to be at least $n_\epsilon$ such that $\prod_{i=1}^{n_\epsilon} \left(1-\frac{1}{2}\lambda(1-\nu_i)^{-1} h_i\right)R\le \epsilon$. With the fact that $\log(1-x)<-x$ for all $x\in (0,1)$, we get $n_\epsilon$ satisfies 
\begin{align*}
\sum_{i=1}^{n_\epsilon} (1-\nu_i)^{-1} h_i\ge \frac{2}{\lambda} \log \left(\frac{R}{\epsilon}\right).
\end{align*}
Under \eqref{eq:time nu constraint decay of KL pop limit}, if we can choose the time step sizes $(h_i)_{i=1}^\infty$ to be a constant $h>0$, then we have $n_\epsilon = O(\log(R/\epsilon))$. For comparison, in Table~\ref{tab:comp}, we provide the iteration complexity results for different methods, to obtain $\text{KL}(\rho_n|\pi) \leq \epsilon $, under the assumption that the target $\pi$ satisfies LSI. 
\end{remark}

We also emphasize that prior results on the analysis of time-discretization of the SVGF under functional inequality assumptions are established only in the weaker Stein-Fisher information metric~\cite{korba2020non,salim2022convergence}. Our results above are established for the $\KL$-divergence and is more in line with similar results established for other randomized Monte Carlo algorithms~\cite{vempala2019rapid, chewi2021analysis, balasubramanian2022towards}. We end this section with the following remark on an illustrative example.

\begin{table}[t]
\label{tab:comp}
\begin{tabular}{ |c|c|c|c| }
\hline
 \textcolor{black}{Method} & Source & Type  & Iterations  \\ \hline \hline  
\textcolor{black}{SVGF} & NA &Deterministic& unknown \\ &&&\\
  \textcolor{black}{LMC} & \cite{vempala2019rapid, chewi2021analysis} & Randomized & $\mathcal{O}(\frac{1}{\epsilon}) $ \\ &&&\\
 
  \textcolor{black}{MALA} & NA& Randomized &unknown \\ &&&\\

    \textcolor{black}{Proximal sampler} & \cite{chen2022improved}& Randomized & $\mathcal{O} \left(\log^\lambda (\frac{1}{\epsilon})\right) $ \\ &&&\\

  \textcolor{black}{Regularized SVGF} & Theorem~\ref{thm:decay of KL along pop limit log-Sobolev version} & Deterministic &$\mathcal{O} \left(\log\left(\frac{1}{\epsilon}\right) \right)$  \\
\hline
\end{tabular}\vspace{2mm}
\caption{\textcolor{black}{The results for SVGF and Regularized SVGF are strictly non-algorithmic, as the stated results require that the initial density supplied to the method must have finite KL divergence to $\pi$. The results from~\cite{vempala2019rapid, chewi2021analysis, chen2022improved} are  for LMC, MALA and the proximal sampler are algorithmic, and are presented in a simplified manner to convey the dependency on the accuracy parameter $\epsilon$.} The result for the proximal sampler holds only in expectation. Currently, it is not clear how to obtain a high-probability result in KL-divergence; see~\cite{chen2022improved} for details.}
\end{table}

\begin{remark}[An illustrative example]\label{gaussianexample} 
 \textcolor{black}{Consider sampling from a Gaussian target $\pi=\mc{N}(0,Q)$, where $Q$ is strictly positive definite and  $k(x,y)=\langle x,y \rangle+1$, the linear kernel. This model has recently been studied in~\cite{liu2024towards} motivated by connections to Gaussian variational inference. Note that the above target $\pi$ satisfies LSI with $\lambda$ being the minimal eigenvalue of $Q$. Based on the results established in~\cite{liu2024towards}, our results on the R-SVGF and the time-discretized R-SVGF can be interpreted as follows. Before we proceed, we point out a subtle fact. To establish existence and uniqueness later in Section~\ref{sec:exist} we assume that the kernel is bounded, which is not satisfied by the linear kernel. However, based on an explicit calculation, existence and uniqueness results were shown for the linear kernel in~\cite{liu2024towards}.}\\

\textcolor{black}{\textbf{R-SVGF:} If $\rho_0=\mc{N}(0,\Sigma_0)$, then note that we have for all $t\ge 0$, $\rho_t=\mc{N}(0,\Sigma_t)$, $\gamma_t={1}/{2}$, $\lambda_{1,t}\le 1+\lv \Sigma_t \rv_F^2$ and $\lv  \mc{J}(\rho_t,\gamma_t) \rv_{L_2(\rho_t)}=\lv Q^{-1}-\Sigma_t^{-1} \rv_F $. \textcolor{black}{It follows from \cite[Equation 9]{liu2024towards} that}
    \begin{align*}
            \frac{\mathrm{d}}{\mathrm{d}t}\Sigma_t=2\big( (1-\nu)\Sigma_t+\nu I_d \big)^{-1}\Sigma_t-\big( (1-\nu)\Sigma_t+\nu I_d \big)^{-1}\Sigma_t^2 Q^{-1}-Q^{-1}\big( (1-\nu)\Sigma_t+\nu I_d \big)^{-1}\Sigma_t^2.
        \end{align*}
 Further assuming that $\Sigma_0 Q=Q \Sigma_0$, there exists an orthonormal matrix $P$ such that we have $Q=P^\intercal \text{diag}\{q_1,\cdots,q_d\} P$ and $\Sigma_t=P^\intercal \text{diag}\{\sigma_1(t),\cdots,\sigma_d(t)\} P$ with $q_1\ge q_2\ge \cdots\ge q_d$. \textcolor{black}{According to \cite[page 45]{liu2024towards}}, we also have that $\lambda=q_d$, $\lv \Sigma_t-Q \rv=O\big(\exp(-\frac{2t}{(1-\nu)q_1+\nu} )\big)$ and
 \begin{align*}
      \lambda_{1,t} &\le 1+ O\bigg(\sum_{i=1}^d \big(q_i+\exp(-\frac{2t}{(1-\nu)q_i+\nu})\big)^2 \bigg), \\
     \lv  \mc{J}(\rho_t,\gamma_t) \rv_{L_2(\rho_t)}^2&=O\left( \sum_{i=1}^d \sigma_i(0)^{-\frac{2\nu}{(1-\nu) q_i+\nu}}q_i^{-4+\frac{2\nu}{(1-\nu)q_i+\nu}}(\sigma_i(0)-q_i)^2e^{-\frac{4t}{(1-\nu)q_i+\nu}} \right).
 \end{align*}
 Our convergence result in Theorem \ref{thm:decay of KL under LSI}, hence translates as
 \begin{align*}
    \KL(\rho_T|\pi)&\le e^{-2(1-\nu)^{-1}\lambda T}\KL(\rho_0|\pi)+ \int_0^T \nu(1-\nu)^{-2}  \lv \mathfrak{I}(\rho_t, \gamma_t) \rv_{L_2^d(\rho_t)}^2 e^{2(1-\nu)^{-1}\lambda(t-T)}dt\\
    &= e^{-2(1-\nu)^{-1}q_d T}\KL(\rho_0|\pi)+ O\left(\nu(1-\nu)^{-2} T \exp\left( -\min\left\{ \frac{4}{(1-\nu)q_1+\nu},\frac{2q_d}{1-\nu} \right\}T  \right)    \right),
\end{align*}
for $T\in (0,\infty)$. The first term in the above bound indicates the exponential convergence of the R-SVGF and the convergence rate is proportional to $1/(1-\nu)$. The second term characterizes the bias between $\rho_T$ and $\pi$. For any fixed values of $T$, the bias term vanishes as $\nu\to 0$. Therefore, in order to obtain an optimal upper bound on $\KL(\rho_T|\pi)$, there is a trade-off between choosing large $\nu$ (i.e., $\nu\to 1$) and small $\nu$ (i.e., $\nu\to 0$), which depends on $T$ and eigenvalues of $\Sigma_0$ and  $Q $.} \\

\textcolor{black}{\textbf{Time-discretized R-SVGF:} Note that if $\rho^0=\mc{N}(0,S_0)$, then for all $n\ge 0$, $\rho^n=\mc{N}(0,S_n)$, $\gamma_n={1}/{2}$. $R_n=\lv  \mc{J}(\rho^n,\gamma_n) \rv_{L_2(\rho^n)}=\lv Q^{-1}-S_n^{-1} \rv_F $ and $I(\rho^n|\pi)=\trace(S_n(Q^{-1}-S_n^{-1})^\intercal(Q^{-1}-S_n^{-1}))$, where $(S_n)_{n\ge 0}$ is updated as:
    \begin{align}
            S_{n+1}-S_n&=h_{n+1}\big( (1-\nu_{n+1})S_n+\nu_{n+1} I_d \big)^{-1}(S_n^{-1}-Q^{-1})S_n^2 \nonumber\\
            &\quad h_{n+1}S_n\big( (1-\nu_{n+1})S_n+\nu_{n+1} I_d \big)^{-1}(S_n^{-1}-Q^{-1}) S_n \nonumber\\
            &\quad +h_{n+1}^2 \big( (1-\nu_{n+1})S_n+\nu_{n+1} I_d \big)^{-1} (S_n^{-1}-Q^{-1}) S_n^3 \, \times \nonumber \\
            & \hspace{8em} \times  (S_n^{-1}-Q^{-1}) \big( (1-\nu_{n+1})S_n+\nu_{n+1} I_d \big)^{-1} \label{eq:example update}. 
        \end{align}
}
\textcolor{black}{Since $\nabla_1 k(x,y)= y$, the RKHS norm of $\nabla_1 k(x,y)$ is obtained as follows
  \begin{align*}
      \lv \nabla_1 k(x,y) \rv_{\mc{H}_k^d}^2 = \sum_{i=1}^d \langle y_i, y_i \rangle_{\mc{H}_k} = \sum_{i=1}^d \partial_i \partial_{i+d} k(y,y) =d .
  \end{align*}
  Therefore, we have $\gamma_n=1/2$, {$B=\sqrt{d}$} and $\mathfrak{S}_n\le (1-\nu_{n+1})^{-2}$ for all $n\ge 0$. Now, the convergence result in Theorem \ref{thm:decay of KL along pop limit log-Sobolev version} translates as:
\begin{align*}
    \KL(\rho^n|\pi)\le \KL(\rho^0|\pi) \Pi_{i=1}^n \Big( 1-\frac{q_d}{2(1-\nu_i)} h_i \Big),
\end{align*}
where for all $i\ge 1$,
\begin{align*}
    &\frac{\nu_i}{1-\nu_i}\le \frac{\trace(S_{i-1}(Q^{-1}-S_{i-1}^{-1})^\intercal(Q^{-1}-S_{i-1}^{-1}))}{2\lv Q^{-1}-S_{i-1}^{-1} \rv_F^2},\\
    h_i& \le \min\left\{ \frac{1-\nu_i}{q_1},\frac{2(1-\nu_i)}{q_d} , \frac{2(1-\nu_i)}{  \sqrt{d} \lv Q^{-1}-S_{i-1}^{-1} \rv_F },{\frac{\nu_i}{2d}} \right\}.
\end{align*} 
We now show that such a choice of $\{\nu_i,h_i\}_{i\ge 1}$ exists.  Assuming $QS_0=S_0Q$, there exists an orthonormal matrix $P$ such that $Q=P^\intercal \text{diag}\{q_1,\cdots,q_d\} P$ and $S_i=P^\intercal \text{diag}\{\sigma_{1,i},\cdots,\sigma_{d,i}\} P$ with $q_1\ge q_2\ge \cdots\ge q_d$. Then, \eqref{eq:example update} implies that for any $k= 1,\cdots,d $,
\begin{align*}
    q_k^{-1}\sigma_{k,i+1}= \big( 1+ h_{i+1} ( (1-\nu_{i+1})\sigma_{k,i}+\nu_{i+1})^{-1}(1-q_k^{-1}\sigma_{k,i}) \big)^2 q_k^{-1}\sigma_{k,i}.
\end{align*}
Now, we pick 
\begin{align}\label{eq:gaussian example para}
    h_{i+1}= \delta \min_k \big((1-\nu_{i+1})\sigma_{k,i}+\nu_{i+1}\big) ,\quad \tfrac{\nu_i}{1-\nu_i}= \tfrac{\trace(S_{i-1}(Q^{-1}-S_{i-1}^{-1})^\intercal(Q^{-1}-S_{i-1}^{-1}))}{2\lv Q^{-1}-S_{i-1}^{-1} \rv_F^2},
\end{align} 
with $\delta\in (0,1/2)$. According to \cite[Theorem 3.10]{liu2024towards}, if $q_k^{-1}\sigma_{k,0}\in [u_\delta, 1/3+1/(3\delta)]$ for all $k=1,\cdots,d$, we have $\lv S_i-Q \rv_2\le e^{-i\delta} \lv S_0-Q \rv$. Here $u_\delta$ is the smaller root of $f_\delta'(u)=1-\delta$ with $f_\delta(x)=(1+\delta(1-x))^2x$. According to the convergence of $S_i$ and the fact that 
$$
\lv S_{i-1} \rv_2 \lv Q^{-1}-S_{i-1}^{-1} \rv_F^2\ge\trace(S_{i-1}(Q^{-1}-S_{i-1}^{-1})^\intercal(Q^{-1}-S_{i-1}^{-1}))\ge \lv S_{i-1}^{-1} \rv_2^{-1} \lv Q^{-1}-S_{i-1}^{-1} \rv_F^2,
$$ 
we have that $\{\nu_i\}_{i\ge 1}$ and $\{1-\nu_i\}_{i\ge 1}$ have uniform positive lower bounds, i.e., $\inf_i \nu_i>0$ and $\inf_i 1-\nu_i>0$. Therefore, we can further choose 
$$
\delta = \inf_i \left\{\big( (1-\nu_i) \lVert S_{i-1}^{-1}\rVert_2^{-1}+\nu_i \big)^{-1}\min\{ \frac{1-\nu_i}{q_1},\frac{2(1-\nu_i)}{q_d} , \frac{2(1-\nu_i)}{  \sqrt{d} \lv Q^{-1}-S_{i-1}^{-1} \rv_F },{\frac{\nu_i}{2d}}   \} \right\} >0,
$$
where the positivity of the third term follows from the fact that $$\lv Q^{-1}-S_{i-1}^{-1} \rv_F\le \lv Q^{-1} \rv_2 \lv S_{i-1}^{-1} \rv_2 \lv Q-S_{i-1} \rv_F.$$ Therefore, we have shown that there exists $\delta\in (0,1/2)$ such that $\{\nu_i,h_i\}_{i\ge 1}$ defined in \eqref{eq:gaussian example para} satisfies the assumptions in Theorem 6 and $\inf_i h_i>0$.}

\textcolor{black}{Last, we discuss the iteration complexity under the choice of $\{\nu_i,h_i\}_{i\ge 1}$ in \eqref{eq:gaussian example para}. According to Remark 9, $\KL(\rho^n|\pi)\le \epsilon$ if $n$ satisfies
\begin{align*}
     \sum_{i=1}^n \frac{h_i}{1-\nu_i} \ge \frac{2}{q_d} \log \frac{\KL(\rho^0|\pi)}{\epsilon}.
\end{align*}
Since $\inf_i h_i>0$ and $\inf_i 1-\nu_i>0$, we obtain $n=O(\log ({\KL(\rho^0|\pi)}/{\epsilon}))$.}
\end{remark}

\section{Existence and Uniqueness}\label{sec:exist}

The existence and uniqueness of the SVGF was studied in~\cite{lu2019scaling}. Motivated by their approach, in this section we study the existence and uniqueness of solutions to \eqref{eq:regularized SVGD mf PDE} under appropriate assumptions. Our main difficulty is in handling the non-linear operator $\left((1-\nu)\mathcal{T}_{k,\mu}+ \nu I\right)^{-1} \mathcal{T}_{k,\mu}$ in the R-SVGF.

We first introduce the definition of weak solutions to \eqref{eq:regularized SVGD mf PDE}. We restrict the initial conditions in the probability measure space $\mc{P}_V$ which is defined as
\begin{align*}
    \mc{P}_V:=\left\{ \rho\in \mc{P}~~:~~\lv \rho \rv_{\mc{P}_V}:=\int_{\mb{R}^d} (1+V(x)) d\rho(x)<\infty \right\},
\end{align*}
\textcolor{black}{where, in this section, with a slight overload of notations, we use $\mc{P}$ to denote the set of all probability measures on $\mb{R}^d$.} \textcolor{black}{We emphasize here that that $\mc{P}_V$ is a space of probability \emph{measures} because the weak solutions do not necessarily have densities even if the target measure and the initial measure has a density; see~\cite{lu2019scaling} for additional details.} We say that a measure-valued function $\rho\in \mc{C}([0,\infty),\mc{P}_V)$ is a weak solution to \eqref{eq:regularized SVGD mf PDE} with initial condition $\rho_0\in\mc{P}_V$ if 
\begin{align*}
    \sup_{t\in[0,T]} \lv \rho_t \rv_{\mc{P}_V}<\infty,\ \forall \ T>0,
\end{align*}
and 
\begin{align*}
    \int_0^\infty \int_{\mb{R}^d} (\partial_t \phi(t,x)+\nabla \phi(t,x) \cdot U[\rho_t](x))d\rho_t(x)dt+\int_{\mb{R}^d} \phi(0,x)d\rho_0(x)=0,
\end{align*}
for all $\phi\in \mc{C}_0^\infty ([0,\infty)\times \mb{R}^d)$ and $U[\rho]:=-\left((1-\nu)\iota_{k,\rho}\iota_{k,\rho}^*+\nu I\right)^{-1}\iota_{k,\rho}\iota_{k,\rho}^*(\nabla \log \frac{\rho}{\pi}) $. 

In order to study the existence of weak solutions, we consider the characteristic flow (see, for example,~\cite{muntean2016macroscopic} and~\cite[Definition 3.1]{lu2019scaling}) induced by \eqref{eq:regularized SVGD mf PDE}, which is written as
\begin{equation}\label{eq:CF of the regularized PDE}
    \left\{
    \begin{aligned}
    &\frac{d}{dt}\Phi(t,x,\rho_0)=-\mc{D}_{\nu,\rho_t}\nabla\log \frac{\rho_t}{\pi}(\Phi(t,x,\rho_0)),\\
    &\rho_t=(\Phi(t,\cdot,\rho_0))_{\#} \rho_0,\\
    & \Phi(0,x,\rho_0)=x,
    \end{aligned}
    \right.
\end{equation}
where $\mc{D}_{\nu,\rho_t}=\left((1-\nu)\iota_{k,\rho_t}\iota_{k,\rho_t}^*+\nu I\right)^{-1}\iota_{k,\rho_t}\iota_{k,\rho_t}^*$ for all $t>0$. Here, the expression $\rho_t=\Phi(t,\cdot,\rho_0)_{\#}\rho_0$ means that the measure $\rho_t$ is the push-forward measure of $\rho_0$ under the map $x\to \Phi(t,x,\rho_0)$. We think of $\{ X(t,\cdot,\rho_0)\}_{t\ge 0,\rho_0}$ as a family of maps from $\mb{R}^d$ to $\mb{R}^d$ parameterized by $t$ and $\rho_0$. The existence and uniqueness of the weak solutions of \eqref{eq:regularized SVGD mf PDE} is equivalent to the existence and uniqueness of solutions to \eqref{eq:CF of the regularized PDE}. In Theorem~\ref{thm:regularized mf PDE unique and existence}, we first prove that the mean field characteristic flow in \eqref{eq:CF of the regularized PDE} is well-defined. To do so, we also require the following additional assumptions on the kernel and the potential functions.

\begin{myassump}{K1}\label{ass:existence and uniqueness on K} The kernel $k:\mb{R}^d\times \mb{R}^d\to \mb{R}$ is symmetric, positive definite and fourth continuously differentiable in both variables with bounded derivatives up to fourth order. More explicitly, we assume
\begin{itemize}
    \item [(1)] $\lv k \rv_\infty:=\sup_{x\in \mb{R}^d} \sqrt{k(x,x)}<\infty$ 
    \item [(2)] $\lv \nabla k \rv_\infty:=\sup_{x,y\in \mb{R}^d} |\nabla_1 k(x,y)|=\sup_{x,y\in \mb{R}^d}|\nabla_2 k(x,y)|<\infty$ 
    \item [(3)] $\lv \nabla_1\cdot\nabla_2 k \rv_{\infty}:=\sup_{x,y\in \mb{R}^d} |\nabla_x\cdot \nabla_y k(x,y)|<\infty $.
    \item [(4)] $\lv \nabla^2 k \rv_\infty:=\sup_{x,y\in\mb{R}^d} \lv \nabla_x^2 k(x,y) \rv_2<\infty$.
    \item [(5)] $\lv \nabla_1\nabla_2 k(x,y) \rv_\infty:=\sup_{x,y\in\mb{R}^d} \lv \nabla_x\nabla_y k(x,y) \rv_2<\infty $.
    \item [(6)] $\lv \nabla^2(\nabla_1\cdot\nabla_2 k) \rv_\infty:=\sup_{x,y\in\mb{R}^d} \lv \nabla_x^2(\nabla_x\cdot\nabla_y k(x,y)) \rv_2<\infty $.
    \item [(7)] $\lv \nabla_1\nabla_2(\nabla_1\cdot\nabla_2 k) \rv_\infty:=\sup_{x,y\in \mb{R}^d} \lv \nabla_x\nabla_y (\nabla_x\cdot\nabla_y k(x,y)) \rv_2<\infty$.
    \item [(8)] $\lv \nabla_1^2 \cdot \nabla_2^2 k \rv_\infty:=\sup_{x,y\in \mb{R}^d} \sum_{i,j=1}^d |\partial_{x_i}\partial_{x_j}\partial_{y_i}\partial_{y_j}k(x,y)|<\infty $.
\end{itemize}
\end{myassump}

We emphasize here that~\cite{lu2019scaling} required that the kernel is radial for their analysis. However, our analysis does not require this assumption. A classical example of a kernel satisfying the above conditions is the Gaussian kernel.

\begin{myassump}{V1}\label{ass:existence and uniqueness on V} The potential function $V:\mb{R}^d\to \mb{R}$ satisfies
\begin{itemize}
    \item [(1)] $V\in \mc{C}^2(\mb{R}^d)$, $V\ge 0$ and $V(x)\to +\infty$ as $|x|\to +\infty$.
    \item [(2)] For any $\alpha,\beta>0$, there exists a constant $C_{\alpha,\beta}>0$ such that if $|y|\le \alpha|x|+\beta$, then 
    \begin{align*}
        (1+|x|)(|\nabla V(y)|+\lv \nabla^2 V(y)\rv_2)\le C_{\alpha,\beta}(1+V(x)).
    \end{align*}
    \item [(3)] $V$ is gradient Lipschitz with parameter $L_V$, i.e., for all $x\in \mb{R}^d$, $\lv \nabla^2 V (x)\rv_2\le L_V$.
\end{itemize}
\end{myassump}
To present our result, we define the set of functions
\begin{align*}
    Y:=\left\{ u\in \mc{C}(\mb{R}^d;\mb{R}^d) | \sup_{x\in \mb{R}^d} |u(x)-x|<\infty \right\},
\end{align*}
which is a complete metric space with the uniform metric $d_Y(u,v)=\sup_{x\in \mb{R}^d} |u(x)-v(x)|$.

\begin{theorem}\label{thm:regularized mf PDE unique and existence} Let $k$ satisfy Assumption \ref{ass:existence and uniqueness on K}, $V$ satisfy Assumption \ref{ass:existence and uniqueness on V} and $\rho_0\in \mc{P}_V$. 
\begin{itemize}[leftmargin=0.3in]
\item[(i)] For any $T>0$, there exists a unique solution $\Phi(\cdot,\cdot,\rho_0)\in\mc{C}^1([0,T];Y)$ to \eqref{eq:CF of the regularized PDE}. Moreover, the measure $\rho_t=\Phi(t,\cdot,\rho_0)_{\#}\rho_0$ satisfies 
$$
\lv\rho_t\rv_{\mc{P}_V}\le \lv \rho_0 \rv_{\mc{P}_V} \exp(C_{1,0}\nu^{-1/2} \lv k \rv_\infty \KL(\rho_0|\pi)^{1/2}t^{1/2}).
$$
\item[(ii)] For any $\rho_0\in \mc{P}_V$, there is a unique $\rho\in \mc{C}([0,\infty); \mc{P}_V )$ which is a weak solution to \eqref{eq:regularized SVGD mf PDE}. Moreover, for all $t\ge 0$, 
\begin{align*}
    \lv\rho_t\rv_{\mc{P}_V}\le \lv \rho_0 \rv_{\mc{P}_V} \exp(C_{1,0}\nu^{-1/2} \lv k \rv_\infty \KL(\rho_0|\pi)^{1/2}t^{1/2}).
\end{align*}
\end{itemize}
\end{theorem}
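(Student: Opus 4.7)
The plan is to adapt the characteristic-flow approach of~\cite{lu2019scaling} for the unregularized SVGF to handle the additional resolvent $((1-\nu)\mathcal{T}_{k,\rho}+\nu I)^{-1}$. I first reduce the existence and uniqueness of weak solutions of \eqref{eq:regularized SVGD mf PDE} to the corresponding questions for the characteristic system \eqref{eq:CF of the regularized PDE}: if $\Phi(\cdot,\cdot,\rho_0)\in\mc{C}^1([0,T];Y)$ solves \eqref{eq:CF of the regularized PDE}, then $\rho_t:=\Phi(t,\cdot,\rho_0)_{\#}\rho_0$ is a weak solution by the standard push-forward change of variables, and uniqueness of the weak solution follows because any two would induce characteristic flows that must coincide. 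To build the flow, I apply Banach's fixed point theorem to the map $\mathcal{F}:\mathcal{C}([0,T];Y)\to\mathcal{C}([0,T];Y)$ defined by
\begin{equation*}
\mathcal{F}[u](t,x)=x-\int_0^t U[u(s,\cdot)_{\#}\rho_0]\bigl(u(s,x)\bigr)\,ds,
\end{equation*}
where, crucially, the velocity field is written in integrated-by-parts form
\begin{equation*}
U[\rho]=\iota_{k,\rho}\bigl((1-\nu)\iota_{k,\rho}^*\iota_{k,\rho}+\nu I_d\bigr)^{-1}\Psi[\rho],\qquad \Psi[\rho](x):=-\!\int\!\bigl(k(x,y)\nabla V(y)+\nabla_2 k(x,y)\bigr)d\rho(y),
\end{equation*}
so that $U[\rho]$ is well-defined for arbitrary measures $\rho\in\mc{P}_V$ (no density needed).

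The contraction argument requires three estimates. First, a uniform bound $\|U[\rho]\|_\infty\le \nu^{-1/2}\|k\|_\infty I_{\nu,\mathrm{Stein}}(\rho|\pi)^{1/2}$: setting $g_\rho:=((1-\nu)\iota_{k,\rho}^*\iota_{k,\rho}+\nu I_d)^{-1}\Psi[\rho]\in\mathcal{H}_k^d$, one has
\begin{equation*}
\nu\|g_\rho\|_{\mathcal{H}_k^d}^2\le\bigl\langle((1-\nu)\iota_{k,\rho}^*\iota_{k,\rho}+\nu I_d)g_\rho,g_\rho\bigr\rangle_{\mathcal{H}_k^d}=I_{\nu,\mathrm{Stein}}(\rho|\pi),
\end{equation*}
and $|U[\rho](x)|=|g_\rho(x)|\le\|k\|_\infty\|g_\rho\|_{\mathcal{H}_k^d}$. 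Second, Lipschitz continuity in $x$ follows from Assumption~\ref{ass:existence and uniqueness on K} together with the bound on $\|g_\rho\|_{\mathcal{H}_k^d}$. Third, Lipschitz continuity in $\rho$: using the resolvent identity with $A_\rho:=(1-\nu)\iota_{k,\rho}^*\iota_{k,\rho}+\nu I_d$,
\begin{equation*}
g_\rho-g_{\tilde\rho}=A_\rho^{-1}(\Psi[\rho]-\Psi[\tilde\rho])+A_\rho^{-1}(A_{\tilde\rho}-A_\rho)A_{\tilde\rho}^{-1}\Psi[\tilde\rho],
\end{equation*}
and since $\|A_\rho^{-1}\|_{\mathcal{H}_k^d\to\mathcal{H}_k^d}\le\nu^{-1}$, the problem reduces to controlling $\|\Psi[\rho]-\Psi[\tilde\rho]\|_{\mathcal{H}_k^d}$ and $\|\iota_{k,\rho}^*\iota_{k,\rho}-\iota_{k,\tilde\rho}^*\iota_{k,\tilde\rho}\|_{\mathrm{op}}$. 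Both are estimated by a $V$-weighted Wasserstein-type distance, exploiting Assumption~\ref{ass:existence and uniqueness on K} and the growth condition $|\nabla V(y)|\le C(1+V(y))$ from Assumption~\ref{ass:existence and uniqueness on V}(2); because $\rho,\tilde\rho$ arise as push-forwards along $Y$-maps of the same $\rho_0\in\mc{P}_V$, this distance is itself controlled by the $d_Y$ metric on the iterates, closing the contraction loop on a small time interval.

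For the moment bound, along any solution,
\begin{equation*}
\tfrac{d}{dt}|\rho_t|_{\mc{P}_V}=-\!\int\nabla V(x)\cdot U[\rho_t](x)\,d\rho_t(x)\le C\|U[\rho_t]\|_\infty\,|\rho_t|_{\mc{P}_V},
\end{equation*}
again by the growth condition on $\nabla V$. Plugging in the estimate from step (a), Cauchy--Schwarz gives $\int_0^t\|U[\rho_s]\|_\infty ds\le\nu^{-1/2}\|k\|_\infty t^{1/2}(\int_0^t I_{\nu,\mathrm{Stein}}(\rho_s|\pi)ds)^{1/2}$, and the KL dissipation identity of Proposition~\ref{cor:KL derivative} bounds the latter integral by $\mathrm{KL}(\rho_0|\pi)$. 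Gr\"onwall then yields the exponential moment bound with exponent $C_{1,0}\nu^{-1/2}\|k\|_\infty\mathrm{KL}(\rho_0|\pi)^{1/2}t^{1/2}$, which is uniform on compact time intervals and prevents blow-up; hence the local solution extends to $[0,\infty)$ by iterating the fixed-point construction on successive intervals.

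\textbf{Main obstacle.} The hardest step is the Lipschitz-in-$\rho$ estimate (c). In the unregularized SVGF~\cite{lu2019scaling}, $\rho$ enters only through $\mathcal{T}_{k,\rho}$ and the kernel smoothness yields an immediate Lipschitz bound. Here, $\rho$ additionally enters through the resolvent $A_\rho^{-1}$, forcing the use of the resolvent identity above and introducing a factor of $\nu^{-2}$ in the Lipschitz constant. One must also ensure that $\Psi[\rho]$, whose integration-by-parts definition involves $\nabla V$ which may be unbounded, is handled uniformly throughout the iteration --- this is precisely why working in $\mc{P}_V$ (rather than an unweighted space) is essential, and why the moment bound must be propagated in parallel with the contraction estimate.
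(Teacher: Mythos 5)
Your outline follows the paper's proof closely: a Banach fixed point on $\mc{C}([0,T_0];Y_r)$, the uniform bound $\lv U[\rho]\rv_\infty\le\nu^{-1/2}\lv k\rv_\infty I_{\nu,\mathrm{Stein}}(\rho|\pi)^{1/2}$, the resolvent-identity decomposition for Lipschitz dependence on $\rho$, and Gr\"onwall combined with the KL-dissipation identity for the $\mc{P}_V$-moment. At the level of strategy you are aligned with what the paper does, and your identification of the Lipschitz-in-$\rho$ estimate as the main new difficulty relative to~\cite{lu2019scaling} is correct.

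There is, however, a genuine gap in the ball-invariance step. You present $\lv U[\rho]\rv_\infty\le\nu^{-1/2}\lv k\rv_\infty I_{\nu,\mathrm{Stein}}(\rho|\pi)^{1/2}$ as the uniform bound that keeps $\mc{F}$ mapping $S_r$ into itself, and your only later control of $I_{\nu,\mathrm{Stein}}$ comes from the KL-dissipation identity. But $\frac{d}{dt}\KL(\rho_t|\pi)=-I_{\nu,\mathrm{Stein}}(\rho_t|\pi)$ holds only along an actual solution of the PDE; during the fixed-point iteration the measures $\rho_{u,t}=u(t,\cdot)_{\#}\rho_0$ for arbitrary $u\in S_r$ are not solutions, so KL dissipation gives you nothing, and it is not even immediate that $I_{\nu,\mathrm{Stein}}(\rho_{u,t}|\pi)$ is uniformly finite over $u\in S_r$ without a separate argument (these push-forwards need not have densities, let alone finite Fisher information). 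The paper fills precisely this hole with Lemma~\ref{lem:regularized Fisher}: it identifies $I_{\nu,\mathrm{Stein}}^{1/2}$ as the regularized kernelized Stein discrepancy $S(\rho_{u,t},\pi)$, invokes~\cite[Lemma~18]{gorham2017measuring} to dominate it by $\mc{W}_1(\rho_{u,t},\pi)$, and then uses $u\in Y_r$ to get $\mc{W}_1(\rho_{u,t},\pi)\le r+\mc{W}_1(\rho_0,\pi)$, so that $\int_0^{T_0}I_{\nu,\mathrm{Stein}}(\rho_{u,t}|\pi)^{1/2}dt$ is uniformly small for $T_0$ small. Without this lemma (or some substitute, e.g.\ the cruder estimate $\lv U[\rho]\rv_\infty\le\nu^{-1}\lv k\rv_\infty\lv\Psi[\rho]\rv_{\mc{H}_k^d}$, which is controllable directly from $\lv\rho\rv_{\mc{P}_V}$ via $|\nabla V|\le C_{1,0}(1+V)$ at the cost of a worse $\nu$-dependence) you have not shown that $\mc{F}$ maps $S_r$ into $S_r$, and the contraction argument does not close. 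Your KL-based bound is the right tool for the moment-propagation and global extension step, where you do have a solution in hand, but it cannot be recycled into the local existence step.
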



\begin{remark}
In Theorem \ref{thm:regularized mf PDE unique and existence}, we introduce an upper bound to the $\mc{P}_V$-norm of the solution to \eqref{eq:regularized SVGD mf PDE} for any $\nu\in (0,1]$. A similar result is established for the case of SVGF, i.e., when $\nu=1$ in \cite[Theorem 2.4]{lu2019scaling}. In comparison to \cite[Theorem 2.4]{lu2019scaling}, our result requires that the initial KL-divergence to the target is bounded. Furthermore, if we set $\nu=1$ in our result, we do not end up recovering their result. When $\nu=1$, there is an explicit integral formula to $\mc{D}_{1,\rho_t}\nabla \log \frac{\rho_t}{\pi}$ which is leveraged in \cite{lu2019scaling} for their proof. For $\nu\in (0,1)$, due to the absence of an explicit representation, we get the result in Theorem \ref{thm:regularized mf PDE unique and existence} by carefully analyzing the quantity $\mc{D}_{\nu,\rho_t}(\nabla \log \frac{\rho_t}{\pi})$ along with the decay of KL-divergence property introduced in Proposition \ref{cor:KL derivative}.
\end{remark}

\begin{proof}[Proof of Theorem \ref{thm:regularized mf PDE unique and existence}]\label{pf:CF well-defineness} Our proof leverages the approach of \cite[Theorem 3.2]{lu2019scaling} for the case of SVGF. In comparison to \cite{lu2019scaling}, we handle various difficulties arising with the non-linear operator in R-SVGF. We first prove claim (i) based on the following two steps. Claim (ii) follows directly from claim (i) and \cite[Theorem 5.34]{villani2021topics}.\\

\noindent \textbf{Step 1 (Local well-posedness): } Fix $r>0$ and define
\begin{align}\label{eq:function space CF local} 
    Y_r:=\left\{ u\in Y | \sup_{x\in \mb{R}^d} |u(x)-x|\le r \right\}.
\end{align}
We will prove that there exists $T_0>0$ such that \eqref{eq:CF of the regularized PDE} has a unique solution $\Phi(t,x,\rho_0)$ in the set $S_r=\mc{C}([0,T_0];Y_r)$ which is a complete metric space with metric
\begin{align*}
    d_S\left(u,v\right)=\sup_{t\in [0,T_0]} d_Y \left(u(t,\cdot),v(t,\cdot)\right).
\end{align*}
The integral formulation of \eqref{eq:CF of the regularized PDE} is
\textcolor{black}{
\begin{align}\label{eq:integral formulation of CF}
    \Phi(t,x,\rho_0)&=x-\int_0^t \mc{D}_{\nu,\rho_s} \nabla \log \frac{\rho_s}{\pi}(\Phi(s,x,\rho_0)) ds \nonumber \\
    &=x+\int_0^t \big((1-\nu)\iota_{k,\rho_s}\iota_{k,\rho_s}^*+\nu I\big)^{-1}\mb{E}_{y\sim \rho_s} [-\nabla V(y)k(y,\cdot)+\nabla k(y,\cdot)] (\Phi(s,x,\rho_0))ds.
\end{align}
}
Let us define the operator $\mc{F}:u(t,\cdot)\mapsto \mc{F}(u)(t,\cdot)$ by 
\textcolor{black}{
\begin{align}\label{eq:defn Lip map}
    \mc{F}(u)(t,x)=x+\int_0^t \big((1-\nu)\iota_{k,\rho_{u,s}}\iota_{k,\rho_{u,s}}^*+\nu I\big)^{-1}\mb{E}_{y\sim \rho_{u,s}} [-\nabla V(y)k(y,\cdot)+\nabla k(y,\cdot)] (u(s,x))ds,
\end{align}
}
where $\rho_{u,t}=\left( u(t,\cdot) \right)_{\#}\rho_0$. \textcolor{black}{For the simplicity of notation, we will denote the map defined in \eqref{eq:defn Lip map} by $\mc{F}(u)(t,x)=x-\int_0^t \mc{D}_{\nu,\rho_{u,s}}\nabla\log \frac{\rho_{u,s}}{\pi}(u(s,x))ds$ for any $u\in S_r$.} We now show that $\mc{F}$ is a contraction from $S_r$ to $S_r$ and thus has a unique fixed point. First, we show that $\mc{F}$ maps $S_r$ into $S_r$ for some $T_0>0$. For any $u\in S_r$, checking that $(t,x)\mapsto \mc{F}(u)(t,x)$ is continuous is straightforward. We need to show that $\sup_{x\in \mb{R}^d} |\mc{F}(u)(t,x)-x|\le r $ for any $u\in S_r$. 

Note that there is an equivalent representation for $\mc{D}_{\nu,\rho_{u,s}}$: $$\mc{D}_{\nu,\rho_{u,s}}=\iota_{k,\rho_{u,s}}\left((1-\nu)\iota_{k,\rho_{u,s}}^*\iota_{k,\rho_{u,s}}+\nu I_d\right)^{-1}\iota_{k,\rho_{u,s}}^*.$$ We then analyze the operators $\iota_{k,\rho_{u,s}}$ and $\left((1-\nu)\iota_{k,\rho_{u,s}}^*\iota_{k,\rho_{u,s}}+\nu I_d\right)^{-1}\iota_{k,\rho_{u,s}}^*$ respectively. Since  $\lv k \rv_{\infty}<\infty$, according to Proposition \ref{prop:RKHS property}, $\iota_{k,\rho_{u,s}}$ is the inclusion operator from $\mc{H}_k^d$ to $L_\infty^d(\mb{R}^d)$. The corresponding operator norm, denoted as $\lv \iota_{k,\rho_{u,s}} \rv_{\mc{H}_k^d\to L_\infty^d}$ can be bounded in the following way:
\begin{align}\label{eq:inclusion operator bound}
    \lv \iota_{k,\rho_{u,s}} \rv_{\mc{H}_k^d\to L_\infty^d} &:= \sup_{\lv f \rv_{\mc{H}_k^d}=1} \sup_{x\in \mb{R}^d} |f(x)| \nonumber\\
    &= \sup_{\lv f \rv_{\mc{H}_k^d}=1} \sup_{x\in \mb{R}^d} |\langle k(x,\cdot), f \rangle_{\mc{H}_k}| \nonumber\\
    &\le \sup_{x\in \mb{R}^d} \sqrt{k(x,x)}:=\lv k \rv_\infty.
\end{align}
Meanwhile, let $(\lambda_i,e_i)_{i=1}^\infty$ be the spectrum of $\iota_{k,\rho_{u,s}}\iota_{k,\rho_{u,s}}^*$ with $(e_i)_{i=1}^\infty$ being an orthonormal basis of $L^d_2(\rho_{u,s}) \equiv\overline{\text{Ran}(\iota_{k,\rho_{u,s}}\iota_{k,\rho_{u,s}}^*)}$. According to Proposition \ref{prop:RKHS property}, $(\sqrt{\lambda_i}e_i)_{i=1}^\infty$ is an orthonormal basis of $\mc{H}^d_k$; see also Remark~\ref{rem:trivial}. Hence, we have
\begin{align}\label{eq:inverse gradient RKHS bound}
    &\quad \lv \left((1-\nu)\iota_{k,\rho_{u,s}}^*\iota_{k,\rho_{u,s}}+\nu I_d\right)^{-1}\iota_{k,\rho_{u,s}}^*\nabla \log \frac{\rho_{u,s}}{\pi} \rv_{\mc{H}_k^d}^2 \nonumber \\
    &\le \lv \left((1-\nu)\iota_{k,\rho_{u,s}}^*\iota_{k,\rho_{u,s}}+\nu I_d\right)^{-\frac{1}{2}} \rv_{\mc{H}_k^d\to \mc{H}_k^d}^2 \lv \left((1-\nu)\iota_{k,\rho_{u,s}}^*\iota_{k,\rho_{u,s}}+\nu I_d\right)^{-\frac{1}{2}}\iota_{k,\rho_{u,s}}^*\nabla \log \frac{\rho_{u,s}}{\pi} \rv_{\mc{H}_k^d}^2 \nonumber \\
    &\le \nu^{-1}I_{\nu,Stein}(\rho_{u,s}|\pi).
\end{align}
where the last inequality follows from \eqref{eq:regularized Fisher information} and the fact that $(1-\nu)\iota_{k,\rho_{u,s}}^*\iota_{k,\rho_{u,s}}$ is positive. With \eqref{eq:inclusion operator bound} and \eqref{eq:inverse gradient RKHS bound}, we get the following uniform bound on $|\mc{D}_{\nu,\rho_{u,s}}\textcolor{black}{\nabla}\log \frac{\rho_{u,s}}{\pi}(x)|$ for all $x\in \mb{R}^d$,
\begin{align*}
   \left|\mc{D}_{\nu,\rho_{u,s}}\nabla\log \frac{\rho_{u,s}}{\pi}(x)\right|&\le \lv \mc{D}_{\nu,\rho_{u,s}}\nabla\log \frac{\rho_{u,s}}{\pi} \rv_{L_\infty^d}\\
   &\le \lv \iota_{k,\rho_{u,s}} \rv_{\mc{H}_k^d\to L_\infty^d} \lv \left( (1-\nu)\iota_{k,\rho_{u,s}}^*\iota_{k,\rho_{u,s}}+\nu I_d\right)^{-1} \iota_{k,\rho_{u,s}}^*\nabla\log \frac{\rho_{u,s}}{\pi} \rv_{\mc{H}_k^d}\\
   &\le \nu^{-\frac{1}{2}} \lv k \rv_\infty I_{\nu,Stein}(\rho_{u,s}|\pi)^{\frac{1}{2}}.
\end{align*}
Therefore, for all $t\in [0,T]$ and all $ x\in \mb{R}^d$:
\begin{align}\label{eq:integral solution bound}
    |\mc{F}(u)(t,x)-x|\le \nu^{-\frac{1}{2}} \lv k \rv_\infty \int_0^T I_{\nu,Stein}(\rho_{u,s}|\pi)^{\frac{1}{2}} ds.
\end{align}
According to Lemma \ref{lem:regularized Fisher}, there exists $T_0>0$ such that for all $u\in S_r$,
\begin{align*}
  \int_0^{T_0}I_{\nu,Stein}(\rho_{u,t}|\pi)^{\frac{1}{2}}dt < \nu^{1/2} \lv k \rv_\infty^{-1} r  ,
\end{align*}
which along with \eqref{eq:integral solution bound} implies $|\mc{F}(u)(t,x)-x|\le r$ for all $u\in S_r$. \\

\noindent Next we show that $\mc{F}$ is a contraction on $S_r$. Our goal is to show that there exists $T_0>0$ and $C\in (0,1)$ such that for any $u,v\in S_r$,
\begin{align*}
    \sup_{t\in [0,T_0]}\sup_{x\in \mb{R}^d} \left|\mc{F}(u)(t,x)-\mc{F}(v)(t,x)\right|< C \sup_{t\in [0,T_0]}\sup_{x\in \mb{R}^d} |u(t,x)-v(t,x)|.
\end{align*}
\textcolor{black}{Let  $\rho_{u,t}=(u(t,\cdot))_{\#}\rho_0$, $\rho_{v,t}=(v(t,\cdot))_{\#}\rho_0$,} we have that
\begin{align*}
    \left|\mc{F}(u)(t,x)-\mc{F}(v)(t,x)\right|&=\left|\int_0^t \mc{D}_{\nu,\rho_{u,s}}\nabla \log \frac{\rho_{u,s}}{\pi}\left(u(s,x)\right)-\mc{D}_{\nu,\rho_{v,s}}\nabla \log \frac{\rho_{v,s}}{\pi}\left(v(s,x)\right) ds\right| \\
    &\le \left|\int_0^t \mc{D}_{\nu,\rho_{u,s}}\nabla \log \frac{\rho_{u,s}}{\pi}(u(s,x))-\mc{D}_{\nu,\rho_{v,s}}\nabla \log \frac{\rho_{v,s}}{\pi}(u(s,x)) ds \right|\\
    &\ +\left|\int_0^t \mc{D}_{\nu,\rho_{v,s}}\nabla \log \frac{\rho_{v,s}}{\pi}(u(s,x))-\mc{D}_{\nu,\rho_{2,s}}\nabla \log \frac{\rho_{2,s}}{\pi}(v(s,x))ds \right|\\
    &\le  d_S(u,v)\int_0^{T_0} C_1(t) dt+\int_0^{T_0} L(t) dt \sup_{t\in [0,T_0]}\sup_{x\in \mb{R}^d} |u(t,x)-v(t,x)|\\
    &= d_S(u,v) \int_0^{T_0} C_1(t)+L(t)\  dt,
\end{align*}
where the second inequality follows from Lemma \ref{lem:Lipschitz condition pointwise} and Lemma \ref{lem:Lipschitz condition function space}. Furthermore, according to \eqref{eq:Lipschizt constant pointwise} and \eqref{eq:Lipschitz condition function space}, there exists $T_0>0$ such that
\begin{align*}
     \int_0^{T_0} C_1(t)+L(t)\ dt < 1.
\end{align*}
Therefore we have proved that there exists $T_0>0$ such that $\mc{F}$ is a contraction from $S_r$ into $S_r$. According to the contraction theorem, $\mc{F}$ has a unique fixed point $\Phi(\cdot,\cdot,\rho_0)\in S_r$ which solves \eqref{eq:CF of the regularized PDE}. Defining $\rho_t=(\Phi(t,\cdot,\rho_0))_{\#}\rho_0$, one sees that $\Phi(t,x,\rho_0)$ solves \eqref{eq:CF of the regularized PDE} in the time interval $[0,T_0]$.\\

\noindent\textbf{Step 2 (Extension of local solution):} According to \eqref{eq:Lipschitz condition pointwise} and \eqref{eq:Lipschitz condition function space}, we can extend the local solution beyond time $T_0$ as long as the quantity
\begin{align*}
    \lv \rho_t \rv_{\mc{P}_V} =\int_{\mb{R}^d} (1+V(\Phi(t,x,\rho_0))) d\rho_0(x)
\end{align*}
remains finite. Next we establish a bound on this quantity showing that the local solution can be extended to any $t>0$. \begin{align*}
    \partial_t \int_{\mb{R}^d} \left( 1+V(\Phi(t,x,\rho_0)) \right) \rho_0(dx) &=-\int_{\mb{R}^d} \leftlangle\nabla V( \Phi(t,x,\rho_0) ) , \mc{D}_{\nu,\rho_t}\nabla \log \frac{\rho_t}{\pi}(\Phi(t,x,\rho_0)) \rightrangle d\rho_0(x) \\
     &=-\leftlangle \nabla V, \iota_{k,\rho_t}\left((1-\nu)\iota_{k,\rho_t}^*\iota_{k,\rho_t}+\nu I_d\right)^{-1}\iota_{k,\rho_t}^* \nabla \log \frac{\rho_t}{\pi} \rightrangle_{L_2^d(\rho_t)} \\
     &=-\leftlangle \iota_{k,\rho_t}^* \nabla V, \left((1-\nu)\iota_{k,\rho_t}^*\iota_{k,\rho_t}+\nu I_d\right)^{-1}\iota_{k,\rho_t}^* \nabla \log \frac{\rho_t}{\pi} \rightrangle_{\mc{H}_k^d} \\
     &\le \lv \iota_{k,\rho_t}^* \nabla V \rv_{\mc{H}_k^d} \lv \left((1-\nu)\iota_{k,\rho_t}^*\iota_{k,\rho_t}+\nu I_d\right)^{-1}\iota_{k,\rho_t}^* \nabla \log \frac{\rho_t}{\pi} \rv_{\mc{H}_k^d},
 \end{align*}
 where 
 \begin{align*}
     \lv \iota_{k,\rho_t}^* \nabla V \rv_{\mc{H}_k^d}^2&=\leftlangle \nabla V, \iota_{k,\rho}\iota_{k,\rho_t}^*\nabla V  \rightrangle_{L_2^d(\rho_t)}\\
     &=\int_{\mb{R}^d} \int_{\mb{R}^d}  k(y,z) \leftlangle \nabla V(y) ,\nabla V(z) \rightrangle d\rho_t(y) d\rho_t(z)\\
     &=\int_{\mb{R}^d}\int_{\mb{R}^d} k(\Phi(t,y,\rho_0),\Phi(t,z,\rho_0)) \left\langle \nabla V(\Phi(t,y,\rho_0)), \nabla V(\Phi(t,z,\rho_0)) \right\rangle d\rho_0(y) d\rho_0(z) \\
     &\le \lv k \rv_\infty^2 \left( \int_{\mb{R}^d} |\textcolor{black}{\nabla V(\Phi(t,y,\rho_0))}| d\rho_0(y)  \right)^2 \\
     &\le  \lv k \rv_\infty^2 C_{1,0}^2 \lv \rho_t \rv_{\mc{P}_V}^2,
 \end{align*}
 where the last inequality follows from Assumption \ref{ass:existence and uniqueness on V}. Therefore,
 \begin{align*}
     \partial_t \lv \rho_t \rv_{\mc{P}_V}&\le C_{1,0}\lv k \rv_\infty \lv \rho_t \rv_{\mc{P}_V} \lv  ((1-\nu)\iota_{k,\rho_t}^*\iota_{k,\rho_t}+\nu I_d)^{-1}\iota_{k,\rho_t}^* \nabla \log \frac{\rho_t}{\pi} \rv_{\mc{H}_k^d} \\
     &\le C_{1,0}  \lv k \rv_\infty \nu^{-\frac{1}{2}} I_{\nu,Stein}(\rho_t|\pi)^{\frac{1}{2}} \lv \rho_t \rv_{\mc{P}_V},
 \end{align*}
 where the last inequality follows from \eqref{eq:inverse gradient RKHS bound}.
 It follows from Gronwall's inequality that
 \begin{align}\label{eq:PV norm bound}
     \lv \mc{\rho}_t \rv_{\mc{P}_V} &\le  \lv \rho_0 \rv_{\mc{P}_V} \exp\left( C_{1,0} \nu^{-\frac{1}{2}} \lv k \rv_\infty \int_0^t I_{\nu,Stein}(\rho_s|\pi)^{\frac{1}{2}} ds \right)\nonumber \\
     &\le  \lv \rho_0 \rv_{\mc{P}_V} \exp\left( C_{1,0} \nu^{-\frac{1}{2}} \lv k \rv_\infty \sqrt{t \int_0^t I_{\nu,Stein}(\rho_s|\pi) ds} \right) \nonumber \\
     &\le  \lv \rho_0 \rv_{\mc{P}_V} \exp\left( C_{1,0} \nu^{-\frac{1}{2}} \lv k \rv_\infty \sqrt{t \KL(\rho_0|\pi) } \right),
 \end{align}
 where the second inequality follows from Jensen's inequality and the last inequality follows from \eqref{eq:KL derivative}. With this bound, we can iterate the argument to extend the local solution defined on $[0,T_0]\times \mb{R}^d$ to all of $[0,\infty)\times \mb{R}^d$, so that \eqref{eq:PV norm bound} holds for all $t>0$. Finally, \textcolor{black}{$\Phi(\cdot,x,\rho_0)$} has continuous first order derivative due to the integral formulation in \eqref{eq:integral formulation of CF}, Assumption \ref{ass:existence and uniqueness on K} and Assumption \ref{ass:existence and uniqueness on V}. The proof is thus complete. 
\end{proof}
\begin{lemma}\label{lem:regularized Fisher} Let $\rho_0\in\mc{P}_V$ and suppose the assumptions in Theorem \ref{thm:regularized mf PDE unique and existence} hold. Then, for any $\epsilon>0$, there exists a constant $T>0$ such that for all $u\in S_r$ and $t\in [0,T]$, with $\rho_{u,t}=u(t,\cdot)_{\#}\rho_0$, we have 
\begin{align}\label{eq:bound regularized FI for small t}
     \int_0^T I_{\nu,Stein}(\rho_{u,t}|\pi)^{\frac{1}{2}}dt < \epsilon.
 \end{align}
\end{lemma}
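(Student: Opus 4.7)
The plan is to produce a uniform-in-$u$ bound on $I_{\nu,Stein}(\rho_{u,t}|\pi)$ for $u\in S_r$ and $t\ge 0$, from which the claim follows by choosing $T$ proportional to $\epsilon$. The first step is to exploit the spectral inequality already used in the main proof, namely that the positive operator $\bigl((1-\nu)\iota_{k,\rho}^{*}\iota_{k,\rho}+\nu I_d\bigr)^{-1}$ has operator norm at most $\nu^{-1}$ on $\mathcal{H}_k^d$. This yields
\[
I_{\nu,Stein}(\rho|\pi)\le \nu^{-1}\,\bigl\lVert \iota_{k,\rho}^{*}\nabla\log\tfrac{\rho}{\pi}\bigr\rVert_{\mathcal{H}_k^d}^{2},
\]
so it suffices to control the RKHS norm of $\iota_{k,\rho_{u,t}}^{*}\nabla\log(\rho_{u,t}/\pi)$ uniformly in $u\in S_r$ and $t$.

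Next, as already derived in Remark~\ref{remark:alternative op vf}(i), integration by parts gives
\[
\iota_{k,\rho}^{*}\nabla\log\tfrac{\rho}{\pi} \;=\; \mathbb{E}_{y\sim \rho}\bigl[\,\nabla V(y)\,k(y,\cdot) - \nabla_1 k(y,\cdot)\,\bigr].
\]
Rewriting the expectation via the push-forward identity $\rho_{u,t}=u(t,\cdot)_{\#}\rho_0$ and applying the triangle (Bochner) inequality in $\mathcal{H}_k^d$, I get
\[
\bigl\lVert \iota_{k,\rho_{u,t}}^{*}\nabla\log\tfrac{\rho_{u,t}}{\pi}\bigr\rVert_{\mathcal{H}_k^d} \le \mathbb{E}_{x\sim\rho_0}\Bigl[\,|\nabla V(u(t,x))|\,\|k(u(t,x),\cdot)\|_{\mathcal{H}_k} + \|\nabla_1 k(u(t,x),\cdot)\|_{\mathcal{H}_k^d}\,\Bigr].
\]
Using Assumption~\ref{ass:existence and uniqueness on K}, the reproducing property gives $\|k(z,\cdot)\|_{\mathcal{H}_k}=\sqrt{k(z,z)}\le \|k\|_\infty$, and the identity $\|\nabla_1 k(z,\cdot)\|_{\mathcal{H}_k^d}^2=\sum_{i=1}^d\partial_i^{(1)}\partial_i^{(2)}k(z,z)=(\nabla_1\!\cdot\!\nabla_2 k)(z,z)\le \|\nabla_1\!\cdot\!\nabla_2 k\|_\infty$, both uniformly in $z\in\mathbb{R}^d$.

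It remains to bound $\mathbb{E}_{x\sim\rho_0}|\nabla V(u(t,x))|$ uniformly in $u\in S_r$. Since $u\in Y_r$ implies $|u(t,x)|\le |x|+r$, I apply Assumption~\ref{ass:existence and uniqueness on V}(2) with $(\alpha,\beta)=(1,r)$ to get $|\nabla V(u(t,x))|\le C_{1,r}(1+V(x))$, so that
\[
\mathbb{E}_{x\sim\rho_0}|\nabla V(u(t,x))| \le C_{1,r}\,\|\rho_0\|_{\mathcal{P}_V},
\]
which is finite since $\rho_0\in\mathcal{P}_V$. Combining all of the above produces a constant
\[
M \;:=\; C_{1,r}\|k\|_\infty\|\rho_0\|_{\mathcal{P}_V} + \|\nabla_1\!\cdot\!\nabla_2 k\|_\infty^{1/2},
\]
independent of $u\in S_r$ and of $t\ge 0$, such that $I_{\nu,Stein}(\rho_{u,t}|\pi)^{1/2}\le \nu^{-1/2}M$. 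Choosing $T<\epsilon\nu^{1/2}/M$ then yields \eqref{eq:bound regularized FI for small t}.

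The bound is actually uniform in $t$, so there is no delicate small-time analysis required; the main conceptual point is that the growth condition in Assumption~\ref{ass:existence and uniqueness on V}(2), applied at the shifted point $u(t,x)$ rather than $x$, passes through the push-forward because $u\in S_r$ forces $|u(t,x)-x|\le r$. This is the step that most concretely uses the restriction to $S_r$ in the definition \eqref{eq:function space CF local}, and it is where the proof would break if one attempted the same argument for an unbounded family of maps.
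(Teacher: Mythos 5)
Your proof is correct, but it takes a genuinely different route from the paper. The paper bounds $I_{\nu,Stein}(\rho_{u,t}|\pi)^{1/2}$ by recognizing it as the regularized Stein discrepancy $S(\rho_{u,t},\pi)$ via the variational characterization from Proposition~\ref{lem:regularized SVGD optimal vf}, then invokes a result of Gorham and Mackey (\cite[Lemma 18]{gorham2017measuring}) which controls $|\mathbb{E}_{x\sim\rho}[\operatorname{trace}(\mathcal{A}_\pi\phi)]|$ in terms of $\mathcal{W}_1(\rho,\pi)$ together with the quantities $M_0(\phi), M_1(\phi), M_2(\phi)$ and $M_1(\nabla V)=L_V$. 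It then uses $\mathcal{W}_1(\rho_{u,t},\pi)\le r+\mathcal{W}_1(\rho_0,\pi)$, bounds the $M_j(\phi)$ by RKHS norms using \cite[Lemma 4.34]{steinwart2008support}, and finally uses $\|\phi^*\|_{\mathcal{H}_k^d}\le\nu^{-1/2}$. Your argument bypasses all of this: you bound $\|((1-\nu)\iota^*\iota+\nu I_d)^{-1}\|_{\text{op}}\le\nu^{-1}$, then control $\|\iota_{k,\rho_{u,t}}^*\nabla\log\frac{\rho_{u,t}}{\pi}\|_{\mathcal{H}_k^d}$ directly as a Bochner integral using the representation from Remark~\ref{remark:alternative op vf}(i), and bound the integrand pointwise via the reproducing property, Assumption~\ref{ass:existence and uniqueness on K}, and Assumption~\ref{ass:existence and uniqueness on V}(2) with $(\alpha,\beta)=(1,r)$ applied at the shifted point $u(t,x)$. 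This is essentially the same elementary technique the paper itself uses in Step~2 of Theorem~\ref{thm:regularized mf PDE unique and existence} to bound $\|\iota_{k,\rho_t}^*\nabla V\|_{\mathcal{H}_k^d}$, just applied here rather than the machinery from \cite{gorham2017measuring}.

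What your approach buys: the bound is sharper in its hypotheses and more self-contained. Your constant $M$ depends only on $\|\rho_0\|_{\mathcal{P}_V}$, $\|k\|_\infty$, $\|\nabla_1\cdot\nabla_2 k\|_\infty$, and $C_{1,r}$, all of which are finite under the lemma's stated hypotheses ($\rho_0\in\mathcal{P}_V$ and Assumptions \ref{ass:existence and uniqueness on K}, \ref{ass:existence and uniqueness on V}). The paper's argument additionally requires $\mathcal{W}_1(\rho_0,\pi)<\infty$, which does not follow automatically from $\rho_0\in\mathcal{P}_V$ (the weight $1+V$ is only known to diverge, not to dominate $|x|$), and also uses the Lipschitz constant $L_V$ and $\|\pi\|_{\mathcal{P}_V}$. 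So your proof actually covers a slightly larger class of initial data and avoids an implicit assumption. Both proofs correctly identify the role of the constraint $u\in S_r$: for you it is what makes $(\alpha,\beta)=(1,r)$ applicable at $u(t,x)$; for the paper it is what gives $\mathcal{W}_1(\rho_0,\rho_{u,t})\le r$. And both produce a uniform-in-$(u,t)$ bound, so the final choice of $T$ is the same trivial step.
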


\begin{proof}[Proof of Lemma~\ref{lem:regularized Fisher}]
According to \textcolor{black}{Proposition} \ref{lem:regularized SVGD optimal vf}, the regularized kernelized Stein discrepancy can be written as
\begin{align*}
    S(\rho_{u,t}|\pi)^2 &= \left( \mb{E}_{x\sim \rho_{u,t}} \left[ \text{trace}(\mc{A}_\pi \phi_{\rho_{u,t},\pi}^*(x)) \right] \right)^2\\
    &= \lv \left( (1-\nu)\iota_{k,\rho_{u,t}}^*\iota_{k,\rho_{u,t}}+\nu I_d \right)^{-\frac{1}{2}}\iota_{k,\rho_{u,t}}^*\nabla \log \frac{\rho_{u,t}}{\pi} \rv_{\mc{H}_k^d}^2 \\
    &=\leftlangle \iota_{k,\rho_{u,t}}^*\nabla \log \frac{\rho_{u,t}}{\pi},  \left((1-\nu) \iota_{k,\rho_{u,t}}^*\iota_{k,\rho_{u,t}}+\nu I_d \right)^{-1}\iota_{k,\rho_{u,t}}^*\nabla \log \frac{\rho_{u,t}}{\pi} \rightrangle_{\mc{H}_k^d}
    =I_{\nu,Stein}(\rho_{u,t},\pi).
\end{align*}
Meanwhile, since $\rho_{u,t}=u(t,\cdot)_{\#}\rho_0$ with $u\in S_r$, for any $\mb{R}^d$-valued random vector $X$, $u(t,X)\sim \rho_{u,t}$ and $|u(t,X)-X|\le r$ almost surely. Therefore,
\begin{align*}
    \textcolor{black}{\mc{W}_1}(\rho_{u,t},\pi)&\le \textcolor{black}{\mc{W}_1}(\rho_0,\rho_{u,t})+\textcolor{black}{\mc{W}_1}(\rho_0,\pi) 
    =\inf_{X\sim \rho_0,Y\sim \rho_{u,t}} \mb{E}\left[ |X-Y| \right]+\textcolor{black}{\mc{W}_1}(\rho_0,\pi) 
    \le r+\textcolor{black}{\mc{W}_1}(\rho_0,\pi),
\end{align*}
\textcolor{black}{where $\mc{W}_1$ is the Wasserstein-1 distance defined in Section \ref{sec:notation}.} Next, we upper bound the regularized kernelized Stein discrepancy by the Wasserstein-\textcolor{black}{1} distance. According to \cite[Lemma 18]{gorham2017measuring}, for any general vector field $\phi\in \mc{H}_k^d$, we have 
\begin{align*}
    \left| \mb{E}_{x\sim \rho_{u,t}} \left[ \text{trace}(\mc{A}_\pi \phi(x)) \right] \right|&\le \left(M_0(\phi)M_1(\nabla V)+M_2(\phi)d\right)\textcolor{black}{\mc{W}_1}(\rho_{u,t},\pi) \\ &\quad\quad +\sqrt{2M_0(\phi)M_1(\phi)\mb{E}_{x\sim \pi}\left[ |\nabla V(x)|^2\right] \textcolor{black}{\mc{W}_1}(\rho_{u,t},\pi)},
\end{align*}
where for any $g:\mb{R}^d\to \mb{R}^d$ and $g\in C^1(\mb{R}^d)$,
\begin{align*}
    M_0(g):=\sup_{x\in \mb{R}^d} |g(x)|, \quad M_1(g):=\sup_{x\neq y} \frac{|g(x)-g(y)|}{|x-y|}, \quad  M_2(g):=\sup_{x\neq y} \frac{\lv \nabla g(x)-\nabla g(y) \rv_2}{|x-y|}.
\end{align*}
For any $\phi\in \mc{H}_k^d$ and $\phi=[\phi_1,\cdots,\phi_d]^T$, according to \cite[Lemma 4.34]{steinwart2008support},
\begin{align*}
    \sup_{x\in \mb{R}^d} |D^\alpha \phi_i(x)|=\sup_{x\in \mb{R}^d} \left|D^\alpha \leftlangle \phi_i,k(x,\cdot) \rightrangle_{\mc{H}_k} \right|\le \lv \phi_i \rv_{\mc{H}_k^d} \sup_{x\in \mb{R}^d} |D_1^\alpha D_2^\alpha k(x,x) |^{\frac{1}{2}}.
\end{align*}
Therefore,
\begin{align*}
    M_0(\phi)&=\sup_{x\in \mb{R}^d} \sqrt{\sum_{i=1}^d \phi_i(x)^2}\le \sqrt{ \sum_{i=1} \lv \phi_i \rv_{\mc{H}_k}^2 \sup_{x\in \mb{R}^d} k(x,x) } =\lv k \rv_\infty \lv \phi \rv_{\mc{H}_k^d},\\
    &\\
    M_1(\phi)&= \sup_{x\neq y} \frac{\sqrt{\sum_{i=1}^d\left( \phi_i(x)-\phi_i(y)\right)^2}}{|x-y|}\le \sqrt{ \sum_{i=1}^d \sup_{x\in \mb{R}^d} |\nabla \phi_i(x)|^2 } \\
    &\le \sqrt{ \sum_{i=1}^d\sum_{j=1}^d \lv \phi_i \rv_{\mc{H}_k}^2 \sup_{x\in \mb{R}^d} D_1^{e_j}D_2^{e_j} k(x,x) } 
    =  \left(\sup_{x\in \mb{R}^d} \text{trace} \left(\nabla_1 \nabla_2 k(x,x) \right)\right)^{\frac{1}{2}}\lv \phi \rv_{\mc{H}_k^d} \\
    &\le \lv \nabla_1\cdot\nabla_2 k \rv_\infty^{\frac{1}{2}} \lv \phi \rv_{\mc{H}_k^d},\\
    &\\
    M_2(\phi)&=\sup_{x\neq y} \frac{\lv \nabla \phi(x)-\nabla \phi(y) \rv_2}{|x-y|}\le \sup_{x\neq y} \frac{\lv \nabla \phi(x)-\nabla \phi(y) \rv_F}{|x-y|} 
    \le \sqrt{\sum_{i,j=1}^d \sup_{x\neq y} \frac{|\partial_j \phi_i(x)-\partial_j \phi_i(y)|^2}{|x-y|^2} } \\
    &\le \sqrt{\sum_{i,j,l=1}^d \sup_{x\in \mb{R}^d} D_1^{e_j+e_l}D_2^{e_j+e_l} k(x,x) \lv \phi_i \rv_{\mc{H}_k}^2   }
    \le \lv \nabla_1^2 \cdot \nabla_2^2 k\rv_\infty^{\frac{1}{2}} \lv \phi \rv_{\mc{H}_k^d}.
\end{align*} 
According to Assumption \ref{ass:existence and uniqueness on V}, $M_1(\nabla V)=L_V$ and $\mb{E}_{x\sim \pi}\left[ |\nabla V(x)|^2 \right]\le C_{1,0} \lv \pi \rv_{\mc{P}_V} $. Therefore,
\begin{align*}
    \left| \mb{E}_{x\sim \rho_{u,t}} \left[ \text{trace}(\mc{A}_\pi \phi(x)) \right] \right|&\le \left( \lv k \rv_\infty L_V+\lv \nabla_1^2\cdot\nabla_2^2 k \rv_\infty^{\frac{1}{2}}d \right)\lv \phi \rv_{\mc{H}_k^d} (\textcolor{black}{\mc{W}_1}(\rho_0,\pi)+r)\\
    &\quad+\sqrt{ 2\lv k \rv_\infty \lv \nabla_1\cdot\nabla_2 k \rv_\infty^{\frac{1}{2}}C_{1,0}\lv \pi \rv_{\mc{P}_V}(\textcolor{black}{\mc{W}_1}(\rho_0,\pi)+r) } \lv \phi \rv_{\mc{H}_k^d}.
\end{align*}
Note that $\phi_{k,\rho_{u,t}}^*$ satisfies that $\nu \lv \phi_{k,\rho_{u,t}}^* \rv_{\mc{H}_k^d}^2+(1-\nu)\lv \phi_{k,\rho_{u,t}}^* \rv_{L^d_2(\rho_{u,t})^d}\le 1$. Therefore, 
$$
\lv \phi_{k,\rho_{u,t}}^* \rv_{\mc{H}_k^d}\le \nu^{-1/2},
$$ 
and
\begin{align*}
    I_{\nu,Stein}(\rho_{u,t}|\pi)^{\frac{1}{2}}&=S(\rho_{u,t},\pi) 
    =\left| \mb{E}_{x\sim \rho_{u,t}} \left[ \text{trace}(\mc{A}_\pi \phi_{k,\rho_{u,t}}^*(x)) \right] \right|\\
    &\le \nu^{-\frac{1}{2}} \left( \lv k \rv_\infty L_V+\lv \nabla_1^2\cdot\nabla_2^2 k \rv_\infty^{\frac{1}{2}}d \right)(\textcolor{black}{\mc{W}_1}(\rho_0,\pi)+r)\\
    &\quad+\nu^{-\frac{1}{2}}\sqrt{ 2\lv k \rv_\infty \lv \nabla_1\cdot\nabla_2 k \rv_\infty^{\frac{1}{2}}C_{1,0}\lv \pi \rv_{\mc{P}_V}(\textcolor{black}{\mc{W}_1}(\rho_0,\pi)+r) }. 
\end{align*}
Since the upper bound is independent of the choice of $u(t,\cdot)\in S_r$ and the time variable $t$, for any $\epsilon>0$, we can choose a small enough $T$ such that \eqref{eq:bound regularized FI for small t} holds.
\end{proof}

\begin{lemma}\label{lem:Lipschitz condition pointwise} Under the assumptions in Theorem \ref{thm:regularized mf PDE unique and existence}, let $S_r=\mc{C}([0,T];Y_r)$ with $Y_r$ defined in \eqref{eq:function space  CF local}. Then for any $t\in [0,T]$, there exists $L(t)>0$ such that for any $u\in S_r$, for all $x,y\in \mb{R}^d$ and $t\in [0,T]$,
\begin{align}\label{eq:Lipschitz condition pointwise}
    \left|\mc{D}_{\nu,\rho_{u,t}}\nabla \log \frac{\rho_{u,t}}{\pi}(x)-\mc{D}_{\nu,\rho_{u,t}}\nabla \log \frac{\rho_{u,t}}{\pi}(y)\right|\le  L(t)|x-y| ,
\end{align}
where for all $t\in [0,T]$,  $\rho_{u,t}=(u(t,\cdot))_{\#}\rho_0$ and 
\begin{align}\label{eq:Lipschizt constant pointwise}
    L(t)=\nu^{-\frac{1}{2}} \left( 2\lv \nabla_1\nabla_2 k \rv_\infty+3\lv \nabla^2 k \rv_\infty\right)^{\frac{1}{2}} I_{\nu,Stein}(\rho_{u,t}|\pi)^{\frac{1}{2}}.
\end{align}
\end{lemma}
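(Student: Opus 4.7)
The plan is to reduce the Lipschitz estimate to two separate ingredients: a uniform bound on the $\mc{H}_k^d$-norm of the vector field, and a kernel-smoothness bound controlling $\lv k(x,\cdot)-k(y,\cdot)\rv_{\mc{H}_k}^2$ by $|x-y|^2$.

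First, I would set $g:=\left((1-\nu)\iota_{k,\rho_{u,t}}^*\iota_{k,\rho_{u,t}}+\nu I_d\right)^{-1}\iota_{k,\rho_{u,t}}^*\nabla \log \frac{\rho_{u,t}}{\pi}\in \mc{H}_k^d$, so that $\mc{D}_{\nu,\rho_{u,t}}\nabla \log \frac{\rho_{u,t}}{\pi}(x)=g(x)$ is obtained by pointwise evaluation of an $\mc{H}_k^d$-element. The reproducing property applied componentwise yields $g_i(x)-g_i(y)=\langle g_i, k(x,\cdot)-k(y,\cdot)\rangle_{\mc{H}_k}$, so by Cauchy--Schwarz
\begin{align*}
    |g(x)-g(y)|^2\le \lv g \rv_{\mc{H}_k^d}^2\, \lv k(x,\cdot)-k(y,\cdot) \rv_{\mc{H}_k}^2.
\end{align*}

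Second, the norm $\lv g \rv_{\mc{H}_k^d}^2$ is precisely the quantity already controlled in \eqref{eq:inverse gradient RKHS bound} during the proof of Theorem \ref{thm:regularized mf PDE unique and existence}, giving $\lv g \rv_{\mc{H}_k^d}^2\le \nu^{-1} I_{\nu,Stein}(\rho_{u,t}|\pi)$, so the desired $\nu^{-1/2} I_{\nu,Stein}^{1/2}$ factor in $L(t)$ is immediate.

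Third, the crux is the kernel-difference estimate
\begin{align*}
    \lv k(x,\cdot)-k(y,\cdot) \rv_{\mc{H}_k}^2=k(x,x)-2k(x,y)+k(y,y)\le \left( 2\lv \nabla_1\nabla_2 k \rv_\infty+3\lv \nabla^2 k \rv_\infty\right)|x-y|^2.
\end{align*}
I would derive this via a second-order Taylor expansion along the segment $z(s)=y+s(x-y)$. Writing $\phi(s):=k(z(s),z(s))$ and $\psi(s):=k(z(s),y)$, the target quantity equals $\phi(1)-\phi(0)-2(\psi(1)-\psi(0))$. Symmetry of $k$ gives $\nabla_1 k(z,z)=\nabla_2 k(z,z)$, hence the first-order terms cancel: $\phi'(0)=2\psi'(0)$. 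The remaining integral remainder $\int_0^1(1-t)(\phi''(t)-2\psi''(t))\,dt$ involves second derivatives $\nabla_1^2 k$, $\nabla_2^2 k$ and $\nabla_1\nabla_2 k$, which are uniformly bounded by Assumption \ref{ass:existence and uniqueness on K}; tallying the contributions produces the stated constant.

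Putting the three pieces together immediately yields \eqref{eq:Lipschitz condition pointwise} with $L(t)$ as in \eqref{eq:Lipschizt constant pointwise}. The main (though essentially routine) obstacle is the bookkeeping in the Taylor expansion for the kernel difference; the remaining steps are direct consequences of machinery already established earlier in the paper.
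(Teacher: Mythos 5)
Your proposal matches the paper's own argument: factor out the inclusion $\iota_{k,\rho_{u,t}}$ so that $\mc{D}_{\nu,\rho_{u,t}}\nabla\log\frac{\rho_{u,t}}{\pi}$ is the pointwise evaluation of an $\mc{H}_k^d$-element, apply the reproducing property plus Cauchy--Schwarz, invoke \eqref{eq:inverse gradient RKHS bound} for the $\nu^{-1/2}I_{\nu,Stein}^{1/2}$ factor, and control $\lv k(x,\cdot)-k(y,\cdot)\rv_{\mc{H}_k}^2=k(x,x)-2k(x,y)+k(y,y)$ by a second-order Taylor expansion exactly as in the paper. The only thing left implicit in your sketch (the bookkeeping that yields the precise constant $2\lv\nabla_1\nabla_2 k\rv_\infty+3\lv\nabla^2 k\rv_\infty$) is also left implicit in the paper, so no gap.
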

\begin{proof}[Proof of Lemma~\ref{lem:Lipschitz condition pointwise}]
\label{pf:Lipschitz condition pointwise} 
Since $\mc{D}_{\nu,\rho_{u,t}}=\iota_{k,\rho_{u,t}}\left((1-\nu)\iota_{k,\rho_{u,t}}^*\iota_{k,\rho_{u,t}}+\nu I_d\right)^{-1}\iota_{k,\rho_{u,t}}^*$ and $\iota_{k,\rho_{u,t}}$ is the inclusion operator,
 \begin{align*}
     &\quad \left|\mc{D}_{\nu,\rho_{u,t}}\nabla \log \frac{\rho_{u,t}}{\pi}(x)-\mc{D}_{\nu,\rho_{u,t}}\nabla \log \frac{\rho_{u,t}}{\pi}(y)\right|\\
     & =\left|\left((1-\nu)\iota_{k,\rho_{u,t}}^*\iota_{k,\rho_{u,t}}+\nu I_d\right)^{-1}\iota_{k,\rho_{u,t}}^*\nabla \log \frac{\rho_{u,t}}{\pi}(x)-\left((1-\nu)\iota_{k,\rho_{u,t}}^*\iota_{k,\rho_{u,t}}+\nu I_d\right)^{-1}\iota_{k,\rho_{u,t}}^*\nabla \log \frac{\rho_{u,t}}{\pi}(y)\right|\\
     &=\left|\leftlangle k(x,\cdot)-k(y,\cdot),\left((1-\nu)\iota_{k,\rho_{u,t}}^*\iota_{k,\rho_{u,t}}+\nu I_d\right)^{-1}\iota_{k,\rho_{u,t}}^*\nabla \log \frac{\rho_{u,t}}{\pi}  \rightrangle_{\mc{H}_k}\right|\\
     &\le \lv  k(x,\cdot)-k(y,\cdot) \rv_{\mc{H}_k} \lv \left((1-\nu)\iota_{k,\rho_{u,t}}^*\iota_{k,\rho_{u,t}}+\nu I_d\right)^{-1}\iota_{k,\rho_{u,t}}^*\nabla \log \frac{\rho_{u,t}}{\pi} \rv_{\mc{H}_k^d}\\
     & \le \nu^{-\frac{1}{2}}I_{\nu,Stein}(\rho_{u,t}|\pi)^{\frac{1}{2}} \lv  k(x,\cdot)-k(y,\cdot) \rv_{\mc{H}_k^d},
 \end{align*}
 where the second identity follows from the reproducing property and the last inequality follows from \eqref{eq:inverse gradient RKHS bound}. Furthermore, we can write
 \begin{align*}
     \lv  k(x,\cdot)-k(y,\cdot) \rv_{\mc{H}_k^d}^2&=k(x,x)-2k(x,y)+k(y,y)\\
     &\le \left( 2\lv \nabla_1\nabla_2 k \rv_\infty+3\lv \nabla^2 k \rv_\infty\right) |x-y|^2,
 \end{align*}
 where the first identity follows from the RKHS property and the second identity follows from Taylor expansion and Assumption \ref{ass:existence and uniqueness on K}. Therefore, \eqref{eq:Lipschitz condition pointwise} holds with $L(t)$ defined in \eqref{eq:Lipschizt constant pointwise}.
\end{proof}
\begin{lemma}\label{lem:Lipschitz condition function space}
Under the assumptions in Theorem \ref{thm:regularized mf PDE unique and existence}, let $S_r=\mc{C}([0,T];Y_r)$ with $Y_r$ defined in \eqref{eq:function space  CF local}. Then for any $t\in [0,T]$, there exists $C_1(t)>0$ such that for any $u,v\in S_r$,
\begin{align}\label{eq:Lipschitz condition function space}
    \sup_{x\in \mb{R}^d } \left| \mc{D}_{\nu,\rho_{u,t}}\nabla \log \frac{\rho_{u,t}}{\pi}(x)-\mc{D}_{\nu,\rho_{v,t}}\nabla \log \frac{\rho_{v,t}}{\pi}(x) \right|\le C_1(t) d_S(u,v),
\end{align}
where for all $t\in [0,T]$, $\rho_{u,t}=(u(t,\cdot))_{\#}\rho_0$, $\rho_{v,t}=(v(t,\cdot))_{\#}\rho_0$ and
\begin{align}\label{eq:Lipschitz constant function space}
     C_1(t)= 2\nu^{-\frac{3}{2}}(1-\nu)\lv k \rv_\infty^2\left( 2\lv \nabla_1\nabla_2 k\rv_\infty+3\lv \nabla^2 k \rv_\infty \right)^{\frac{1}{2}}  \lv \nabla k \rv_\infty I_{\nu,Stein}(\rho_{u,t}|\pi)^{\frac{1}{2}}+\nu^{-1}L_r \lv k \rv_\infty ,
\end{align}
with $L_r$ being defined in \eqref{eq:parameter Lr}.
\end{lemma}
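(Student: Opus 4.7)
The plan is to reduce the pointwise difference to a single RKHS inner product, split the resulting $\mc{H}_k^d$-valued difference via a resolvent identity, and then establish two separate Lipschitz estimates in $d_S$. Set $A_\rho := (1-\nu)\iota^*_{k,\rho}\iota_{k,\rho} + \nu I_d$, $B_\rho := \iota^*_{k,\rho}\nabla \log(\rho/\pi)$, and $\rho_u := \rho_{u,t}$, $\rho_v := \rho_{v,t}$. By the reproducing property used in the proof of Lemma~\ref{lem:Lipschitz condition pointwise},
\begin{align*}
   \mc{D}_{\nu,\rho_u}\nabla \log(\rho_u/\pi)(x) - \mc{D}_{\nu,\rho_v}\nabla \log(\rho_v/\pi)(x) = \bigl\langle k(x,\cdot),\; A_u^{-1}B_u - A_v^{-1}B_v\bigr\rangle_{\mc{H}_k^d}.
\end{align*}
Applying the resolvent identity
\begin{align*}
   A_u^{-1}B_u - A_v^{-1}B_v = A_v^{-1}(B_u - B_v) + A_v^{-1}(A_v - A_u)A_u^{-1}B_u,
\end{align*}
together with $\|k(x,\cdot)\|_{\mc{H}_k}\le \|k\|_\infty$, $\|A_v^{-1}\|_{\mc{H}_k^d\to\mc{H}_k^d}\le \nu^{-1}$ (since $A_v\ge \nu I_d$), and the bound $\|A_u^{-1}B_u\|_{\mc{H}_k^d}\le \nu^{-1/2}I_{\nu,Stein}(\rho_u|\pi)^{1/2}$ from~\eqref{eq:inverse gradient RKHS bound}, yields
\begin{align*}
   \sup_{x\in\mb{R}^d}\bigl|\mc{D}_{\nu,\rho_u}\nabla\log(\rho_u/\pi)(x) - \mc{D}_{\nu,\rho_v}\nabla\log(\rho_v/\pi)(x)\bigr| \le \nu^{-1}\|k\|_\infty\,\|B_u - B_v\|_{\mc{H}_k^d} + \nu^{-3/2}\|k\|_\infty\,\|A_v - A_u\|_{\text{op}}\, I_{\nu,Stein}(\rho_u|\pi)^{1/2}.
\end{align*}

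For the first Lipschitz estimate, use the push-forward identities $\rho_u = u(t,\cdot)_\#\rho_0$ and $\rho_v = v(t,\cdot)_\#\rho_0$ to write, for any $f\in\mc{H}_k^d$,
\begin{align*}
   (A_v - A_u)f(\cdot) = (1-\nu)\,\mb{E}_{z\sim\rho_0}\bigl[k(\cdot,v(t,z))f(v(t,z)) - k(\cdot,u(t,z))f(u(t,z))\bigr].
\end{align*}
I would split the integrand as $k(\cdot,v)(f(v)-f(u)) + (k(\cdot,v)-k(\cdot,u))f(u)$, and bound the $\mc{H}_k^d$-norm of each piece by the reproducing property combined with the Taylor bound
\begin{align*}
  \|k(\cdot,y) - k(\cdot,y')\|_{\mc{H}_k}^2 \le \bigl(2\|\nabla_1\nabla_2 k\|_\infty + 3\|\nabla^2 k\|_\infty\bigr)|y-y'|^2
\end{align*}
already derived in Lemma~\ref{lem:Lipschitz condition pointwise}. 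Since $|u(t,z) - v(t,z)|\le d_S(u,v)$ uniformly in $z$ and $|f(u)|\le \|k\|_\infty \|f\|_{\mc{H}_k^d}$ via the reproducing property, Jensen's inequality on the RKHS-valued expectation delivers $\|A_v - A_u\|_{\text{op}}\le c_K (1-\nu)\, d_S(u,v)$ with $c_K$ matching the kernel-dependent factor in the first term of~\eqref{eq:Lipschitz constant function space}.

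For the second estimate, substitute the explicit representation $B_\rho(\cdot) = \mb{E}_{y\sim\rho}[k(\cdot,y)\nabla V(y) - \nabla_y k(\cdot,y)]$ from Remark~\ref{remark:alternative op vf}(i), push it forward to an expectation over $\rho_0$, and bound the $\mc{H}_k^d$-norm of the integrand $\bigl[k(\cdot,u)\nabla V(u) - \nabla_y k(\cdot,u)\bigr] - \bigl[k(\cdot,v)\nabla V(v) - \nabla_y k(\cdot,v)\bigr]$ evaluated at $u = u(t,z)$, $v = v(t,z)$, by expanding each summand as a first-order difference in its second argument: for instance $k(\cdot,u)\nabla V(u) - k(\cdot,v)\nabla V(v) = k(\cdot,u)(\nabla V(u)-\nabla V(v)) + (k(\cdot,u)-k(\cdot,v))\nabla V(v)$. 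The $\nabla_y k$ term is handled analogously using the Lipschitz estimate for $\nabla_y k(\cdot,y)$ in $y$ coming from Assumption~\ref{ass:existence and uniqueness on K}(5)-(7). The crucial point is that $\nabla V$ is unbounded, so one invokes Assumption~\ref{ass:existence and uniqueness on V}(2) with $(\alpha,\beta)=(1,r)$: since $u(t,\cdot),v(t,\cdot)\in Y_r$ implies $|u(t,z)|,|v(t,z)|\le r+|z|$, we obtain $|\nabla V(u(t,z))|\le C_{1,r}(1+V(z))$, and the expectation becomes proportional to $\|\rho_0\|_{\mc{P}_V}$. Combining this with the gradient Lipschitzness of $V$ packages into $\|B_u - B_v\|_{\mc{H}_k^d}\le L_r\, d_S(u,v)$ with the constant $L_r$ of~\eqref{eq:parameter Lr}, and together with the $A$-estimate yields \eqref{eq:Lipschitz condition function space} with $C_1(t)$ as stated in~\eqref{eq:Lipschitz constant function space}.

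The main obstacle is the $B$-estimate: the unboundedness of $\nabla V$ forces one to couple the push-forward representation $\rho_{\bullet,t} = \bullet(t,\cdot)_\# \rho_0$ with the uniform constraint $u,v\in Y_r$ and the polynomial-growth bound in Assumption~\ref{ass:existence and uniqueness on V} in order to trade a missing uniform bound on $|\nabla V|$ for integrability against $\rho_0\in\mc{P}_V$. A secondary care point is that in both estimates the splittings $k(\cdot,y)h(y) - k(\cdot,y')h(y')$ must be performed \emph{before} taking RKHS norms so that the Lipschitzness in the second argument falls on $k(\cdot,y)$ in $\mc{H}_k$ (where it produces a true quadratic $|y-y'|^2$ via the cancellation $\nabla_1 k(y,y)=\nabla_2 k(y,y)$ used in Lemma~\ref{lem:Lipschitz condition pointwise}) rather than being wasted on a pointwise bound that would cost an extra factor of $\|k\|_\infty$.
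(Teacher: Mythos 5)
Your proposal is correct and follows essentially the same route as the paper's proof: the resolvent identity $A_u^{-1}B_u - A_v^{-1}B_v = A_v^{-1}(B_u-B_v) + A_v^{-1}(A_v-A_u)A_u^{-1}B_u$ is exactly the split the paper uses (written there as $(A_u^{-1}-A_v^{-1})B_u + A_v^{-1}(B_u-B_v)$), and your operator-norm bounds $\lv A_v^{-1}\rv\le\nu^{-1}$, $\lv A_u^{-1}B_u\rv_{\mc{H}_k^d}\le \nu^{-1/2}I_{\nu,Stein}^{1/2}$, the pre-norm splitting $k(\cdot,v)(f(v)-f(u))+(k(\cdot,v)-k(\cdot,u))f(u)$, and the trade of $\nabla V$-unboundedness for $\rho_0\in\mc{P}_V$-integrability via Assumption~\ref{ass:existence and uniqueness on V}(2) with $(\alpha,\beta)=(1,r)$ all mirror the paper's argument. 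The key judgement calls you flag at the end (perform the splittings before taking RKHS norms, couple the push-forward representation with $u,v\in Y_r$) are indeed the load-bearing steps.
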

\begin{proof}[Proof of Lemma~\ref{lem:Lipschitz condition function space}]\label{pf:Lipschitz condition function space} 
 With the facts that $\mc{D}_{\nu,\mu}=\iota_{k,\mu}(\iota_{k,\mu}^*\iota_{k,\mu}+\nu I)^{-1}\iota_{k,\mu}^*$ and $\iota_{k,\mu}$ is the inclusion operator we get,
    \begin{align*}
       & \quad \left| \mc{D}_{\nu,\rho_{u,t}}\nabla \log \frac{\rho_{u,t}}{\pi}(x)-\mc{D}_{\nu,\rho_{v,t}}\nabla \log \frac{\rho_{v,t}}{\pi}(x) \right| \\
       &= \left|\left((1-\nu)\iota_{k,\rho_{u,t}}^*\iota_{k,\rho_{u,t}}+\nu I\right)^{-1}\iota_{k,\rho_{u,t}}^*\nabla \log \frac{\rho_{u,t}}{\pi}(x)\right.\\
       &\qquad\qquad\left. -\left((1-\nu)\iota_{k,\rho_{v,t}}^*\iota_{k,\rho_{v,t}}+\nu I\right)^{-1}\iota_{k,\rho_{v,t}}^*\nabla \log \frac{\rho_{v,t}}{\pi}(x)\right|\\
        &\le \left|\left( \left((1-\nu)\iota_{k,\rho_{u,t}}^*\iota_{k,\rho_{u,t}}+\nu I\right)^{-1}-\left((1-\nu)\iota_{k,\rho_{v,t}}^*\iota_{k,\rho_{v,t}}+\nu I\right)^{-1}  \right)\iota_{k,\rho_{u,t}}^*\nabla \log \frac{\rho_{u,t}}{\pi}(x) \right|\\
        &\ \qquad\qquad+\left|\left((1-\nu)\iota_{k,\rho_{v,t}}^*\iota_{k,\rho_{v,t}}+\nu I\right)^{-1}  \left( \iota_{k,\rho_{u,t}}^*\nabla \log \frac{\rho_{u,t}}{\pi}(x)-\iota_{k,\rho_{v,t}}^*\nabla \log \frac{\rho_{v,t}}{\pi}(x) \right) \right|.
    \end{align*}
    We then turn to study the two terms in the upper bound separately.\\

\noindent \textbf{First term:}
Note that, we have 
\begin{align}\label{eq:bound term 1 Lipschitz condition function space}
     & \quad \left|\left( \left((1-\nu)\iota_{k,\rho_{u,t}}^*\iota_{k,\rho_{u,t}}+\nu I\right)^{-1}-\left((1-\nu)\iota_{k,\rho_{v,t}}^*\iota_{k,\rho_{v,t}}+\nu I\right)^{-1}  \right)\iota_{k,\rho_{u,t}}^*\nabla \log \frac{\rho_{u,t}}{\pi}(x) \right| \nonumber\\
     &=\left|\iota_{k,\rho_{u,t}}\left((1-\nu)\iota_{k,\rho_{v,t}}^*\iota_{k,\rho_{v,t}}+\nu I\right)^{-1}\left((1-\nu)\iota_{k,\rho_{v,t}}^*\iota_{k,\rho_{v,t}}-(1-\nu)\iota_{k,\rho_{u,t}}^*\iota_{k,\rho_{u,t}}\right)\right. \nonumber\\
     &  \qquad \qquad \left. \times\left((1-\nu)\iota_{k,\rho_{u,t}}^*\iota_{k,\rho_{u,t}}+\nu I\right)^{-1}\iota_{k,\rho_{u,t}}^*\nabla \log \frac{\rho_{u,t}}{\pi}(x)\right| \nonumber \\
     &\le \lv \iota_{k,\rho_{u,t}} \rv_{\mc{H}_k^d\to L_\infty^d} \lv \left((1-\nu)\iota_{k,\rho_{v,t}}^*\iota_{k,\rho_{v,t}}+\nu I\right)^{-1} \rv_{\mc{H}_k^d\to \mc{H}_k^d} (1-\nu)\lv \iota_{k,\rho_{v,t}}^*\iota_{k,\rho_{v,t}}-\iota_{k,\rho_{u,t}}^*\iota_{k,\rho_{u,t}} \rv_{\mc{H}_k^d\to\mc{H}_k^d} \nonumber \\
     &\qquad \qquad\times \lv \left((1-\nu)\iota_{k,\rho_{u,t}}^*\iota_{k,\rho_{u,t}}+\nu I\right)^{-1}\iota_{k,\rho_{u,t}}^*\nabla \log \frac{\rho_{u,t}}{\pi} \rv_{\mc{H}_k^d} \nonumber \\
     &\le  \lv k \rv_\infty \nu^{-1}(1-\nu)  \nu^{-\frac{1}{2}} I_{\nu,Stein}(\rho_{u,t}|\pi)^{\frac{1}{2}} \nonumber\\
     &\qquad\qquad \times \sup_{\lv \phi \rv_{\mc{H}_k^d}=1} \leftlangle \int_{\mb{R}^d} k(\cdot,x)\phi(x)(d\rho_{u,t}(x)-d\rho_{v,t}(x)), \int_{\mb{R}^d} k(\cdot,y)\phi(y)(d\rho_{u,t}(y)-d\rho_{v,t}(y)) \rightrangle_{\mc{H}_k^d}^{\frac{1}{2}} \nonumber \\
     &\le 2 \nu^{-\frac{3}{2}}(1-\nu)\lv k \rv_\infty^2\left( 3\lv \nabla^2 k \rv_\infty+2\lv \nabla_1\nabla_2 k\rv_\infty \right)^{\frac{1}{2}} I_{\nu,Stein}(\rho_{u,t}|\pi)^{\frac{1}{2}} d_{S}(u,v).
    \end{align}
As we are bounding the function value by its $L_\infty^d$ norm, the second step allows the function to be in the space of $L_\infty^d$, without which we think of the function as belonging to the RKHS. The second inequality follows from \eqref{eq:inclusion operator bound} and \eqref{eq:inverse gradient RKHS bound}. 
The last inequality follows from the fact that
     \begin{align*}
     &\quad  \leftlangle \int_{\mb{R}^d} k(\cdot,x)\phi(x)(d\rho_{u,t}(x)-d\rho_{v,t}(x)), \int_{\mb{R}^d} k(\cdot,y)\phi(y)(d\rho_{u,t}(y)-d\rho_{v,t}(y)) \rightrangle_{\mc{H}_k^d}^{\frac{1}{2}}\\
     &=\bigg( \sup_{\lv \phi \rv_{\mc{H}_k^d}=1}  \int_{\mb{R}^d}\int_{\mb{R}^d} \bigg\langle k\left(u(t,x),\cdot\right)\phi\left(u(t,x)\right)-k\left(v(t,x),\cdot\right)\phi\left(v(t,x)\right), \\
     &\qquad\qquad\qquad  k\left(u(t,y),\cdot\right)\phi\left(u(t,y)\right)-k\left(v(t,y),\cdot\right)\phi\left(v(t,y)\right)\bigg\rangle_{\mc{H}_k^d} d\rho_0(x)d\rho_0(y) \bigg)^{\frac{1}{2}}\\
     &\le\sup_{\lv \phi \rv_{\mc{H}_k^d}=1}   \int_{\mb{R}^d} \lv k\left(u(t,x),\cdot\right)\phi\left(u(t,x)\right)-k\left(v(t,x),\cdot\right)\phi\left(v(t,x)\right) \rv_{\mc{H}_k^d} d\rho_0(x) \\
     &\le \sup_{\lv \phi \rv_{\mc{H}_k^d}=1}   \int_{\mb{R}^d} \lv \left(k\left(u(t,x),\cdot\right)-k\left(v(t,x),\cdot\right)\right)\phi\left(u(t,x)\right) \rv_{\mc{H}_k^d}d\rho_0(x) \\
     &\qquad+\sup_{\lv \phi \rv_{\mc{H}_k^d}=1}   \int_{\mb{R}^d}\lv k\left(v(t,x),\cdot\right)\left(\phi\left(u(t,x)\right)-\phi\left(v(t,x)\right)\right) \rv_{\mc{H}_k^d}  d\rho_0(x)\\
     &=\sup_{\lv \phi \rv_{\mc{H}_k^d}=1}   \int_{\mb{R}^d} \lv \left(k\left(u(t,x),\cdot\right)-k\left(v(t,x),\cdot\right)\right) \rv_{\mc{H}_k}\left|\langle \phi(\cdot),k\left( u\left(t,x\right),\cdot \right)\rangle_{\mc{H}_k}\right|d\rho_0(x)\\
     &\qquad + \sup_{\lv \phi \rv_{\mc{H}_k^d}=1}   \int_{\mb{R}^d}\lv k\left(v(t,x),\cdot\right) \rv_{\mc{H}_k}\left|\langle \phi(\cdot),k\left(u(t,x),\cdot\right)-k\left(v(t,x),\cdot\right) \rangle_{\mc{H}_k}\right|  d\rho_0(x)\\
     &\le\sup_{\lv \phi \rv_{\mc{H}_k^d}=1}   \int_{\mb{R}^d} \lv \left(k\left(u(t,x),\cdot\right)-k\left(v(t,x),\cdot\right)\right) \rv_{\mc{H}_k}\lv\phi\rv_{\mc{H}_k^d}\lv k\left( u\left(t,x\right),\cdot \right)\rv_{\mc{H}_k}d\rho_0(x)\\
     &\quad + \sup_{\lv \phi \rv_{\mc{H}_k^d}=1}   \int_{\mb{R}^d}\lv k\left(v(t,x),\cdot\right) \rv_{\mc{H}_k}\lv \phi\rv_{\mc{H}_k^d}\lv k\left(u(t,x),\cdot\right)-k\left(v(t,x),\cdot\right) \rv_{\mc{H}_k} d\rho_0(x)\\
     &= \sup_{x\in \mb{R}^d} \left(\sqrt{k\left(u(t,x),u(t,x)\right)+k\left(v(t,x),v(t,x)\right)-2k\left(u(t,x),v(t,x)\right)} \right.\\
&\qquad\qquad\times\left.\left(\sqrt{k\left(u(t,x),u(t,x)\right)}+\sqrt{k\left(v(t,x),v(t,x)\right)}\right)\right)\\
     &\le 2\lv k \rv_\infty \left( 3\lv \nabla^2 k \rv_\infty+2\lv \nabla_1\nabla_2 k\rv_\infty \right)^{\frac{1}{2}} d_{S}(u,v),
    \end{align*}
    where the first identity follows from the definitions of $\rho_{u,t}$ and $\rho_{v,t}$ and change of variable. The second inequality holds due to the symmetry in $x$ and $y$. The second identity follows from the reproducing property of the RKHS. The last identity follows from the fact that $\lv k(x,\cdot) \rv_{\mc{H}_k}=\sqrt{k(x,x)}$ for all $x$ and the last inequality follows from Assumption \ref{ass:existence and uniqueness on K} and Taylor expansion on both variables in $k$ up to second order.\\
    
    \vspace{0.05in}

     \noindent \textbf{Second term:} Note that we have 
  \begin{align*}
     &\quad \left|\left((1-\nu)\iota_{k,\rho_{v,t}}^*\iota_{k,\rho_{v,t}}+\nu I\right)^{-1} \left( \iota_{k,\rho_{u,t}}^*\nabla \log \frac{\rho_{u,t}}{\pi}(x)-\iota_{k,\rho_{v,t}}^*\nabla \log \frac{\rho_{v,t}}{\pi}(x) \right) \right|  \\
     &=\left|\iota_{k,\rho_{v,t}}\left((1-\nu)\iota_{k,\rho_{v,t}}^*\iota_{k,\rho_{v,t}}+\nu I\right)^{-1} \left( \iota_{k,\rho_{u,t}}^*\nabla \log \frac{\rho_{u,t}}{\pi}(x)-\iota_{k,\rho_{v,t}}^*\nabla \log \frac{\rho_{v,t}}{\pi}(x) \right) \right| \\
     &\le \lv \iota_{k,\rho_{v,t}} \rv_{\mc{H}_k^d\to L_\infty^d} \lv \left((1-\nu)\iota_{k,\rho_{v,t}}^*\iota_{k,\rho_{v,t}}+\nu I\right)^{-1} \rv_{\mc{H}_k^d\to \mc{H}_k^d} \lv \iota_{k,\rho_{u,t}}^*\nabla \log \frac{\rho_{u,t}}{\pi}-\iota_{k,\rho_{v,t}}^*\nabla \log \frac{\rho_{v,t}}{\pi} \rv_{\mc{H}_k^d}\\
     &\le \lv k \rv_\infty \nu^{-1}  \lv \iota_{k,\rho_{u,t}}^*\nabla \log \frac{\rho_{u,t}}{\pi}-\iota_{k,\rho_{v,t}}^*\nabla \log \frac{\rho_{v,t}}{\pi} \rv_{\mc{H}_k^d},
    \end{align*}
    where the last inequality follows from \eqref{eq:inclusion operator bound} and for all $x\in \mb{R}^d$,
    \begin{align*}
        &\quad \iota_{k,\rho_{u,t}}^*\nabla \log \frac{\rho_{u,t}}{\pi}(x)-\iota_{k,\rho_{v,t}}^*\nabla \log \frac{\rho_{v,t}}{\pi}(x) \\
        &=\int_{\mb{R}^d} k(x,y)\nabla \log \frac{\rho_{u,t}}{\pi}(y)d\rho_{u,t}(y)-\int_{\mb{R}^d} k(x,y)\nabla \log \frac{\rho_{v,t}}{\pi}d\rho_{v,t}(y) \\
        &= \int_{\mb{R}^d} \left( k(x,y)\nabla V(y)-\nabla_2 k(x,y) \right)d\rho_{u,t}(y)-\int_{\mb{R}^d} \left( k(x,y)\nabla V(y)-\nabla_2 k(x,y) \right)d\rho_{v,t}(y) \\
        &=\int_{\mb{R}^d} \left(k(x,u(t,y))\nabla V(u(t,y))-k(x,v(t,y))\nabla V(v(t,y))\right) d\rho_0(y)\\ &\qquad\qquad- \int_{\mb{R}^d}\left( \nabla_2 k(x,u(t,y))-\nabla_2 k(x,v(t,y)) \right) d\rho_0(y).
    \end{align*}
    Therefore, we have
    \begin{align*}
     &\lv \iota_{k,\rho_{u,t}}^*\nabla \log \frac{\rho_{u,t}}{\pi}-\iota_{k,\rho_{v,t}}^*\nabla \log \frac{\rho_{v,t}}{\pi} \rv_{\mc{H}_k^d} \\
      &\le \int_{\mb{R}^d} \Bigg(\lv k(\cdot,u(t,y))\nabla V(u(t,y))-k(\cdot,v(t,y))\nabla V(v(t,y)) \rv_{\mc{H}_k^d} \\
      &\qquad \qquad+\lv \nabla_2 k(\cdot,u(t,y))-\nabla_2 k(\cdot,v(t,y)) \rv_{\mc{H}_k^d}\Bigg) d\rho_0(y).
    \end{align*}
    For simplicity, in the following calculations, we denote $u(t,y)$ and $v(t,y)$ as $u$ and $v$ respectively. We will bound $\lv k(\cdot,u)\nabla V(u)-k(\cdot,v)\nabla V(v) \rv_{\mc{H}_k^d}$ and $\lv \nabla_2 k(\cdot,u)-\nabla_2 k(\cdot,v) \rv_{\mc{H}_k^d}$ respectively. Note that we have
    \begin{align*}
        & \quad \lv k(\cdot,u)\nabla V(u)-k(\cdot,v)\nabla V(v) \rv_{\mc{H}_k^d}^2\\
        &= \leftlangle k(\cdot,u)\nabla V(u)-k(\cdot,v)\nabla V(v), k(\cdot,u)\nabla V(u)-k(\cdot,v)\nabla V(v) \rightrangle_{\mc{H}_k^d}\\
      &=|\nabla V(u)|^2 k(u,u)-2\leftlangle \nabla V(u) , \nabla V(v) \rightrangle k(u,v)+|\nabla V(v)|^2 k(v,v)\\
      &\le\left| \leftlangle \nabla V(u)-\nabla V(v) ,   \nabla V(u)k(u,u)-\nabla V(v)k(v,v) \rightrangle \right|\\
      &\quad +\left| \leftlangle \nabla V(u) , \nabla V(v)\rightrangle \left( k(u,u)+k(v,v)-2k(u,v) \right) \right|,
    \end{align*}
    where 
        \begin{align*}
        &\quad  \left| \leftlangle \nabla V(u)-\nabla V(v) ,   \nabla V(u)k(u,u)-\nabla V(v)k(v,v) \rightrangle \right|\\
        &\le \left|\nabla V(u)-\nabla V(v)\right|^2 k(u,u)+\left|\nabla V(u)-\nabla V(v)\right|\left|\nabla V(v)\right|\left|k(u,u)-k(v,v) \right|\\
        &\le C_{1,r}^2 (1+V(y))^2 d_{S_T}(u,v)^2 k(u,u) +C_{1,r}^2(1+V(y))^2 d_{S}(u,v)\left|k(u,u)-k(v,v)\right|\\
        &\le  C_{1,r}^2 (1+V(y))^2 d_{S}(u,v)^2 \lv k \rv_\infty^2+2C_{1,r}^2 \lv \nabla k \rv_\infty (1+V(y))^2 d_{S}(u,v)^2.
    \end{align*}
    The second inequality follows from Assumption \ref{ass:existence and uniqueness on V} and the last inequality follows from Assumption \ref{ass:existence and uniqueness on K} and Taylor expansion on both variables in $k$ up to first order. And, we also have 
    \begin{align*}
        &\quad \left| \leftlangle \nabla V(u) , \nabla V(v)\rightrangle \left( k(u,u)+k(v,v)-2k(u,v) \right) \right|\\
        &\le C_{1,r}^2(1+V(y))^2 \left|k(u,u)+k(v,v)-2k(u,v)\right|\\
        &\le C_{1,r}^2(1+V(y))^2\left( 3\lv \nabla^2 k \rv_\infty+2\lv \nabla_1\nabla_2 k\rv_\infty \right) d_{S}(u,v)^2,
    \end{align*}
    where the first inequality follows from Assumption \ref{ass:existence and uniqueness on V} and the last inequality follows from Assumption \ref{ass:existence and uniqueness on K} and Taylor expansion on both variables in $k$ up to second order. With the above two inequalities, we have
    \begin{align}\label{eq:term 2 bound 1}
   & \quad \lv k(\cdot,u(t,y))\nabla V(u(t,y))-k(\cdot,v(t,y))\nabla V(v(t,y)) \rv_{\mc{H}_k^d}   \nonumber \\
   &\le C_{1,r}\left( \lv k \rv_\infty+2{\lv \nabla k \rv_\infty}^{\frac{1}{2}}+3{\lv \nabla^2 k \rv_\infty}^{\frac{1}{2}}+2{\lv \nabla_1\nabla_2 k \rv_\infty}^{\frac{1}{2}} \right) d_{S}(u,v) (1+V(y)) .
    \end{align}
    Observe that, for all $x,y\in \mb{R}^d$
    \begin{align*}
        \langle \nabla_2 k(\cdot,x), \nabla_2 k(\cdot, y) \rangle_{\mc{H}_k^d}&=\nabla_1\cdot\nabla_2 \langle k(\cdot,x),k(\cdot,y) \rangle_{\mc{H}_k^d}\\
        &=\nabla_1\cdot\nabla_2 k(x,y).
    \end{align*}
    If we denote the function $\nabla_1\cdot\nabla_2 k=D_{1,2}k$ where $D_{1,2}k$ is symmetric since $k$ is symmetric, we get
    \begin{align*}
     \lv \nabla_2 k(\cdot,u)-\nabla_2 k(\cdot,v) \rv_{\mc{H}_k^d}^2&=D_{1,2}k(u,u)+ D_{1,2}k(v,v)-2D_{1,2}k(u,v)  \\
     &\le \left(2\lv \nabla^2(D_{1,2}k) \rv_\infty+\lv \nabla_1\nabla_2(D_{1,2}k) \rv_\infty \right)d_{S}(u,v)^2.  
    \end{align*}
    where the inequality follows from Taylor expansion on both variables of $D_{1,2}k$. Therefore,
    \begin{align}\label{eq:term 2 bound 2}
      \lv \nabla_2 k(\cdot,u)-\nabla_2 k(\cdot,v) \rv_{\mc{H}_k^d}\le  \left(2\lv \nabla^2(D_{1,2}k) \rv_\infty^{\frac{1}{2}}+\lv \nabla_1\nabla_2(D_{1,2}k) \rv_\infty^{\frac{1}{2}} \right)d_{S}(u,v).   
    \end{align}
    According to \eqref{eq:term 2 bound 1} and \eqref{eq:term 2 bound 2}, we get 
    \begin{align*}
      \lv \iota_{k,\rho_{u,t}}^*\nabla \log \frac{\rho_{u,t}}{\pi}-\iota_{k,\rho_{v,t}}^*\nabla \log \frac{\rho_{v,t}}{\pi} \rv_{\mc{H}_k^d}\le L_r d_s(u,v)   
    \end{align*}
    with
    \begin{align}\label{eq:parameter Lr}
    \begin{aligned}
    L_r &= C_{1,r}\lv \rho_0 \rv_{\mc{P}_V}\left(\lv k \rv_\infty+2{\lv \nabla k \rv_\infty}^{\frac{1}{2}}+3{\lv \nabla^2 k \rv_\infty}^{\frac{1}{2}}+2{\lv \nabla_1\nabla_2 k \rv_\infty}^{\frac{1}{2}}\right) \\ 
    &\quad\qquad + 2{\lv \nabla^2(D_{1,2}k) \rv_\infty}^{\frac{1}{2}}+{\lv \nabla_1\nabla_2(D_{1,2}k) \rv_\infty}^{\frac{1}{2}}.
    \end{aligned}
\end{align}
Therefore, the second term is bounded as
\begin{align}\label{eq:bound term 2 Lipschitz function space}
  \left|\left((1-\nu)\iota_{k,\rho_{v,t}}^*\iota_{k,\rho_{v,t}}+\nu I\right)^{-1} \left( \iota_{k,\rho_{u,t}}^*\nabla \log \frac{\rho_{u,t}}{\pi}(x)-\iota_{k,\rho_{v,t}}^*\nabla \log \frac{\rho_{v,t}}{\pi}(x) \right) \right| \le \nu^{-1} L_r \lv k \rv_\infty d_S(u,v).
\end{align}
The Lemma is proved based on \eqref{eq:bound term 1 Lipschitz condition function space} and \eqref{eq:bound term 2 Lipschitz function space}.
\end{proof}    
\section{Stability}\label{sec:stable}
In this section, we prove a stability estimate for the weak solutions to \eqref{eq:regularized SVGD mf PDE}. To do this, we introduce a space of probability measures on $\mb{R}^d$ and assumptions on $V$ as follows,
\begin{align*}
    \mc{P}_p:=\left\{ \rho\in \mc{P}:\lv \rho \rv_{\mc{P}_p}:=\int_{\mb{R}^d} |x|^p \rho(x)dx<\infty \right\},
\end{align*}
where $\mc{P}$ denotes the set of all probability measures on $\mb{R}^d$.
\begin{myassump}{V2}\label{ass:existence and uniqueness on V V2} In addition to Assumption \ref{ass:existence and uniqueness on V}, there exists a constant $C_V>0$ and $q>1$ such that $|\nabla V(x)|^q \le C_V (1+V(x))$ for all $x\in \mb{R}^d$ and $\sup_{\theta\in [0,1]} \lv \nabla^2 V (\theta x+(1-\theta)y)\rv_2^q \le C_V (1+V(x)+V(y)) $.
\end{myassump}
\begin{theorem}\label{thm:stability Wp} Let $V$ satisfy Assumption \ref{ass:existence and uniqueness on V V2} with $q\in  (1,\infty)$ and $k$ satisfy Assumption \ref{ass:existence and uniqueness on K}. Let $p$ be the conjugate of $q$, i.e., $p^{-1}+q^{-1}=1$. Let $\rho_1,\rho_2\in \mc{P}_p$ be two initial probability measures satisfying $\lv \rho_i \rv_{\mc{P}_p}\le R$ for some constant $R>0$ and $i=1,2$. Let $\rho_{1,t}$ and $\rho_{2,t}$ be the associated weak solution to \eqref{eq:regularized SVGD mf PDE}. Then given any $T>0$, there exists a constant $C>0$ depending on $k,V,q,\nu,\rho_1,\rho_2$ such that,
 \begin{align*}
     \sup_{t\in [0,T]} \mc{W}_p(\rho_{1,t},\rho_{2,t})\le C\mc{W}_p(\rho_1,\rho_2).
 \end{align*}
 More explicitly, the constant $C$ is given by
 \begin{align}\label{eq:stability constant} 
     C=&\exp\bigg( \nu^{-1}\lv k \rv_\infty C(T,k,V,\nu,\rho_1,\rho_2,q)T++ \nu^{-\frac{1}{2}} \left(2\lv \nabla_1\nabla_2 k \rv_\infty+3\lv \nabla^2 k \rv_\infty\right)^{\frac{1}{2}}\sqrt{ \KL(\rho_2|\pi) } T^{\frac{1}{2}} \nonumber \\
     &\qquad\qquad +2\nu^{-\frac{3}{2}}(1-\nu)\lv k \rv_\infty^2\left( 2\lv \nabla_1\nabla_2 k+3\lv \nabla^2 k \rv_\infty\rv_\infty \right)^{\frac{1}{2}}\sqrt{\KL(\rho_1|\pi)} T^{\frac{1}{2}} \bigg),
 \end{align}
 where $C(T,k,V,\nu,\rho_1,\rho_2,q)$ is defined in \eqref{eq:stability proof bound 5}.
 \end{theorem}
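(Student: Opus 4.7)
The plan is to transport stability along the characteristic flow constructed in Theorem \ref{thm:regularized mf PDE unique and existence}. Let $\Phi_i(t,\cdot)$ be the flow associated to the initial datum $\rho_i$, so that $\rho_{i,t}=(\Phi_i(t,\cdot))_\#\rho_i$ for $i=1,2$. Fix an optimal $\mathcal{W}_p$-coupling $\pi_0\in\Pi(\rho_1,\rho_2)$ and push it forward by the product map $(x,y)\mapsto(\Phi_1(t,x),\Phi_2(t,y))$; the resulting $\pi_t$ is a coupling of $\rho_{1,t}$ and $\rho_{2,t}$. Consequently,
\[
\mathcal{W}_p^p(\rho_{1,t},\rho_{2,t})\le \int |\Phi_1(t,x)-\Phi_2(t,y)|^p\,d\pi_0(x,y),
\]
so it suffices to control the pointwise gap $\Delta(t;x,y):=|\Phi_1(t,x)-\Phi_2(t,y)|$ in $L^p(\pi_0)$.

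Next I would differentiate $\Delta^p$ in $t$ and split the resulting velocity-field difference as
\[
\bigl[\mathcal{D}_{\nu,\rho_{1,t}}\nabla\log\tfrac{\rho_{1,t}}{\pi}(\Phi_1)-\mathcal{D}_{\nu,\rho_{1,t}}\nabla\log\tfrac{\rho_{1,t}}{\pi}(\Phi_2)\bigr]+\bigl[\mathcal{D}_{\nu,\rho_{1,t}}\nabla\log\tfrac{\rho_{1,t}}{\pi}(\Phi_2)-\mathcal{D}_{\nu,\rho_{2,t}}\nabla\log\tfrac{\rho_{2,t}}{\pi}(\Phi_2)\bigr].
\]
The first bracket, an evaluation of the same velocity field at two points, is handled by Lemma \ref{lem:Lipschitz condition pointwise}, giving a bound $L(t)\,\Delta(t;x,y)$ with $L(t)$ as in \eqref{eq:Lipschizt constant pointwise}. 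The second bracket measures Lipschitz dependence of the velocity on the density; it produces a term independent of $(x,y)$ that should be bounded by $\mathcal{W}_p(\rho_{1,t},\rho_{2,t})$.

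The main obstacle is establishing a $\mathcal{W}_p$-version of Lemma \ref{lem:Lipschitz condition function space}, whose current form measures Lipschitzness in the uniform metric $d_S$. I would mimic the proof of that lemma, decomposing the difference of the regularized operators through the trivial identity
\[
(1-\nu)\bigl[\iota_{k,\rho_{2,t}}^*\iota_{k,\rho_{2,t}}-\iota_{k,\rho_{1,t}}^*\iota_{k,\rho_{1,t}}\bigr]
\]
sandwiched between the two inverses, together with the difference $\iota^*_{k,\rho_{1,t}}\nabla\log(\rho_{1,t}/\pi)-\iota^*_{k,\rho_{2,t}}\nabla\log(\rho_{2,t}/\pi)$. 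Each such difference is an integral of smooth, bounded functions of the form $k(\cdot,z)\nabla V(z)$, $\nabla_2 k(\cdot,z)$, or $k(\cdot,z)\otimes k(\cdot,z')$ against $d\rho_{1,t}-d\rho_{2,t}$. Assumption \ref{ass:existence and uniqueness on K} gives Lipschitzness of $z\mapsto k(\cdot,z)$ and its derivatives in $\mathcal{H}_k$, while Assumption \ref{ass:existence and uniqueness on V V2} provides the Hölder-type control $|\nabla V(z)-\nabla V(z')|\le \sup|\nabla^2 V|\cdot|z-z'|$ with $q$-th moments of $\sup|\nabla^2 V|$ bounded by $1+V$; the latter turns Kantorovich--Rubinstein duality into a $\mathcal{W}_p$ bound via Hölder's inequality with conjugate $q$. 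The regularizer contributes the $\nu^{-1},\nu^{-1/2}$ factors exactly as in \eqref{eq:inclusion operator bound} and \eqref{eq:inverse gradient RKHS bound}, and the $\mathcal{H}_k^d$-norms of $\iota^*_{k,\rho_{i,t}}\nabla\log(\rho_{i,t}/\pi)$ are controlled by $I_{\nu,\mathrm{Stein}}(\rho_{i,t}|\pi)^{1/2}$.

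Finally, combining both bounds yields
\[
\tfrac{d}{dt}\mathcal{W}_p^p(\rho_{1,t},\rho_{2,t})\le p\bigl(L(t)+\widetilde{C}_1(t)\bigr)\mathcal{W}_p^p(\rho_{1,t},\rho_{2,t}),
\]
after applying Hölder's inequality to absorb the $\Delta^{p-1}$ factor into $\mathcal{W}_p^{p-1}$. Gronwall's inequality then gives the multiplicative bound $e^{\int_0^T(L(s)+\widetilde{C}_1(s))ds}$. The $T^{1/2}$ growth in \eqref{eq:stability constant} arises by Cauchy--Schwarz and Proposition \ref{cor:KL derivative}:
\[
\int_0^T I_{\nu,\mathrm{Stein}}(\rho_{i,s}|\pi)^{1/2}\,ds\le \sqrt{T\,\int_0^T I_{\nu,\mathrm{Stein}}(\rho_{i,s}|\pi)\,ds}\le \sqrt{T\,\mathrm{KL}(\rho_i|\pi)},
\]
applied separately to the $L(t)$ contribution (generating $\KL(\rho_2|\pi)^{1/2}T^{1/2}$) and to the operator-difference contribution in $\widetilde{C}_1(t)$ (generating $\KL(\rho_1|\pi)^{1/2}T^{1/2}$). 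The moment-dependent factor $C(T,k,V,\nu,\rho_1,\rho_2,q)T$ absorbs the non-$I_{\nu,\mathrm{Stein}}$ parts of $\widetilde{C}_1(t)$ coming from the $L_r$-type constants involving $\nabla V$, whose $p$-moments along $\rho_{i,t}$ are propagated in time using Assumption \ref{ass:existence and uniqueness on V V2} and the $\mathcal{P}_V$-bound of Theorem \ref{thm:regularized mf PDE unique and existence}.
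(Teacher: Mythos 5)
Your proposal follows essentially the same route as the paper: couple $\rho_1$ and $\rho_2$, push the coupling through the characteristic flows, differentiate a power of the pointwise gap in time, telescope the velocity-field difference into a Lipschitz-in-point term and a Lipschitz-in-density term, bound the former via the pointwise Lipschitz estimate and the latter via a $\mathcal{W}_p$-version of the density-Lipschitz estimate (the paper's Proposition~\ref{prop:Lipschitz in stability Wp}), and close with Gronwall plus the Cauchy--Schwarz/KL-decay step to convert $\int_0^T I_{\nu,\mathrm{Stein}}^{1/2}$ into a $\sqrt{T\,\KL}$ factor. The only deviation is cosmetic: you telescope through the intermediate term $\mathcal{D}_{\nu,\rho_{1,t}}\nabla\log(\rho_{1,t}/\pi)(\Phi_2)$, while the paper uses $\mathcal{D}_{\nu,\rho_{2,t}}\nabla\log(\rho_{2,t}/\pi)(\Phi_1)$; this simply swaps which of $\rho_1,\rho_2$ supplies the Lipschitz-in-point constant and which drives the density-difference term, and yields an equivalent bound.
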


\textcolor{black}{
Before we prove Theorem \ref{thm:stability Wp}, we introduce the following corollary to Theorem \ref{thm:stability Wp}.  
\begin{corollary}\label{cor:stability} Suppose the assumptions in Theorem \ref{thm:stability Wp} hold. Let $\rho_1 \in \mc{P}_p$ and $\rho_1^N$ be the empirical measure formed from $N$ samples from $\rho_1$. Then for any $T>0$,
\begin{align*}
    \sup_{t\in [0,T]} \mc{W}_p(\rho_{1,t},\rho_{1,t}^N)\to 0, \qquad \qquad\text{as }N\to\infty,
\end{align*}  
where $\rho_{1,t}$ and $\rho_{1,t}^N$ are the unique weak solutions to \eqref{eq:regularized SVGD mf PDE} with initial conditions $\rho_1$ and $\rho_1^N$ respectively.
\end{corollary}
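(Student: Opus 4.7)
The plan is to invoke the stability estimate from Theorem~\ref{thm:stability Wp} with the pair $(\rho_1, \rho_1^N)$ and then combine it with the classical Wasserstein convergence of empirical measures.

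First, I would verify that $\rho_1^N \in \mc{P}_p$ with a uniform bound on $\lv \rho_1^N \rv_{\mc{P}_p}$. Since the samples $\{X_i\}_{i=1}^N$ generating $\rho_1^N$ are i.i.d.\ from $\rho_1 \in \mc{P}_p$, the strong law of large numbers yields $\lv \rho_1^N \rv_{\mc{P}_p} = \tfrac{1}{N}\sum_{i=1}^N |X_i|^p \to \lv \rho_1 \rv_{\mc{P}_p}$ almost surely, so we may take $R > \lv \rho_1 \rv_{\mc{P}_p}$ as a uniform bound valid for all sufficiently large $N$. Theorem~\ref{thm:stability Wp} would then give
\begin{align*}
\sup_{t \in [0,T]} \mc{W}_p(\rho_{1,t}, \rho_{1,t}^N) \le C_N \, \mc{W}_p(\rho_1, \rho_1^N),
\end{align*}
with $C_N$ as in \eqref{eq:stability constant}, while the Fournier--Guillin rates on Wasserstein convergence of empirical measures give $\mc{W}_p(\rho_1, \rho_1^N) \to 0$ almost surely as $N \to \infty$ (using that $\rho_1$ has sufficient moments, which follows from $\rho_1 \in \mc{P}_p$ together with the mild additional integrability implicit in Assumption~\ref{ass:existence and uniqueness on V V2}).

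The main obstacle is that $C_N$ depends on $\KL(\rho_1^N | \pi)$, which is infinite for the purely atomic empirical measure, so the stability estimate cannot be applied directly to the pair $(\rho_1, \rho_1^N)$. To overcome this, I would introduce a mollification $\rho_1^{N,\varepsilon}$, for instance the convolution of $\rho_1^N$ with a centered Gaussian of bandwidth $\varepsilon > 0$, so that $\rho_1^{N,\varepsilon}$ has a smooth density and $\KL(\rho_1^{N,\varepsilon} | \pi) < \infty$. Applying Theorem~\ref{thm:stability Wp} to the pair $(\rho_1, \rho_1^{N,\varepsilon})$ gives a finite constant $C_{N,\varepsilon}$, while continuity of the characteristic flow in the initial measure (Theorem~\ref{thm:regularized mf PDE unique and existence}) together with the Lipschitz bounds developed in Lemmas~\ref{lem:Lipschitz condition pointwise}--\ref{lem:Lipschitz condition function space} allows one to recover $\rho_{1,t}^N$ as the limit of $\rho_{1,t}^{N,\varepsilon}$ as $\varepsilon \to 0$ in $\mc{W}_p$.

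The final step is a diagonal extraction $\varepsilon = \varepsilon_N \to 0$, chosen slowly enough that the $\varepsilon$-dependence in $\KL(\rho_1^{N,\varepsilon_N} | \pi)$ is offset by the decay of $\mc{W}_p(\rho_1, \rho_1^{N,\varepsilon_N})$. This relies on the fact that for Gaussian mollification the KL term enters $C_{N,\varepsilon}$ only through its square root and grows at worst polynomially in $\log(1/\varepsilon)$, so the product $C_{N,\varepsilon_N}\,\mc{W}_p(\rho_1, \rho_1^{N,\varepsilon_N})$ can be made to vanish. Combining this with the triangle inequality
\begin{align*}
\mc{W}_p(\rho_{1,t}, \rho_{1,t}^N) \le \mc{W}_p(\rho_{1,t}, \rho_{1,t}^{N,\varepsilon_N}) + \mc{W}_p(\rho_{1,t}^{N,\varepsilon_N}, \rho_{1,t}^N)
\end{align*}
and taking suprema over $t \in [0,T]$ yields the claimed convergence.
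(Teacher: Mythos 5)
You have correctly identified a gap that the paper's one‑line proof glosses over. The paper asserts that the corollary "follows directly from Theorem~\ref{thm:stability Wp}" and the classical fact $\mc{W}_p(\rho_1,\rho_1^N)\to 0$, but a literal application of Theorem~\ref{thm:stability Wp} with $\rho_2=\rho_1^N$ produces the constant in \eqref{eq:stability constant}, which contains $\sqrt{\KL(\rho_2|\pi)}$ both explicitly and inside $C(T,k,V,\nu,\rho_1,\rho_2,q)$ from \eqref{eq:stability proof bound 5}. For the purely atomic empirical measure one has $\KL(\rho_1^N|\pi)=+\infty$, so the bound is vacuous and the convergence does not follow "directly." Your diagnosis of the problem is therefore accurate, and your instinct to regularize is reasonable.

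However, your mollification workaround is circular at its crux. You need $\sup_{t\le T}\mc{W}_p(\rho_{1,t}^{N,\varepsilon},\rho_{1,t}^N)\to 0$ as $\varepsilon\to 0$ for fixed $N$, and you attribute this to "continuity of the characteristic flow in the initial measure" via Theorem~\ref{thm:regularized mf PDE unique and existence} and Lemmas~\ref{lem:Lipschitz condition pointwise}--\ref{lem:Lipschitz condition function space}. But Theorem~\ref{thm:regularized mf PDE unique and existence} only gives well‑posedness for each individual initial measure, and those two lemmas compare the vector fields of two flows driven from the \emph{same} $\rho_0$; a quantitative comparison of flows from \emph{distinct} initial measures is exactly what Theorem~\ref{thm:stability Wp} provides, and its constant again degenerates because $\KL(\rho_1^N|\pi)=\infty$. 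So the very estimate you need to pass to the $\varepsilon\to 0$ limit is the one that is missing. A minor additional slip: the $\KL$ dependence in \eqref{eq:stability proof bound 5} sits inside a nested exponential, so $C_{N,\varepsilon}$ is doubly exponential in $\sqrt{\KL(\rho_1^{N,\varepsilon}|\pi)}\sim\sqrt{d\log(1/\varepsilon)}$, not merely polynomial in $\log(1/\varepsilon)$ as you claim; the dependence is still sub‑polynomial in $1/\varepsilon$, so a diagonal sequence can be made to work, but the bookkeeping is more delicate than your sketch suggests.

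The cleaner repair avoids mollification entirely and removes $\KL$ from the stability constant. The $\KL$ terms enter the proof of Theorem~\ref{thm:stability Wp} solely through the step $\int_0^t I_{\nu,Stein}(\rho_{i,s}|\pi)^{1/2}\,ds\le\sqrt{t\,\KL(\rho_i|\pi)}$, itself a consequence of Proposition~\ref{cor:KL derivative} and Jensen. If one instead invokes the pointwise control of Lemma~\ref{lem:regularized Fisher}, which bounds $I_{\nu,Stein}(\cdot|\pi)^{1/2}$ in terms of $\mc{W}_1(\rho_0,\pi)$, $\lv\pi\rv_{\mc{P}_V}$ and kernel/potential constants with no $\KL$ appearing, the stability constant can be made to depend on $\mc{W}_1(\rho_i,\pi)$ and $\lv\rho_i\rv_{\mc{P}_V}$ only. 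For atomic $\rho_1^N\in\mc{P}_p$ these quantities are finite and converge almost surely to the corresponding quantities for $\rho_1$, so the stability constant is uniformly bounded along the sequence, and the corollary then does follow from $\mc{W}_p(\rho_1,\rho_1^N)\to 0$ as the paper intended.
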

Since $\mc{W}_p(\rho_1^N,\rho_1)\to 0$ as $N\to\infty$, Corollary \ref{cor:stability} follows directly from Theorem \ref{thm:stability Wp}. Non-asymptotic bounds on $\sup_{t\in [0,T]}\mc{W}_p(\rho_{1,t},\rho_{1,t}^N)$ that are non-uniform in time $T$ (i.e., for $T<\infty$) in terms of $N$ can also be obtained based on Theorem \ref{thm:stability Wp} and the convergence of empirical measures in $\mc{W}_p$~\cite{fournier2015rate,weed2019sharp}. We leave that to the interested reader.}

\begin{remark} \label{nustability}
If we focus on the dependency on $\nu$ and $T$ in \eqref{eq:stability constant}, we have $$C\le C'\exp\left( \nu^{-1}T\exp\left( C' \nu^{-\frac{1}{2}}T^{\frac{1}{2}} \right)+\nu^{-\frac{3}{2}}(1-\nu)T^{\frac{1}{2}}+\nu^{-\frac{1}{2}}T^{\frac{1}{2}} \right)$$
where $C'$ is a constant independent of $\nu$ and $T$. \textcolor{black}{In particular, the stability constant blows up when $\nu\to 0$, i.e. the smaller is $\nu$, the larger is the $\mc{W}_p$ distance between $\rho_{1,t}$ and $\rho_{1,t}^N$ in Corollary \ref{cor:stability}. Therefore, if we choose smaller $\nu$ in the finite-particle algorithm, the R-SVGD, we will need more particles (larger $N$) in order to get the comparable iteration complexities.}
\end{remark}

The proof of Theorem \ref{thm:stability Wp} is inspired by that of \cite[Theorem 2.7]{lu2019scaling} which in turn is motivated by the Dobrushin's coupling argument (see, for example,~\cite{dobrushin1979vlasov} and~\cite[Theorem 1.4.1]{muntean2016macroscopic}). In the following proof, we mainly highlight the parts of our proof that are different from the proof of \cite[Theorem 2.7]{lu2019scaling}.
 \begin{proof}[Proof of Theorem~\ref{thm:stability Wp}]
 First, under Assumption \ref{ass:existence and uniqueness on V V2}, there exists a constant $C_0>0$ such that $V(x)\le C_0(1+|x|^p)$ for all $x\in \mb{R}^d$. Therefore, $\mc{P}_p\subset \mc{P}_V$  and $\lv \rho_i \rv_{\mc{P}_V}\le C(R)<\infty$ for $i=1,2$. By Theorem \ref{thm:regularized mf PDE unique and existence}, the weak solutions take the form
 \begin{align*}
     \rho_{i,t}=\left( \Phi(t,\cdot,\rho_i) \right)_{\textcolor{black}{\#}\rho_i},\quad i=1,2  
 \end{align*}
 where $\Phi(\cdot,\cdot,\rho_i)$ solves \eqref{eq:CF of the regularized PDE} with $\rho_0=\rho_i$.
 Let $\mathbf{\rho}^{1,2}$ be a coupling measure between $\rho_1$ and $\rho_2$. \textcolor{black}{Notice that, according to conditions in Theorem \ref{thm:stability Wp}, $p>1$. Define $\phi(x)=\frac{1}{p}|x|^p$ and observe that $p>1$. We have that $\phi$ is continuously differentiable on $\mb{R}^d$ with
\begin{align*}
~|\nabla \phi(x)|= |x|^{p-1},\quad\text{for all}\quad x \in \mb{R}^d.
\end{align*}
To see this, note that as $p>1$, for any $x\neq 0$, $\nabla \phi(x)=|x|^{p-2}x$ and $|\nabla \phi(x)|=|x|^{p-1}$. At $x=0$, by definition, we have $\nabla \phi(0)=0$. Since $p>1$, $\nabla\phi(x)$ is continuous.
}
Then, we start from estimating the derivative of $\phi$ in the time variable, for which we have
 \begin{align*}
     &\partial_t \textcolor{black}{\phi}(\Phi(t,x_1,\rho_1)-\Phi(t,x_2,\rho_2))\\
     =&-\nabla \textcolor{black}{\phi}(\Phi(t,x_1,\rho_1)-\Phi(t,x_2,\rho_2))\left( \mc{D}_{\nu,\rho_{1,t}}\nabla\log \frac{\rho_{1,t}}{\pi}(\Phi(t,x_1,\rho_1))-\mc{D}_{\nu,\rho_{2,t}}\nabla\log \frac{\rho_{2,t}}{\pi}(\Phi(t,x_2,\rho_2)) \right).
 \end{align*}
 The next step is to estimate $$\left|\mc{D}_{\nu,\rho_{1,t}}\nabla\log \frac{\rho_{1,t}}{\pi}(\Phi(t,x_1,\rho_1))-\mc{D}_{\nu,\rho_{2,t}}\nabla\log \frac{\rho_{2,t}}{\pi}(\Phi(t,x_2,\rho_2))\right|.$$ Note that

 \begin{align*}
     \mc{D}_{\nu,\rho_{1,t}}\nabla\log \frac{\rho_{1,t}}{\pi}(\Phi(t,x_1,\rho_1))-\mc{D}_{\nu,\rho_{2,t}}\nabla\log \frac{\rho_{2,t}}{\pi}(\Phi(t,x_2,\rho_2)):=I_1+I_2,
 \end{align*}
 where
 \begin{align*}
     I_1&:=\mc{D}_{\nu,\rho_{1,t}}\nabla\log \frac{\rho_{1,t}}{\pi}(\Phi(t,x_1,\rho_1))-\mc{D}_{\nu,\rho_{2,t}}\nabla\log \frac{\rho_{2,t}}{\pi}(\Phi(t,x_1,\rho_1)),\\
     I_2&:=\mc{D}_{\nu,\rho_{2,t}}\nabla\log \frac{\rho_{2,t}}{\pi}(\Phi(t,x_1,\rho_1))-\mc{D}_{\nu,\rho_{2,t}}\nabla\log \frac{\rho_{2,t}}{\pi}(\Phi(t,x_2,\rho_2)).
 \end{align*}
 According to Proposition \ref{prop:Lipschitz in stability Wp}, we have 
 \begin{align*}
      & \left|-\nabla \textcolor{black}{\phi}(\Phi(t,x_1,\rho_1)-\Phi(t,x_2,\rho_2)) \cdot I_1\right|\\
      &\le 2\nu^{-\frac{3}{2}}(1-\nu)\lv k \rv_\infty^2\left( 2\lv \nabla_1\nabla_2 k\rv_\infty+3\lv \nabla^2 k \rv_\infty \right)^{\frac{1}{2}}  I_{\nu,Stein}(\rho_{1,t}|\pi)^{\frac{1}{2}}|\Phi(t,x_1,\rho_1)-\Phi(t,x_2,\rho_2)|^{p-1} \\
      &\qquad\qquad\qquad\times\left(\int_{\mb{R}^d} |\Phi(t,y_1,\rho_1)-\Phi(t,y_2,\rho_2)|\mathbf{\rho}^{1,2}(dy_1,dy_2)\right) \\
      &\quad+\nu^{-1}\lv k \rv_\infty C(t,k,V,\nu,\rho_1,\rho_2,q) |\Phi(t,x_1,\rho_1)-\Phi(t,x_2,\rho_2)|^{p-1} \\
      &\qquad \qquad\qquad\times\left(\int_{\mb{R}^d\times\mb{R}^d}|\Phi(t,y_1,\rho_1)-\Phi(t,y_2,\rho_2)|^p \mathbf{\rho}^{1,2}(dy_1,dy_2) \right)^{1/p}
\end{align*}
and
\begin{align*}
    &\left|-\nabla \textcolor{black}{\phi}(\Phi(t,x_1,\rho_1)-\Phi(t,x_2,\rho_2)) \cdot I_2\right| \\
    &\quad\le \nu^{-\frac{1}{2}} \left( 2\lv \nabla_1\nabla_2 k \rv_\infty+3\lv \nabla^2 k \rv_\infty\right)^{\frac{1}{2}}  {I_{\nu,Stein}(\rho_{2,t}|\pi)}^{\frac{1}{2}}|\Phi(t,x_1,\rho_1)-\Phi(t,x_2,\rho_2)|^p.
 \end{align*}
Now, defining 
 \begin{align*}
     D_p(\mathbf{\rho}^{1,2})(s):=\left(\int_{\mb{R}^d\times\mb{R}^d}|\Phi(s,y_1,\rho_1)-\Phi(s,y_2,\rho_2)|^p \mathbf{\rho}^{1,2}(dy_1,dy_2) \right)^{1/p},
 \end{align*}
 we have, for any $t\in [0,T]$ that 
 \begin{align*}
     & \textcolor{black}{\phi}(\Phi(t,x_1,\rho_1)-\Phi(t,x_2,\rho_2)) \\
     &\le  \nu^{-\frac{1}{2}} \left( 2\lv \nabla_1\nabla_2 k \rv_\infty+3\lv \nabla^2 k \rv_\infty\right)^{\frac{1}{2}} \int_0^t{I_{\nu,Stein}(\rho_{2,s}|\pi)}^{\frac{1}{2}}|\Phi(s,x_1,\rho_1)-\Phi(s,x_2,\rho_2)|^p ds\\
     & +\textcolor{black}{\phi}(x_1-x_2)+\nu^{-1}\lv k \rv_\infty C(T,k,V,\nu,\rho_1,\rho_2,q) \int_0^t |\Phi(s,x_1,\rho_1)-\Phi(s,x_2,\rho_2)|^{p-1}D_p(\mathbf{\rho}^{1,2})(s)ds\\
     & +2\nu^{-\frac{3}{2}}(1-\nu)\lv k \rv_\infty^2\left( 2\lv \nabla_1\nabla_2 k\rv_\infty+3\lv \nabla^2 k \rv_\infty \right)^{\frac{1}{2}}\\
     &\qquad\qquad\times\int_0^t I_{\nu,Stein}(\rho_{1,s}|\pi)^{\frac{1}{2}}|\Phi(s,x_1,\rho_1)-\Phi(s,x_2,\rho_2)|^{p-1}D_p(\mathbf{\rho}^{1,2})(s)ds.
      \end{align*}
 Integrating the above inequality w.r.t. the coupling $\mathbf{\rho}^{1,2}$, and using the fact that
 \begin{align*}
     \int_{\mb{R}^d\times \mb{R}^d} |\Phi(t,x_1,\rho_1)-\Phi(t,x_2,\rho_2)|^{p-1}\mathbf{\rho}^{1,2}(dx_1,dx_2)\le D_p(\mathbf{\rho}^{1,2})(\textcolor{black}{t})^{p-1},
 \end{align*}
we get
 \begin{align*}
     D_p(\mathbf{\rho}^{1,2})(t)^p
     &\le D_p(\mathbf{\rho}^{1,2})(0)^p+\nu^{-1}\lv k \rv_\infty C(T,k,V,\nu,\rho_1,\rho_2,q) \int_0^t D_p(\mathbf{\rho}^{1,2})(s)^p ds\\
     & + 2\nu^{-\frac{3}{2}}(1-\nu)\lv k \rv_\infty^2\left( 2\lv \nabla_1\nabla_2 k\rv_\infty+3\lv \nabla^2 k \rv_\infty \right)^{\frac{1}{2}}  \int_0^t{I_{\nu,Stein}(\rho_{1,s}|\pi)}^{\frac{1}{2}} D_p(\mathbf{\rho}^{1,2})(s)^p ds\\
     &\quad + \left(\nu^{-\frac{1}{2}} \left( 2\lv \nabla_1\nabla_2 k \rv_\infty+3\lv \nabla^2 k \rv_\infty\right)^{\frac{1}{2}}\right)\int_0^t{I_{\nu,Stein}(\rho_{2,s}|\pi)}^{\frac{1}{2}} D_p(\mathbf{\rho}^{1,2})(s)^p ds.
 \end{align*}
 By using Gronwall's inequality, we further obtain 
 \begin{align*}
     &\quad\quad~D_p(\mathbf{\rho}^{1,2})(t)^p\\
     & \le D_p(\mathbf{\rho}^{1,2})(0)^p \exp\left( \nu^{-1}\lv k \rv_\infty C(T,k,V,\nu,\rho_1,\rho_2,q) t\right)  \\
     &\quad \qquad  \times\exp\left( 2\nu^{-\frac{3}{2}}(1-\nu)\lv k \rv_\infty^2\left( 2\lv \nabla_1\nabla_2 k\rv_\infty+3\lv \nabla^2 k \rv_\infty \right)^{\frac{1}{2}}  \int_0^t{I_{\nu,Stein}(\rho_{1,s}|\pi)}^{\frac{1}{2}}ds \right)\\
     &\quad \qquad \qquad\times\exp\left(\nu^{-\frac{1}{2}} \left( 2\lv \nabla_1\nabla_2 k \rv_\infty+3\lv \nabla^2 k \rv_\infty\right)^{\frac{1}{2}} \int_0^t{I_{\nu,Stein}(\rho_{2,s}|\pi)}^{\frac{1}{2}}ds \right)\\
     &\le D_p(\mathbf{\rho}^{1,2})(0)^p \exp\Bigg( \nu^{-\frac{1}{2}} \left(2\lv \nabla_1\nabla_2 k \rv_\infty+3\lv \nabla^2 k \rv_\infty\right)^{\frac{1}{2}}\sqrt{ \KL(\rho_2|\pi) } t^{\frac{1}{2}}\\
     &  +\nu^{-1}\lv k \rv C(T,k,V,\nu,\rho_1,\rho_2,q)t+2\nu^{-\frac{3}{2}}(1-\nu)\lv k \rv_\infty^2\left( 2\lv \nabla_1\nabla_2 k\rv_\infty+3\lv \nabla^2 k \rv_\infty \right)^{\frac{1}{2}} \sqrt{\KL(\rho_1|\pi)}t^{\frac{1}{2}} \Bigg).
 \end{align*}
Hence, we obtain 
 \begin{align*}
     &\mc{W}_p^p(\rho_{1,t},\rho_{2,t})\\
     &=\inf_{\pi\in \Gamma(\rho_{1,t},\rho_{2,t})} \int_{\mb{R}^d\times \mb{R}^d} |x_1-x_2|^p \pi(dx_1,dx_2) \le \inf_{\mathbf{\rho}^{1,2}\in \Gamma(\rho_{1},\rho_{2})} D_p(\mathbf{\rho}^{1,2})(t)^p\\
     &\le \exp\Bigg( \nu^{-1}\lv k \rv_\infty C(T,k,V,\nu,\rho_1,\rho_2,q)t+ \nu^{-\frac{1}{2}} \left(2\lv \nabla_1\nabla_2 k \rv_\infty+3\lv \nabla^2 k \rv_\infty\right)^{\frac{1}{2}}\sqrt{ \KL(\rho_2|\pi) } t^{\frac{1}{2}}\\
     &\qquad \qquad +2\nu^{-\frac{3}{2}}(1-\nu)\lv k \rv_\infty^2\left( 2\lv \nabla_1\nabla_2 k\rv_\infty+3\lv \nabla^2 k \rv_\infty \right)^{\frac{1}{2}}\sqrt{\KL(\rho_1|\pi)}t^{\frac{1}{2}} \Bigg) \inf_{\mathbf{\rho}^{1,2}\in \Gamma(\rho_{1},\rho_{2})} D_p(\mathbf{\rho}^{1,2})(0)^p\\
     &=\exp\Bigg( \nu^{-1}\lv k \rv_\infty C(T,k,V,\nu,\rho_1,\rho_2,q)t+ \nu^{-\frac{1}{2}} \left(2\lv \nabla_1\nabla_2 k \rv_\infty+3\lv \nabla^2 k \rv_\infty\right)^{\frac{1}{2}}\sqrt{ \KL(\rho_2|\pi) } t^{\frac{1}{2}}\\
     &\qquad \qquad +2\nu^{-\frac{3}{2}}(1-\nu)\lv k \rv_\infty^2\left( 2\lv \nabla_1\nabla_2 k\rv_\infty+3\lv \nabla^2 k \rv_\infty \right)^{\frac{1}{2}}\sqrt{\KL(\rho_1|\pi)}t^{\frac{1}{2}} \Bigg)\mc{W}_p^p(\rho_1,\rho_2),
 \end{align*}
 yielding the result.
 \end{proof}
 
\textcolor{black}{Note that in Lemma~\ref{lem:Lipschitz condition function space}, we studied perturbation bounds for \eqref{eq:CF of the regularized PDE} under two different push-forward maps. In the next result, with the existence and uniqueness of solutions proved in Theorem \ref{thm:regularized mf PDE unique and existence}, we study perturbation results for \eqref{eq:regularized SVGD mf PDE} with two different initial conditions. }
 
\begin{proposition}\label{prop:Lipschitz in stability Wp} Under the assumptions of Theorem \ref{thm:stability Wp}, 
 Let $\mathbf{\rho}^{1,2}$ be a probability measure on $\mb{R}^{2d}$ with marginals $\rho_1$ and $\rho_2$. Then \textcolor{black}{for any $x,y\in \mb{R}^d$}, we have
 \begin{align}
    &\left|\mc{D}_{\nu,\rho_{2,t}}\nabla\log \frac{\rho_{2,t}}{\pi}(\textcolor{black}{x})-\mc{D}_{\nu,\rho_{2,t}}\nabla\log \frac{\rho_{2,t}}{\pi}(\textcolor{black}{y})\right| \nonumber\\
    \le&~~ \nu^{-\frac{1}{2}} \left( 2\lv \nabla_1\nabla_2 k \rv_\infty+3\lv \nabla^2 k \rv_\infty\right)^{\frac{1}{2}} {I_{\nu,Stein}(\rho_{2,t}|\pi)}^{\frac{1}{2}}\left|\textcolor{black}{x-y}\right|, \label{eq:Lipschitz stability in Wp bound 2}\\
&\text{\hspace{-0.1in}and}\nonumber\\
         &\left|\mc{D}_{\nu,\rho_{1,t}}\nabla\log \frac{\rho_{1,t}}{\pi}(\textcolor{black}{x})-\mc{D}_{\nu,\rho_{2,t}}\nabla\log \frac{\rho_{2,t}}{\pi}(\textcolor{black}{x})\right| \nonumber \\
    \le &~~ 2\nu^{-\frac{3}{2}}(1-\nu)\lv k \rv_\infty^2\left( 2\lv \nabla_1\nabla_2 k\rv_\infty+3\lv \nabla^2 k \rv_\infty \right)^{\frac{1}{2}} I_{\nu,Stein}(\rho_{1,t}|\pi)^{\frac{1}{2}} \nonumber\\
    &\qquad\qquad\times\int_{\mb{R}^d} |\Phi(t,x_1,\rho_1)-\Phi(t,x_2,\rho_2)|\mathbf{\rho}^{1,2}(dx_1,dx_2) \nonumber \\
      &~~ +\nu^{-1}\lv k\rv_\infty C(t,k,V,\nu,\rho_1,\rho_2,q)\left(\int_{\mb{R}^d\times\mb{R}^d}\left|\Phi(t,y_1,\rho_1)-\Phi(t,y_2,\rho_2)\right|^p \mathbf{\rho}^{1,2}(dy_1,dy_2) \right)^{1/p}, \label{eq:Lipschitz stability in Wp bound 1} 
\end{align}
 where $C(t,k,V,\nu,\rho_1,\rho_2,q)$ is given in \eqref{eq:stability proof bound 5}.
 \end{proposition}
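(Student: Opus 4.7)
The plan is to prove the two bounds by paralleling the structure of Lemmas~\ref{lem:Lipschitz condition pointwise} and~\ref{lem:Lipschitz condition function space}, the key difference being that $\rho_{1,t}$ and $\rho_{2,t}$ arise from distinct initial measures $\rho_1$ and $\rho_2$, so that differences of integrals against $\rho_{1,t},\rho_{2,t}$ must be lifted to the coupling $\mathbf{\rho}^{1,2}$ through $\rho_{i,t}=\Phi(t,\cdot,\rho_i)_{\#}\rho_i$. For~\eqref{eq:Lipschitz stability in Wp bound 2}, I would simply repeat the argument of Lemma~\ref{lem:Lipschitz condition pointwise} verbatim with $\rho_{u,t}$ replaced by $\rho_{2,t}$: write $\mc{D}_{\nu,\rho_{2,t}}=\iota_{k,\rho_{2,t}}\bigl((1-\nu)\iota_{k,\rho_{2,t}}^{*}\iota_{k,\rho_{2,t}}+\nu I_d\bigr)^{-1}\iota_{k,\rho_{2,t}}^{*}$, express the difference as an inner product $\langle k(x,\cdot)-k(y,\cdot),\, ((1-\nu)\iota_{k,\rho_{2,t}}^{*}\iota_{k,\rho_{2,t}}+\nu I_d)^{-1}\iota_{k,\rho_{2,t}}^{*}\nabla\log(\rho_{2,t}/\pi)\rangle_{\mc{H}_k^d}$ via the reproducing property, apply Cauchy--Schwarz, use the bound~\eqref{eq:inverse gradient RKHS bound} for the operator factor, and control $\lv k(x,\cdot)-k(y,\cdot)\rv_{\mc{H}_k^d}^{2}=k(x,x)+k(y,y)-2k(x,y)$ by a second-order Taylor expansion in both kernel arguments to obtain the factor $(2\lv\nabla_{1}\nabla_{2}k\rv_{\infty}+3\lv\nabla^{2}k\rv_{\infty})|x-y|^{2}$.

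For~\eqref{eq:Lipschitz stability in Wp bound 1}, I would decompose as in Lemma~\ref{lem:Lipschitz condition function space} into $J_{1}+J_{2}$, where $J_1$ perturbs only the regularized inverse and $J_2$ perturbs only $\iota_{k,\cdot}^{*}\nabla\log(\cdot/\pi)$. For $J_{1}$ I would use the resolvent identity $A^{-1}-B^{-1}=A^{-1}(B-A)B^{-1}$ with $A=(1-\nu)\iota_{k,\rho_{1,t}}^{*}\iota_{k,\rho_{1,t}}+\nu I_{d}$ and $B=(1-\nu)\iota_{k,\rho_{2,t}}^{*}\iota_{k,\rho_{2,t}}+\nu I_{d}$. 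The factors $\lv\iota_{k,\rho_{1,t}}\rv_{\mc{H}_k^d\to L_\infty^d}\le \lv k\rv_{\infty}$ and $\lv A^{-1}\rv,\lv B^{-1}\rv\le\nu^{-1}$ yield constants, while the remaining operator-norm factor $\lv\iota_{k,\rho_{1,t}}^{*}\iota_{k,\rho_{1,t}}-\iota_{k,\rho_{2,t}}^{*}\iota_{k,\rho_{2,t}}\rv_{\mc{H}_k^d\to \mc{H}_k^d}$ would be treated exactly as in the display following~\eqref{eq:bound term 1 Lipschitz condition function space}: express the difference of quadratic forms as a single integral against $\rho_{1,t}-\rho_{2,t}$, push back to $\rho_1,\rho_2$ via the characteristic flow, replace it by an integral against $\mathbf{\rho}^{1,2}$, and bound the integrand using Taylor expansion of $k$ and the kernel-derivative assumptions. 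This produces the first term of~\eqref{eq:Lipschitz stability in Wp bound 1} involving $\int|\Phi(t,x_{1},\rho_{1})-\Phi(t,x_{2},\rho_{2})|\,\mathbf{\rho}^{1,2}(dx_{1},dx_{2})$.

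For $J_2$ I would use $|J_{2}(x)|\le \nu^{-1}\lv k\rv_{\infty}\lv\iota_{k,\rho_{1,t}}^{*}\nabla\log(\rho_{1,t}/\pi)-\iota_{k,\rho_{2,t}}^{*}\nabla\log(\rho_{2,t}/\pi)\rv_{\mc{H}_k^d}$, then apply integration by parts as in Remark~\ref{remark:alternative op vf}(i) to write
\begin{equation*}
\iota_{k,\rho_{i,t}}^{*}\nabla\log(\rho_{i,t}/\pi)(x) = \int[k(x,z)\nabla V(z)-\nabla_{2}k(x,z)]\rho_{i,t}(z)\,dz,
\end{equation*}
and lift to $\mathbf{\rho}^{1,2}$ via the flow. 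The resulting RKHS norm splits into $\lv k(\cdot,u)\nabla V(u)-k(\cdot,v)\nabla V(v)\rv_{\mc{H}_k^d}$ and $\lv\nabla_{2}k(\cdot,u)-\nabla_{2}k(\cdot,v)\rv_{\mc{H}_k^d}$ with $u=\Phi(t,y_{1},\rho_{1}),\, v=\Phi(t,y_{2},\rho_{2})$, which are bounded exactly as in~\eqref{eq:term 2 bound 1} and~\eqref{eq:term 2 bound 2}. The main obstacle here, and the reason Assumption~\ref{ass:existence and uniqueness on V V2} is invoked, is that the stability proof needs an $L^{p}$-type quantity $D_p(\mathbf{\rho}^{1,2})$ rather than a uniform-in-$(x_1,x_2)$ quantity. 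I would therefore use the mean-value inequality $|\nabla V(u)-\nabla V(v)|\le C_V^{1/q}(1+V(u)+V(v))^{1/q}|u-v|$ coming from Assumption~\ref{ass:existence and uniqueness on V V2}, then apply H\"older's inequality with conjugate exponents $p,q$: the $|u-v|$ factor integrates to $D_p(\mathbf{\rho}^{1,2})(t)$, while the $(1+V(u)+V(v))^{1/q}$ factor integrates to a power of $1+\lv\rho_{1,t}\rv_{\mc{P}_V}+\lv\rho_{2,t}\rv_{\mc{P}_V}$, which by Theorem~\ref{thm:regularized mf PDE unique and existence} is bounded in terms of $\lv\rho_{1}\rv_{\mc{P}_V},\lv\rho_{2}\rv_{\mc{P}_V},\KL(\rho_{1}|\pi),\KL(\rho_{2}|\pi)$ on $[0,T]$. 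Collecting the resulting constants defines $C(t,k,V,\nu,\rho_{1},\rho_{2},q)$ and completes~\eqref{eq:Lipschitz stability in Wp bound 1}. The delicate step is precisely this H\"older balancing to match the $L^{p}$ structure required by the Gronwall argument in Theorem~\ref{thm:stability Wp}; all kernel-side estimates are routine adaptations of the corresponding steps in Lemma~\ref{lem:Lipschitz condition function space}.
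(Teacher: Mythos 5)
Your proposal follows the paper's proof essentially step by step: the same two-term decomposition, the same resolvent identity $A^{-1}-B^{-1}=B^{-1}(B-A)A^{-1}$ for the first term, the same push-back to the coupling $\mathbf{\rho}^{1,2}$ via the characteristic flow, and the same use of Assumption~\ref{ass:existence and uniqueness on V V2} together with H\"older (conjugate exponents $p,q$) and the $\mc{P}_V$-norm bound \eqref{eq:PV norm bound} to produce $C(t,k,V,\nu,\rho_1,\rho_2,q)$. One small imprecision: writing ``$\lv A^{-1}\rv,\lv B^{-1}\rv\le\nu^{-1}$'' is too crude for the factor that acts on $\iota_{k,\rho_{1,t}}^*\nabla\log(\rho_{1,t}/\pi)$ --- there you must use the refined bound \eqref{eq:inverse gradient RKHS bound} giving $\nu^{-1/2}I_{\nu,Stein}(\rho_{1,t}|\pi)^{1/2}$, or else you obtain $\nu^{-2}$ rather than the stated $\nu^{-3/2}$ and lose the $I_{\nu,Stein}^{1/2}$ factor that the Gronwall argument in Theorem~\ref{thm:stability Wp} relies on; your pointer to the display following~\eqref{eq:bound term 1 Lipschitz condition function space} does resolve this, since that display uses the refined bound.
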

 \begin{proof}[Proof of Proposition~\ref{prop:Lipschitz in stability Wp}] 
 First, we prove \eqref{eq:Lipschitz stability in Wp bound 2}. For \textcolor{black}{any $x,y\in \mb{R}^d$},
 \begin{align*}
     &\quad \left|\mc{D}_{\nu,\rho_{2,t}}\nabla \log \frac{\rho_{2,t}}{\pi}(x)-\mc{D}_{\nu,\rho_{2,t}}\nabla \log \frac{\rho_{2,t}}{\pi}(y)\right|\\
     & =\left|\left((1-\nu)\iota_{k,\rho_{2,t}}^*\iota_{k,\rho_{2,t}}+\nu I_d\right)^{-1}\iota_{k,\rho_{2,t}}^*\nabla \log \frac{\rho_{2,t}}{\pi}(x)\right.\\
     &\qquad\qquad-\left.\left((1-\nu)\iota_{k,\rho_{2,t}}^*\iota_{k,\rho_{2,t}}+\nu I_d\right)^{-1}\iota_{k,\rho_{2,t}}^*\nabla \log \frac{\rho_{2,t}}{\pi}(y)\right|\\
     &=\left|\leftlangle k(x,\cdot)-k(y,\cdot),\left((1-\nu)\iota_{k,\rho_{2,t}}^*\iota_{k,\rho_{2,t}}+\nu I_d\right)^{-1}\iota_{k,\rho_{2,t}}^*\nabla \log \frac{\rho_{2,t}}{\pi}(\cdot)  \rightrangle_{\mc{H}_k}\right|\\
     &\le \lv  k(x,\cdot)-k(y,\cdot) \rv_{\mc{H}_k} \lv \left((1-\nu)\iota_{k,\rho_{2,t}}^*\iota_{k,\rho_{2,t}}+\nu I_d\right)^{-1}\iota_{k,\rho_{2,t}}^*\nabla \log \frac{\rho_{2,t}}{\pi} \rv_{\mc{H}_k^d}\\
     & \le \nu^{-\frac{1}{2}}I_{\nu,Stein}(\rho_{2,t}|\pi)^{\frac{1}{2}} \lv  k(x,\cdot)-k(y,\cdot) \rv_{\mc{H}_k^d}\\
     &\le \nu^{-\frac{1}{2}}I_{\nu,Stein}(\rho_{2,t}|\pi)^{\frac{1}{2}} \left( 2\lv \nabla_1\nabla_2 k \rv_\infty+3\lv \nabla^2 k \rv_\infty\right)^{\frac{1}{2}} |x-y|,
 \end{align*}
 where the second inequality follows from \eqref{eq:inverse gradient RKHS bound} and the last inequality follows from the reproducing property and Taylor expansion. The claim in \eqref{eq:Lipschitz stability in Wp bound 2} then follows from the above inequality. 
 
 To prove \eqref{eq:Lipschitz stability in Wp bound 1}, first we have for any $x\in \mb{R}^d$,
 \begin{align}\label{eq:ref 1.1}
       &\quad \left|\mc{D}_{\nu,\rho_{1,t}}\nabla\log \frac{\rho_{1,t}}{\pi}(x)-\mc{D}_{\nu,\rho_{2,t}}\nabla\log \frac{\rho_{2,t}}{\pi}(x)\right| \nonumber\\
        &\le \left|\left( \left((1-\nu)\iota_{k,\rho_{1,t}}^*\iota_{k,\rho_{1,t}}+\nu I_d\right)^{-1}-\left((1-\nu)\iota_{k,\rho_{2,t}}^*\iota_{k,\rho_{2,t}}+\nu I_d\right)^{-1}  \right)\iota_{k,\rho_{1,t}}^*\nabla \log \frac{\rho_{1,t}}{\pi}(x) \right|\nonumber\\
        &\ +\left|\left((1-\nu)\iota_{k,\rho_{2,t}}^*\iota_{k,\rho_{2,t}}+\nu I_d\right)^{-1} \left( \iota_{k,\rho_{1,t}}^*\nabla \log \frac{\rho_{1,t}}{\pi}(x)-\iota_{k,\rho_{2,t}}^*\nabla \log \frac{\rho_{2,t}}{\pi}(x) \right) \right|.
    \end{align}
    Next we bound the two terms on the right-hand side of \eqref{eq:ref 1.1}.
    
    \vspace{.4cm}
    \noindent \textbf{First term:}  we have
    \begin{align*}
     &\quad \left|\left( \left((1-\nu)\iota_{k,\rho_{1,t}}^*\iota_{k,\rho_{1,t}}+\nu I_d\right)^{-1}-\left((1-\nu)\iota_{k,\rho_{2,t}}^*\iota_{k,\rho_{2,t}}+\nu I_d\right)^{-1}  \right)\iota_{k,\rho_{1,t}}^*\nabla \log \frac{\rho_{1,t}}{\pi}(x) \right|\\
     &=\left|\iota_{k,\rho_{1,t}}\left((1-\nu)\iota_{k,\rho_{2,t}}^*\iota_{k,\rho_{2,t}}+\nu I\right)^{-1}\left((1-\nu)\iota_{k,\rho_{2,t}}^*\iota_{k,\rho_{2,t}}-(1-\nu)\iota_{k,\rho_{1,t}}^*\iota_{k,\rho_{1,t}}\right)\right. \\
     &  \qquad \qquad \left. \times\left((1-\nu)\iota_{k,\rho_{1,t}}^*\iota_{k,\rho_{1,t}}+\nu I\right)^{-1}\iota_{k,\rho_{1,t}}^*\nabla \log \frac{\rho_{1,t}}{\pi}(x)\right|  \\
     &\le \lv \iota_{k,\rho_{1,t}} \rv_{\mc{H}_k^d\to L_\infty^d} \lv \left((1-\nu)\iota_{k,\rho_{2,t}}^*\iota_{k,\rho_{2,t}}+\nu I\right)^{-1} \rv_{\mc{H}_k^d\to \mc{H}_k^d} (1-\nu)\lv \iota_{k,\rho_{2,t}}^*\iota_{k,\rho_{2,t}}-\iota_{k,\rho_{1,t}}^*\iota_{k,\rho_{1,t}} \rv_{\mc{H}_k^d\to\mc{H}_k^d}  \\
     &\qquad \qquad\times \lv \left((1-\nu)\iota_{k,\rho_{1,t}}^*\iota_{k,\rho_{1,t}}+\nu I\right)^{-1}\iota_{k,\rho_{1,t}}^*\nabla \log \frac{\rho_{1,t}}{\pi} \rv_{\mc{H}_k^d}  \\
     &\le  \lv k \rv_\infty \nu^{-\frac{3}{2}}(1-\nu)  I_{\nu,Stein}(\rho_{1,t}|\pi)^{\frac{1}{2}} \nonumber\\
     &\quad  \sup_{\lv \phi \rv_{\mc{H}_k^d}=1} \leftlangle \int_{\mb{R}^d} k(\cdot,x)\phi(x)(d\rho_{1,t}(x)-d\rho_{2,t}(x)), \int_{\mb{R}^d} k(\cdot,y)\phi(y)(d\rho_{1,t}(y)-d\rho_{2,t}(y)) \rightrangle_{\mc{H}_k^d}^{\frac{1}{2}},
    \end{align*}
    and 
    \begin{align*}
     &   \sup_{\lv \phi \rv_{\mc{H}_k^d}=1} \leftlangle \int_{\mb{R}^d} k(\cdot,x)\phi(x)(d\rho_{1,t}(x)-d\rho_{2,t}(x)), \int_{\mb{R}^d} k(\cdot,y)\phi(y)(d\rho_{1,t}(y)-d\rho_{2,t}(y)) \rightrangle_{\mc{H}_k^d}^{\frac{1}{2}}\\
     &=\bigg( \sup_{\lv \phi \rv_{\mc{H}_k^d}=1}  \int_{\mb{R}^{d}\times\mb{R}^d}\int_{\mb{R}^{d}\times\mb{R}^d} \bigg\langle k\left(\Phi(t,x_1,\rho_1),\cdot\right)\phi\left(\Phi(t,x_1,\rho_1)\right)-k\left(\Phi(t,x_2,\rho_2),\cdot\right)\phi\left(\Phi(t,x_2,\rho_2)\right), \\
     &
k\left(\Phi(t,y_1,\rho_1),\cdot\right)\phi\left(\Phi(t,y_1,\rho_1)\right)-k\left(\Phi(t,y_2,\rho_2),\cdot\right)\phi\left(\Phi(t,y_2,\rho_2)\right)\bigg\rangle_{\mc{H}_k^d} \mathbf{\rho}^{1,2}(dx_1,dx_2)\mathbf{\rho}^{1,2}(dy_1,y_2) \bigg)^{\frac{1}{2}}\\
     &\le \int_{\mb{R}^d}\int_{\mb{R}^d} \lv k\left(\Phi(t,x_1,\rho_1),\cdot\right)\phi\left(\Phi(t,x_1,\rho_1)\right)-k\left(\Phi(t,x_2,\rho_2),\cdot\right)\phi\left(\Phi(t,x_2,\rho_2)\right)\rv_{\mc{H}_k^d}   \mathbf{\rho}^{1,2}(dx_1,dx_2) \\
     &\le 2\lv k \rv_\infty \left( 2\lv \nabla_1\nabla_2 k\rv_\infty+3\lv \nabla^2 k \rv_\infty \right)^{\frac{1}{2}} \int_{\mb{R}^d\times\mb{R}^d} |\Phi(t,x_1,\rho_1)-\Phi(t,x_2,\rho_2)|\mathbf{\rho}^{1,2}(dx_1,dx_2).
    \end{align*}

\vspace{.4cm}

\noindent \textbf{Second term:} we have
  \begin{align*}
     &\quad \left|\left((1-\nu)\iota_{k,\rho_{2,t}}^*\iota_{k,\rho_{2,t}}+\nu I\right)^{-1} \left( \iota_{k,\rho_{1,t}}^*\nabla \log \frac{\rho_{1,t}}{\pi}(x)-\iota_{k,\rho_{2,t}}^*\nabla \log \frac{\rho_{2,t}}{\pi}(x) \right) \right|  \\
     &=\left|\iota_{k,\rho_{2,t}}\left((1-\nu)\iota_{k,\rho_{2,t}}^*\iota_{k,\rho_{2,t}}+\nu I\right)^{-1} \left( \iota_{k,\rho_{1,t}}^*\nabla \log \frac{\rho_{1,t}}{\pi}(x)-\iota_{k,\rho_{2,t}}^*\nabla \log \frac{\rho_{2,t}}{\pi}(x) \right) \right| \\
     &\le \lv \iota_{k,\rho_{2,t}} \rv_{\mc{H}_k^d\to L_\infty^d} \lv \left((1-\nu)\iota_{k,\rho_{2,t}}^*\iota_{k,\rho_{2,t}}+\nu I\right)^{-1} \rv_{\mc{H}_k^d\to \mc{H}_k^d} \lv \iota_{k,\rho_{1,t}}^*\nabla \log \frac{\rho_{1,t}}{\pi}-\iota_{k,\rho_{2,t}}^*\nabla \log \frac{\rho_{2,t}}{\pi} \rv_{\mc{H}_k^d}\\
     &\le \lv k \rv_\infty \nu^{-1}  \lv \iota_{k,\rho_{1,t}}^*\nabla \log \frac{\rho_{1,t}}{\pi}-\iota_{k,\rho_{2,t}}^*\nabla \log \frac{\rho_{2,t}}{\pi} \rv_{\mc{H}_k^d}.
    \end{align*}
    For the factor $\lv \iota_{k,\rho_{1,t}}^*\nabla \log \frac{\rho_{1,t}}{\pi}-\iota_{k,\rho_{2,t}}^*\nabla \log \frac{\rho_{2,t}}{\pi} \rv_{\mc{H}_k^d}$, notice that
    \begin{align*}
        &\quad \iota_{k,\rho_{1,t}}^*\nabla \log \frac{\rho_{1,t}}{\pi}(x)-\iota_{k,\rho_{2,t}}^*\nabla \log \frac{\rho_{2,t}}{\pi}(x) \\
        &=\int_{\mb{R}^d} k(x,y)\nabla \log \frac{\rho_{1,t}}{\pi}(y)d\rho_{1,t}(y)-\int_{\mb{R}^d} k(x,y)\nabla \log \frac{\rho_{2,t}}{\pi}d\rho_{2,t}(y) \\
        &= \int_{\mb{R}^d} \left( k(x,y)\nabla V(y)-\nabla_2 k(x,y) \right)d\rho_{1,t}(y)-\int_{\mb{R}^d} \left( k(x,y)\nabla V(y)-\nabla_2 k(x,y) \right)d\rho_{2,t}(y) \\
        &=\int_{\mb{R}^d\times \mb{R}^d} \left(k(x,\Phi(t,y_1,\rho_1))\nabla V(\Phi(t,y_1,\rho_1))-k(x,\Phi(t,y_2,\rho_2))\nabla V(\Phi(t,y_2,\rho_2))\right) d\mathbf{\rho}^{1,2}(dy_1,dy_2) \\
        & \quad - \int_{\mb{R}^d\times \mb{R}^d}  \left( \nabla_2 k(x,\Phi(t,y_1,\rho_1))-\nabla_2 k(x,\Phi(t,y_2,\rho_2)) \right) d\mathbf{\rho}^{1,2}(dy_1,dy_2),
    \end{align*}
    and we get
    \begin{align}\label{eq:ref 1.3}
     &\quad \lv \iota_{k,\textcolor{black}{\rho_{1,t}}}^*\nabla \log \frac{\textcolor{black}{\rho_{1,t}}}{\pi}-\iota_{k,\textcolor{black}{\rho_{2,t}}}^*\nabla \log \frac{\textcolor{black}{\rho_{2,t}}}{\pi} \rv_{\mc{H}_k^d} \nonumber \\
      &\le \int_{\mb{R}^d\times \mb{R}^d} \lv k(\cdot,\Phi(t,y_1,\rho_1))\nabla V(\Phi(t,y_1,\rho_1))-k(\cdot,\Phi(t,y_2,\rho_2))\nabla V(\Phi(t,y_2,\rho_2))\rv_{\mc{H}_k^d} d\mathbf{\rho}^{1,2}(dy_1,dy_2) \nonumber \\
        & \quad + \int_{\mb{R}^d\times \mb{R}^d}  \lv \nabla_2 k(\cdot,\Phi(t,y_1,\rho_1))-\nabla_2 k(\cdot,\Phi(t,y_2,\rho_2)) \rv_{\mc{H}_k^d} d\mathbf{\rho}^{1,2}(dy_1,dy_2).
        \end{align}
    For simplicity, we denote $\Phi(t,y_1,\rho_1),\Phi(t,y_2,\rho_2)$ as $\Phi_1$ and $\Phi_2$ respectively in the following calculations. We will bound the two integrals \textcolor{black}{in \eqref{eq:ref 1.3}} separately.\\
    
     \noindent For the first integral in \eqref{eq:ref 1.3}, we have 
    \begin{align*}
      & \quad \lv k(\cdot,\Phi_1)\nabla V(\Phi_1)-k(\cdot,\Phi_2)\nabla V(\Phi_2) \rv_{\mc{H}_k^d}^2   \\
      &= \leftlangle k(\cdot,\Phi_1)\nabla V(\Phi_1)-k(\cdot,\Phi_2)\nabla V(\Phi_2), k(\cdot,\Phi_1)\nabla V(\Phi_1)-k(\cdot,\Phi_2)\nabla V(\Phi_2) \rightrangle_{\mc{H}_k^d}\\
      &=|\nabla V(\Phi_1)|^2 k(\Phi_1,\Phi_1)-2\leftlangle \nabla V(\Phi_1) , \nabla V(\Phi_2) \rightrangle k(\Phi_1,\Phi_2)+|\nabla V(\Phi_2)|^2 k(\Phi_2,\Phi_2)\\
      &\textcolor{black}{\le |}\leftlangle \nabla V(\Phi_1)-\nabla V(\Phi_2) ,  \nabla V(\Phi_1)k(\Phi_1,\Phi_1)-\nabla V(\Phi_2)k(\Phi_2,\Phi_2) \rightrangle \textcolor{black}{|}\\
      &\quad +\textcolor{black}{|}\leftlangle \nabla V(\Phi_1), \nabla V(\Phi_2)\rightrangle \left( k(\Phi_1,\Phi_1)+k(\Phi_2,\Phi_2)-2k(\Phi_1,\Phi_2) \right)\textcolor{black}{|},
    \end{align*}
    where 
        \begin{align}\label{eq:ref 1.4}
        &\quad  \left| \leftlangle \nabla V(\Phi_1)-\nabla V(\Phi_2) ,  \nabla V(\Phi_1)k(\Phi_1,\Phi_1)-\nabla V(\Phi_2)k(\Phi_2,\Phi_2) \rightrangle \right|\nonumber\\
        &\le |\nabla V(\Phi_1)-\nabla V(\Phi_2)|^2 k(\Phi_1,\Phi_1)+|\nabla V(\Phi_1)-\nabla V(\Phi_2)||\nabla V(\Phi_2)||k(\Phi_1,\Phi_1)-k(\Phi_2,\Phi_2) |\nonumber\\
        &\le \sup_{\theta \in [0,1]}\lv \nabla ^2 ( \theta \Phi_1+(1-\theta)\Phi_2 )\rv_2^2  |\Phi_1-\Phi_2|^2 \lv k \rv_\infty^2\nonumber\\
        & \quad +\sup_{\theta \in [0,1]}\lv \nabla ^2 ( \theta \Phi_1+(1-\theta)\Phi_2 )\rv_2  |\Phi_1-\Phi_2| \textcolor{black}{|\nabla V(\Phi_2)|} |k(\Phi_1,\Phi_1)-k(\Phi_2,\Phi_2) |\nonumber \\
        &\le C_V^{2/q}(1+V(\Phi_1)+V(\Phi_2))^{2/q} \lv k \rv_\infty^2 |\Phi_1-\Phi_2|^2\nonumber\\
        & \quad + C_V^{1/q} (1+V(\Phi_1)+V(\Phi_2))^{1/q} C_V^{\textcolor{black}{1/q}}(1+V(\Phi_2))^{1/q} \lv \nabla k \rv_\infty |\Phi_1-\Phi_2|^2 \nonumber\\
        &\le C_V^{2/q} \left( \lv k \rv_\infty^2+\lv \nabla k \rv_\infty \right) (1+V(\Phi_1)+V(\Phi_2))^{2/q} |\Phi_1-\Phi_2|^2.
    \end{align}
    The third inequality follows from Assumption \ref{ass:existence and uniqueness on V V2} and the last inequality follows from Assumption \ref{ass:existence and uniqueness on K} and Taylor expansion on both variables in $k$ up to first order. Furthermore, we have
    \begin{align}\label{eq:ref 1.5}
        &\quad \left| \leftlangle \nabla V(\Phi_1), \nabla V(\Phi_2)\rightrangle \left( k(\Phi_1,\Phi_1)+k(\Phi_2,\Phi_2)-2k(\Phi_1,\Phi_2) \right) \right|\nonumber\\
        &\le C_V^{2/q} (1+V(\Phi_1))^{1/q}(1+V(\Phi_2))^{1/q} |k(\Phi_1,\Phi_1)+k(\Phi_2,\Phi_2)-2k(\Phi_1,\Phi_2)|\nonumber\\
        &\le C_V^{2/q} (1+V(\Phi_1))^{1/q}(1+V(\Phi_2))^{1/q}\left( 3\lv \nabla^2 k \rv_\infty+2\lv \nabla_1\nabla_2 k\rv_\infty \right) |\Phi_1-\Phi_2|^2,
    \end{align}
    where the first inequality follows from Assumption \ref{ass:existence and uniqueness on V V2} and the last inequality follows from Assumption \ref{ass:existence and uniqueness on K} and Taylor expansion on both variables in $k$ up to second order. 
    
    From \textcolor{black}{\eqref{eq:ref 1.4} and \eqref{eq:ref 1.5}}, we get
    \begin{align}%
    &\quad  \int_{\mb{R}^d\times \mb{R}^d} \lv k(\cdot,\Phi(t,y_1,\rho_1))\nabla V(\Phi(t,y_1,\rho_1))-k(\cdot,\Phi(t,y_2,\rho_2))\nabla V(\Phi(t,y_2,\rho_2))\rv_{\mc{H}_k^d} d\mathbf{\rho}^{1,2}(dy_1,dy_2) \nonumber\\
   &\le C_V^{1/q}\left( \lv k \rv_\infty+{\lv \nabla k \rv_\infty}^{\frac{1}{2}}+3{\lv \nabla^2 k \rv_\infty}^{\frac{1}{2}}+2{\lv \nabla_1\nabla_2 k \rv_\infty}^{\frac{1}{2}} \right)  \nonumber \\
   & \quad \qquad   \times\int_{\mb{R}^d\times\mb{R}^d}|\Phi_1(t,y_1,\rho_1)-\Phi(t,y_2,\rho_2)|(1+V(\Phi(t,y_1,\rho_1))+V(\Phi(t,y_2,\rho_2)))^{1/q}\mathbf{\rho}^{1,2}(dy_1,dy_2) \nonumber\\
   &\le C_V^{1/q}\left( \lv k \rv_\infty+{\lv \nabla k \rv_\infty}^{\frac{1}{2}}+3{\lv \nabla^2 k \rv_\infty}^{\frac{1}{2}}+2{\lv \nabla_1\nabla_2 k \rv_\infty}^{\frac{1}{2}} \right)   \nonumber \\
   & \quad \qquad  \times\left(\int_{\mb{R}^d\times\mb{R}^d}|\Phi_1(t,y_1,\rho_1)-\Phi(t,y_2,\rho_2)|^p \mathbf{\rho}^{1,2}(dy_1,dy_2) \right)^{1/p} \nonumber  \\  & \quad \qquad \qquad\times\left(\int_{\mb{R}^d\times\mb{R}^d} 
\textcolor{black}{\big(}1+V(\Phi(t,y_1,\rho_1))+V(\Phi(t,y_2,\rho_2)) \textcolor{black}{\big)} \mathbf{\rho}^{1,2}(dy_1,dy_2)\right)^{1/q} \nonumber \\
   &\le \left( \lv \rho_1 \rv_{\mc{P}_V} \exp(C_{1,0}\nu^{-1/2}\lv k \rv_\infty \sqrt{t \KL(\rho_1|\pi)} )\right.\nonumber\\
   &\qquad\qquad\qquad\left. +\lv \rho_2 \rv_{\mc{P}_V} \exp(C_{1,0}\nu^{-1/2}\lv k \rv_\infty \sqrt{t \KL(\rho_2|\pi) )}  \right)^{1/q} \nonumber\\
   &\quad \qquad   \times C_V^{1/q}\left( \lv k \rv_\infty+{\lv \nabla k \rv_\infty}^{\frac{1}{2}}+3{\lv \nabla^2 k \rv_\infty}^{\frac{1}{2}}+2{\lv \nabla_1\nabla_2 k \rv_\infty}^{\frac{1}{2}} \right)\nonumber\\
   & \quad \quad\qquad \qquad\times\left(\int_{\mb{R}^d\times\mb{R}^d}|\Phi_1(t,y_1,\rho_1)-\Phi(t,y_2,\rho_2)|^p \mathbf{\rho}^{1,2}(dy_1,dy_2) \right)^{1/p} \nonumber\\
   &\le  3C_V^{1/q}\left( \lv k \rv_\infty+{\lv \nabla k \rv_\infty}^{\frac{1}{2}}+{\lv \nabla^2 k \rv_\infty}^{\frac{1}{2}}+{\lv \nabla_1\nabla_2 k \rv_\infty}^{\frac{1}{2}} \right)\left( \lv \rho_1 \rv_{\mc{P}_V}+\lv \rho_2 \rv_{\mc{P}_V} \right)^{1/q} \nonumber\\
   & \quad\qquad \times \exp(C_{1,0}\nu^{-1/2}q^{-1}\lv k \rv_\infty \sqrt{t (\KL(\rho_1|\pi)+\KL(\rho_2|\pi)})) \nonumber \\
   & \quad\qquad  \qquad\times \left(\int_{\mb{R}^d\times\mb{R}^d}|\Phi_1(t,y_1,\rho_1)-\Phi(t,y_2,\rho_2)|^p \mathbf{\rho}^{1,2}(dy_1,dy_2) \right)^{1/p} \nonumber \\
   &:=C_1(k,V) \left( \lv \rho_1 \rv_{\mc{P}_V}+\lv \rho_2 \rv_{\mc{P}_V} \right)^{\frac{1}{q}}\exp\left( D_1(k,\nu,q) \left(\KL(\rho_1|\pi)+\KL(\rho_2|\pi)\right)^{\frac{1}{2}}t^{\frac{1}{2}} \right)  \nonumber \\
   &\quad \qquad  \times\left(\int_{\mb{R}^d\times\mb{R}^d}|\Phi_1(t,y_1,\rho_1)-\Phi(t,y_2,\rho_2)|^p \mathbf{\rho}^{1,2}(dy_1,dy_2) \right)^{1/p} \label{eq:stability proof bound 2}.
    \end{align}
   \textcolor{black}{where the third inequality follows from \eqref{eq:PV norm bound}}.
   
    \vspace{.2cm}
    \noindent For the second integral in \eqref{eq:ref 1.3}, denoting the function $\nabla_1\cdot\nabla_2 k=D_{1,2}k$, we first notice that $D_{1,2}k$ is symmetric since $k$ is symmetric. According to the above identity, we get
    \begin{align*}
     \lv \nabla_2 k(\cdot,\Phi_1)-\nabla_2 k(\cdot,\Phi_2) \rv_{\mc{H}_k^d}^2&=D_{1,2}k(\Phi_1,\Phi_1)+ D_{1,2}k(\Phi_2,\Phi_2)-2D_{1,2}k(\Phi_1,\Phi_2)  .
    \end{align*}
    Applying Taylor's series expansion on both variables of $D_{1,2}k$, we get
    \begin{align*}
      \lv \nabla_2 k(\cdot,\Phi_1)-\nabla_2 k(\cdot,\Phi_2) \rv_{\mc{H}_k^d}^2\le \left(2\lv \nabla^2(D_{1,2}k) \rv_\infty+\lv \nabla_1\nabla_2(D_{1,2}k) \rv_\infty \right)|\Phi_1-\Phi_2|^2 .  
    \end{align*}
    \textcolor{black}{Therefore} we obtain
    \begin{align}\label{eq:stability proof bound 3}
         &\int_{\mb{R}^d}\lv \nabla_2 k(\cdot,\Phi(t,y_1,\rho_1))-\nabla_2 k(\cdot,\Phi(t,y_2,\rho_2)) \rv_{\mc{H}_k^d} d\mathbf{\rho}^{1,2}(dy_1,dy_2) \nonumber\\
         \le & \left(2{\lv \nabla^2(D_{1,2}k) \rv_\infty}^{\frac{1}{2}}+{\lv \nabla_1\nabla_2(D_{1,2}k) \rv_\infty}^{\frac{1}{2}} \right)\int_{\mb{R}^d\times\mb{R}^d}|\Phi(t,y_1,\rho_1)-\Phi(t,y_2,\rho_2)| \mathbf{\rho}^{1,2}(dy_1,dy_2).
    \end{align}
    Based on \textcolor{black}{\eqref{eq:ref 1.3}, \eqref{eq:stability proof bound 2} and \eqref{eq:stability proof bound 3}}, we then get
     \begin{align}\label{eq:ref 1.6}
     & \quad \lv \iota_{k,\textcolor{black}{\rho_{1,t}}}^*\nabla \log \frac{\textcolor{black}{\rho_{1,t}}}{\pi}-\iota_{k,\textcolor{black}{\rho_{2,t}}}^*\nabla \log \frac{\textcolor{black}{\rho_{2,t}}}{\pi} \rv_{\mc{H}_k^d}\nonumber\\
     & \le C_1(k,V) \left( \lv \rho_1 \rv_{\mc{P}_V}+\lv \rho_2 \rv_{\mc{P}_V} \right)^{\frac{1}{q}}\exp\left( D_1(k,\nu,q) \left(\KL(\rho_1|\pi)+\KL(\rho_2|\pi)\right)^{\frac{1}{2}}t^{\frac{1}{2}} \right)  \nonumber\\
     & \quad\qquad \qquad \left(\int_{\mb{R}^d\times\mb{R}^d}|\Phi_1(t,y_1,\rho_1)-\Phi(t,y_2,\rho_2)|^p \mathbf{\rho}^{1,2}(dy_1,dy_2) \right)^{1/p} \nonumber\\
     & \quad+ \left(2{\lv \nabla^2(D_{1,2}k) \rv_\infty}^{\frac{1}{2}}+{\lv \nabla_1\nabla_2(D_{1,2}k) \rv_\infty}^{\frac{1}{2}} \right)\int_{\mb{R}^d\times\mb{R}^d}|\Phi(t,y_1,\rho_1)-\Phi(t,y_2,\rho_2)| \mathbf{\rho}^{1,2}(dy_1,dy_2)\nonumber\\
     &\le C(t,k,V,\nu,\rho_1,\rho_2,q)\left(\int_{\mb{R}^d\times\mb{R}^d}|\Phi_1(t,y_1,\rho_1)-\Phi(t,y_2,\rho_2)|^p \mathbf{\rho}^{1,2}(dy_1,dy_2) \right)^{1/p},
     \end{align}
    with 
    \begin{align}\label{eq:stability proof bound 5}
        &C(t,k,V,\nu,\rho_1,\rho_2,q)\nonumber\\
        &=C_1(k,V) \left( \lv \rho_1 \rv_{\mc{P}_V}+\lv \rho_2 \rv_{\mc{P}_V} \right)^{\frac{1}{q}}\exp\left( D_1(k,\nu,q) \left(\KL(\rho_1|\pi)+\KL(\rho_2|\pi)\right)^{\frac{1}{2}}t^{\frac{1}{2}} \right) +C_2(k),
    \end{align}
    where 
    \begin{align*}
        C_1(k,V)&= 3C_V^{1/q}\left( \lv k \rv_\infty+{\lv \nabla k \rv_\infty}^{\frac{1}{2}}+{\lv \nabla^2 k \rv_\infty}^{\frac{1}{2}}+{\lv \nabla_1\nabla_2 k \rv_\infty}^{\frac{1}{2}} \right), \\
        D_1(k,\nu,q)&=C_{1,0}\nu^{-1/2}q^{-1}\lv k \rv_\infty,\\
        C_2(k)&=2{\lv \nabla^2(\nabla_1\cdot \nabla_2 k) \rv_\infty}^{\frac{1}{2}}+{\lv \nabla_1\nabla_2(\nabla_1\cdot \nabla_2 k) \rv_\infty}^{\frac{1}{2}}.
    \end{align*}
    Therefore \eqref{eq:Lipschitz stability in Wp bound 1} follows from our estimations on the \textbf{First term} and \textbf{Second term}.
     \end{proof}

\section{Space-time Discretization: A Practical Algorithm}\label{sec:practical}
In this section, we introduce a practical space-time discretization to the R-SVGF described in~\eqref{eq:CF of the regularized PDE}. In the algorithm, we let positive integers $N$ and $n$ to denote the number of particles and (discrete) iterations. We denote by $(X_n^i)_{i=1}^N$  the position of the $N$ particles at the $n$-th step. We let $\bar{X}_n:=[X_n^1,\ldots,X_n^N]^T$. For all functions $f:\mb{R}^d\to \mb{R}^d$, we define the operator $L_n$ as $$L_n f:=[f(X_n^1),\cdots,f(X_n^N)]^T.$$ 
The positions of the particles are then updated as \begin{align}\label{eq:algorithm}
     \bar{X}_{n+1}=\bar{X}_n-h_{n+1}\left( \frac{(1-\nu_{n+1})}{N} K_n +\nu_{n+1} I_N \right)^{-1} \left( \frac{1}{N} K_n (L_n\nabla V) -\frac{1}{N}\sum_{j=1}^N L_n\nabla k(X_n^j,\cdot) \right),
 \end{align}
where $(h_n)_{n=1}^\infty$ is the sequence of step-sizes, $I_{N\times N}$ is the $N \times N$ identity matrix and $K_n \in \mathbb{R}^{N\times N}$ is the gram matrix defined as $(K_n)_{ij}=k(X_n^i,X_n^j)$ for all $i,j\in [N]$. We call the above algorithm as the Regularized SVGD algorithm. The iterates in \eqref{eq:algorithm} follow from \textcolor{black}{Proposition} \ref{lem:regularized SVGD optimal vf} and the finite-sample representations for the operators $\iota_{k,\hat{\rho}^n}\iota_{k,\hat{\rho}^n}$ where $\hat{\rho}^n$ is the empirical measure of the particles at the $n$-th step, i.e., $\hat{\rho}^n=\sum_{i=1}^N \delta_{X_n^i}$.

While the convergence analysis of space-time discretization of the SVGF (i.e., the SVGD algorithm) and the R-SVGD (i.e., the regularized SVGD algorithm) is an interesting and challenging open question, in this section we demonstrate the improved performance of the regularized SVGD algorithm over the SVGD algorithm in some simulation examples. All experiments were done in MacBook Pro (2021 model). Specifically, we consider the simulation setup in~\cite{liu2016stein}: We let the target $\pi: = (1/3) \pi_1 + (2/3) \pi_2$, where $\pi_1\equiv \text{Normal}(-2, 1)$ and 
$\pi_2 \equiv \text{Normal}(+2, 1)$, and we let the initial distribution to be $\text{Normal}(-10, 1)$. We now focus on numerically computing the expectations of the form $\mathbb{E}_{x\sim \pi} [h_i(x)]$, for three cases, $h_1(x):= x$, $h_2(x):= x^2$ and $h_3(x):= \cos (\omega x +b)$, where $\omega \sim \text{Normal}(0,1)$ and $b\sim \text{Uniform}([0,2\pi])$.  

\begin{figure}[t]
\centering
\includegraphics[scale=0.35]{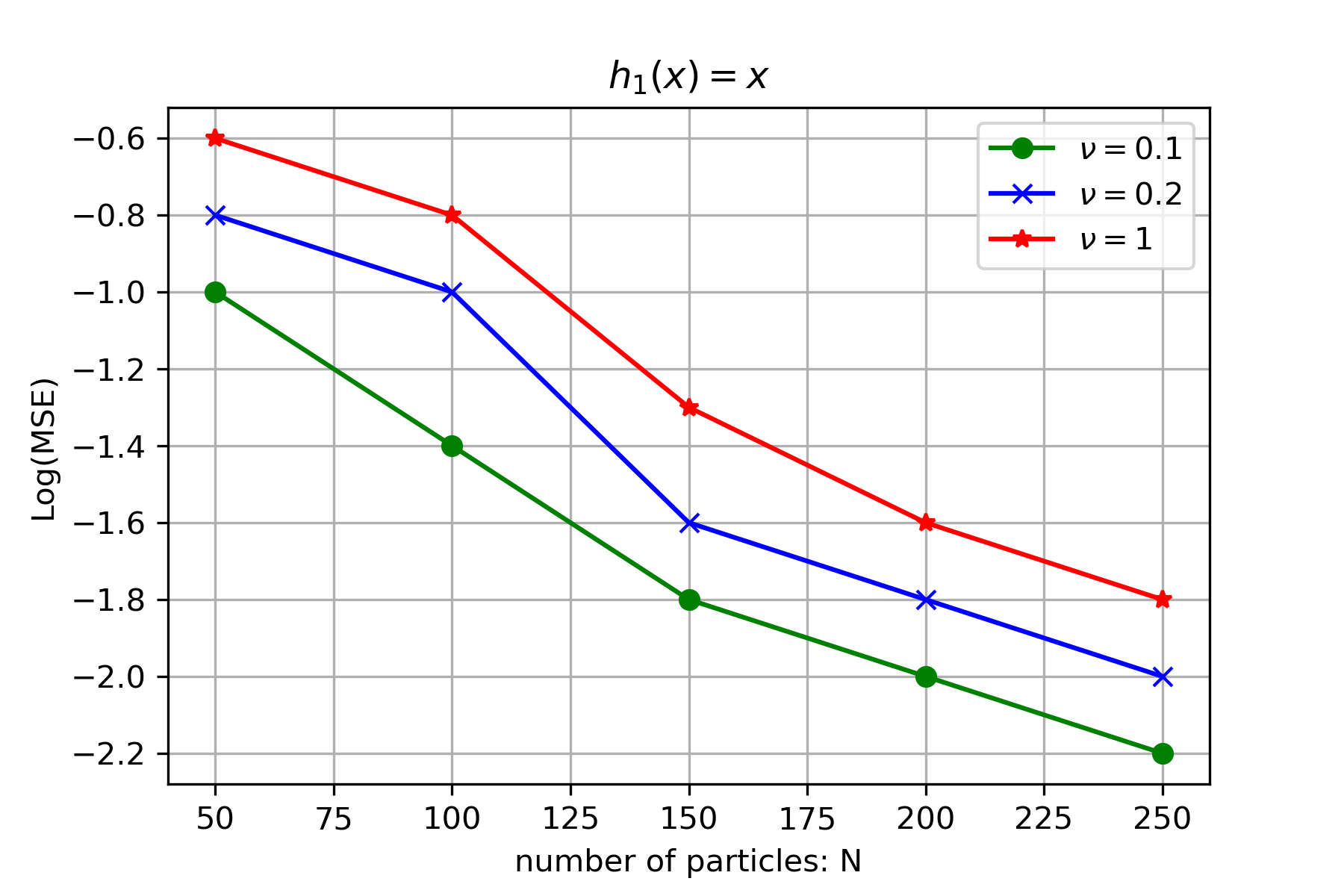}
\includegraphics[scale=0.35]{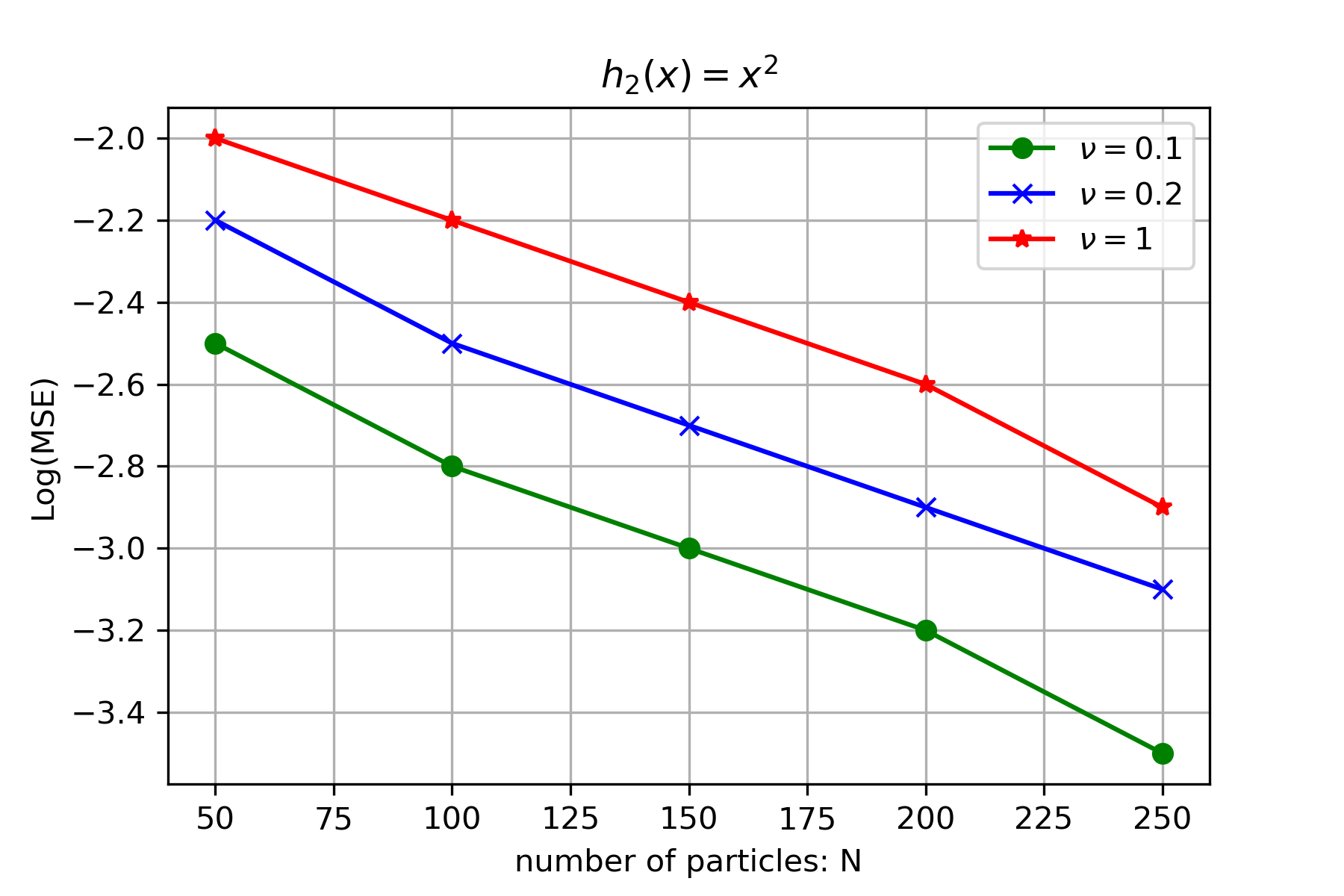}
\includegraphics[scale=0.35]{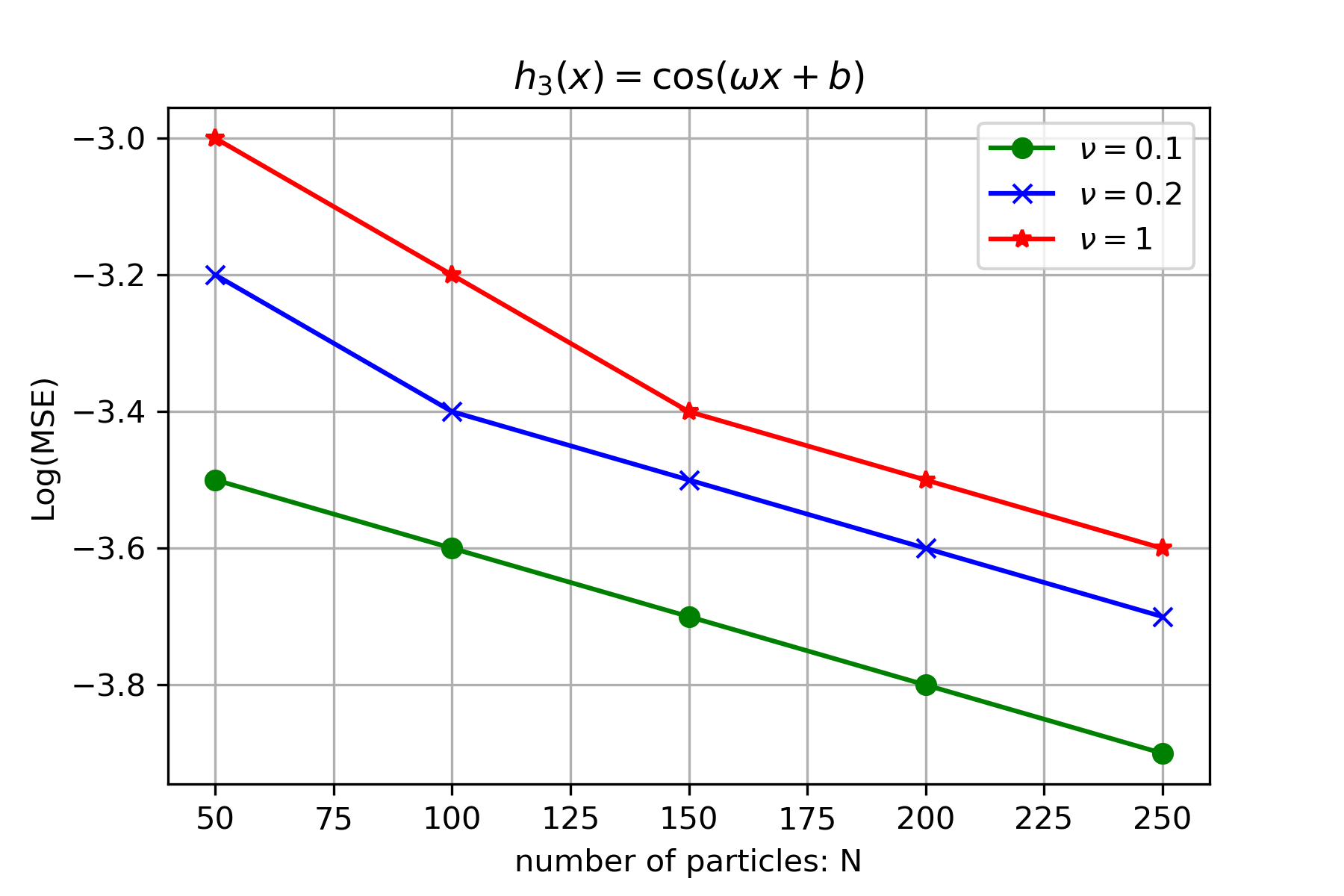}\\
\includegraphics[scale=0.35]{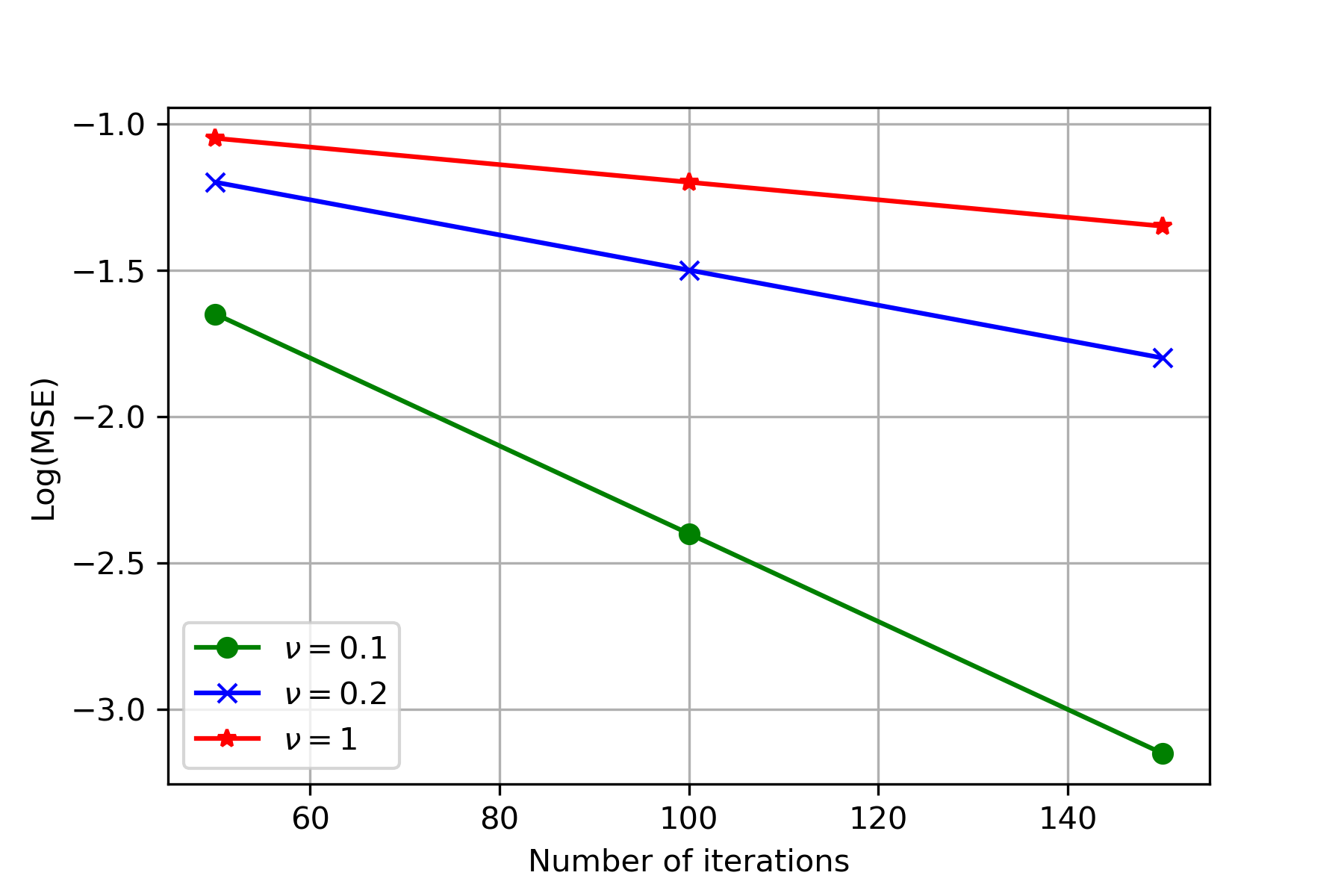}
\includegraphics[scale=0.35]{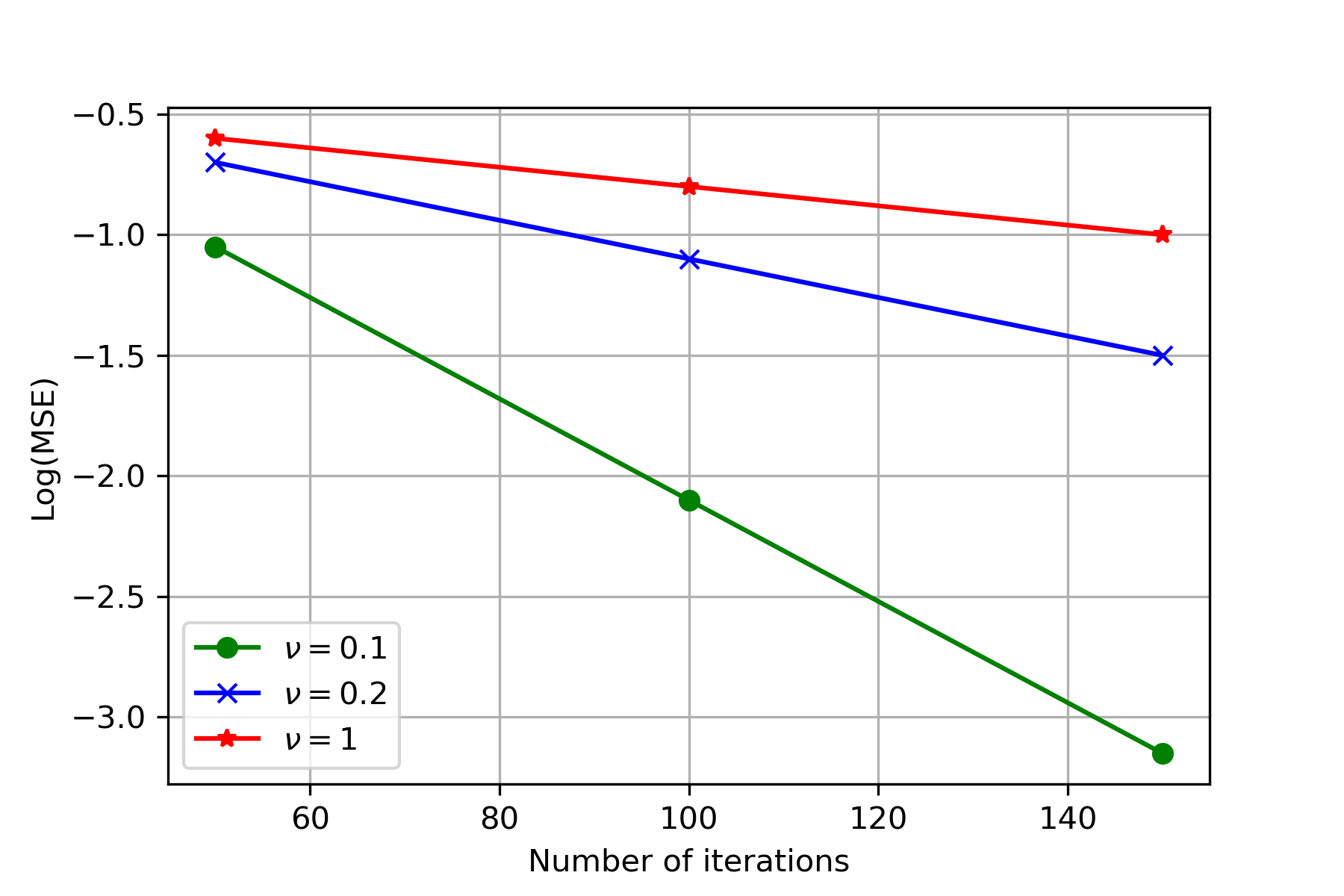}
\includegraphics[scale=0.35]{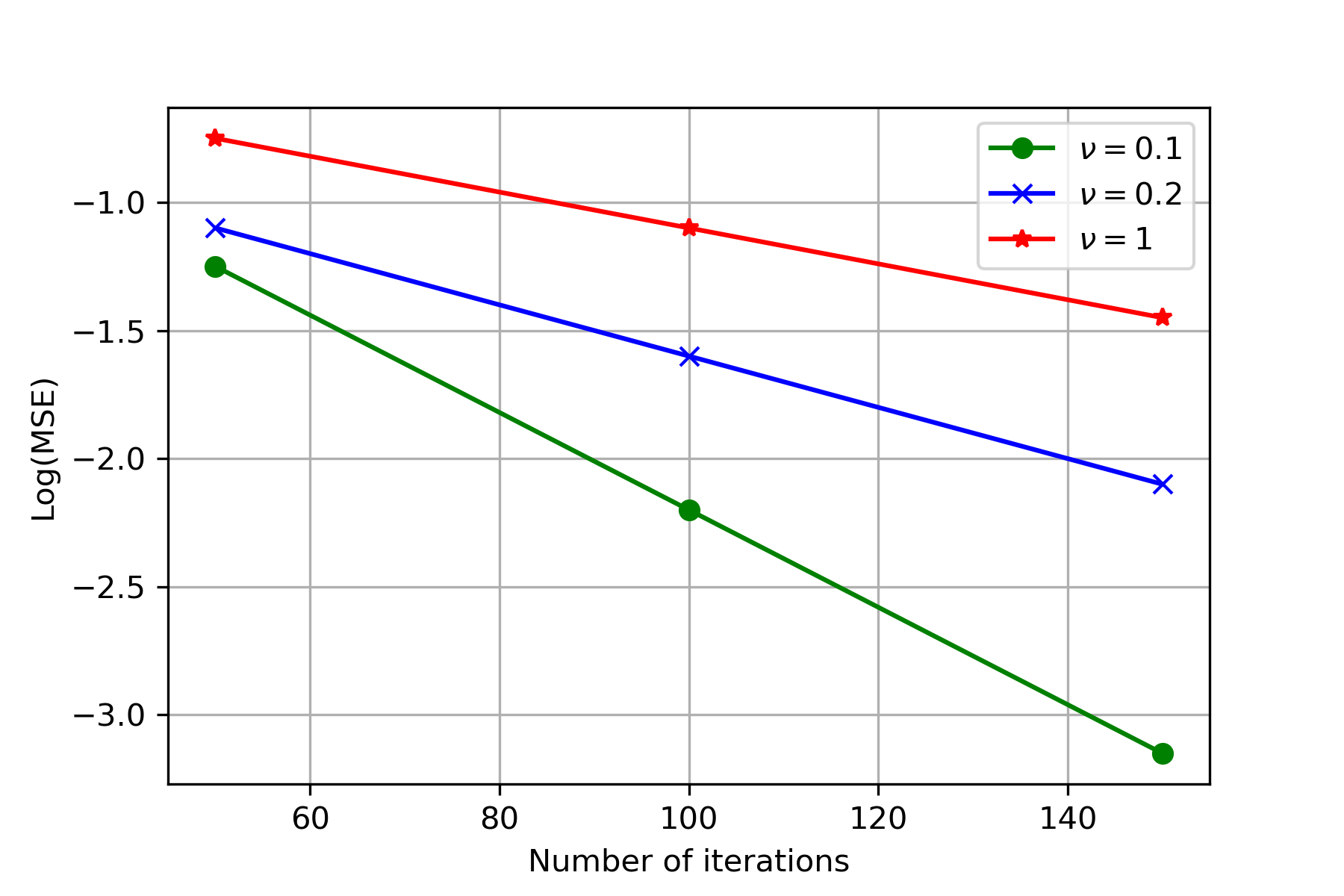}\\
\caption{R-SVGD for various values of the regularization parameter $\nu$. The case of $\nu=1$ corresponds to SVGD. Left, Middle and Right columns correspond respectively to  $h_1(x):= x$, $h_2(x):= x^2$ and $h_3(x):= \cos (\omega x +b)$. Top and bottom rows correspond respectively to log(MSE) versus number of particles and number of iterations.}
\label{fig:sim}
\end{figure}

\begin{wraptable}{r}{9cm}    \centering
    \begin{tabular}{|c|c|c|c|c|c|}
    \hline 
        $N$ & 50  & 100 & 150&200&250  \\ \hline \hline 
        $\nu=0.1$& 0.0114  & 0.0343  & 0.0861 &0.1442 & 0.2163 \\ \hline
         $\nu=0.2$& 0.0099    &0.0257 & 0.0586 & 0.1058 & 0.1899  \\ \hline
    \end{tabular}
    \caption{Average \emph{additional} per-iteration wall-clock run-time (in seconds) for R-SVGD over SVGD.}
    \label{tab:time}
\end{wraptable}

In Figure~\ref{fig:sim}, we plot the mean-squared error in estimating the above expectations with the regularized and unregularized SVGD algorithm. Here, the expectation is over the initialization (and over $\omega$ and $b$ for $h_3$). In the top row, we report the logarithm of the mean-squared error versus the number of particles $N$ for a fixed number of iterations (set to 100). The corresponding average per-iteration wall-clock run times are reported in Table~\ref{tab:time}. In the bottom row, we report the logarithm of the mean-squared error versus the number of iterations for a fixed number of particles (set to 200). For both algorithms, we use the Gaussian kernel $k(u,v)= \exp \left(-\frac{1}{\gamma} \| u-v\|_2^2\right)$, where the bandwidth parameter $\gamma$ is set using the median heuristic~\cite{liu2016stein}. We use the Adagrad step-size choice for both cases, following~\cite{liu2016stein}. For the choice of the regularization parameter, we report results for various choices of $\nu$. The case of $\nu=1$ corresponds exactly to the SVGD algorithm. We notice that for small values of $\nu$ the regularized SVGD algorithm performs better than the SVGD algorithm.

\textcolor{black}{We now discuss the per-iteration complexity of regularized SVGD and standard SVGD. SVGD requires computing the kernel matrix (which requires $\mathcal{O}(N^2)$ operations per-iterations) and the gradient of the potential (i.e., $\nabla V$). As pointed out in~\cite{liu2016stein} in Bayesian inference problems, typically, computing the gradient of the potential might be the main bottleneck. However, this computation could easily be done in parallel due to its finite-sum structure in Bayesian inference problems. We denote by $\mathcal{O}(\text{Score})$, the time complexity of computing/evaluating the gradient of the potential for a given target density. On top of the above computations, each iteration of regularized SVGD requires inverting a $N \times N$ matrix (or equivalently, solving a positive-definite linear system of equations), which costs $\mathcal{O}(N^3)$ complexity. With standard implementations, the per-iteration complexity regularized SVGD and standard SVGD are hence of order $\max\{\mathcal{O}(N^3), \mathcal{O}(\text{Score})\}$ and $\max\{\mathcal{O}(N^2), \mathcal{O}(\text{Score})\}$, respectively.}

\textcolor{black}{The matrix being inverted in regularized SVGD is the regularized kernel matrix also arising in other problems like kernel ridge regression. Hence, the rich literature on efficiently inverting kernel matrices (under various structural assumptions) could be leveraged in this context. While we leave a detailed study of speeding up the regularized SVGD algorithm as future work, we conclude here two concrete methods for provably speeding-up practical implementations of the regularized SVGD algorithms (under structural assumptions):}
\begin{itemize}
\item \textcolor{black}{\textit{Pre-conditioned CG methods:} State-of-the-art methods on designing pre-conditioners for conjugate gradient methods for kernel ridge regression from~\cite{diaz2023robust} could be leveraged in the context of regularized SVGD. In particular, such results may help reduce the computational complexity of regularized SVGD to $\max\{\mathcal{O}(N^2), \mathcal{O}(\text{Score})\}$, matching that of SVGD.}
\item \textcolor{black}{\textit{Randomized methods}: Randomized algorithms designed in the context of kernel ridge regression, namely Random Fourier Features~\cite{rudi2017generalization} and sketching~\cite{yang2017randomized} could also be leveraged to speed-up practical implementations of regularized SVGD.} 
\end{itemize}

\subsection*{Acknowledgements}
YH was supported in part by NSF TRIPODS grant CCF-1934568. KB was supported in part by NSF grant DMS-2053918. BKS was supported in part by NSF CAREER Award DMS-1945396. JL was supported in part by NSF via award DMS-2012286. Parts of this work were done when YH, KB and JL visited the Simons Institute for the Theory of Computing as a part of the ``Geometric Methods in Optimization and Sampling" program during Fall 2021. 


\appendix
\bibliographystyle{alpha}
\bibliography{citation}
\end{document}